\theoremstyle{plain}
\newtheorem{lemma}{Lemma}
\newtheorem{theorem}{Theorem}
\newtheorem{corollary}{Corollary}
\newtheorem{proposition}{Proposition}
\theoremstyle{definition}
\newtheorem{assumption}{Assumption}
\newtheorem{remark}{Remark}
\newtheorem{fact}{Fact}
\renewcommand{\cref}{\Cref} %
\Crefname{fact}{Fact}{Facts} 
\Crefname{assumption}{Assumption}{Assumptions}
\title{Estimation of Stochastic Optimal Transport Maps}
\author{%
  Sloan Nietert \\
  EPFL \\
  \texttt{sloan.nietert@epfl.ch} \\
  \And
  Ziv Goldfeld \\
  Cornell University \\
  \texttt{goldfeld@cornell.edu} \\
}
\begin{document}

\maketitle

\begin{abstract}
The optimal transport (OT) map is a geometry-driven transformation between high-dimensional probability distributions which underpins a wide range of tasks in statistics, applied probability, and machine learning. 
However, existing statistical theory for OT map estimation is quite restricted, hinging on Brenier's theorem (quadratic cost, absolutely continuous source) to guarantee existence and uniqueness of a deterministic OT map, on which various additional regularity assumptions are imposed to obtain quantitative error bounds. In many real‐world problems these conditions fail or cannot be certified, in which case optimal transportation is possible only via stochastic maps that can split mass.
To broaden the scope of map estimation theory to such settings, this work introduces a novel metric for evaluating the transportation quality of stochastic maps. Under this metric, we develop computationally efficient map estimators with near-optimal finite-sample risk bounds, subject to easy-to-verify minimal assumptions. Our analysis further accommodates common forms of adversarial sample contamination, yielding estimators with robust estimation guarantees. Empirical experiments are provided which validate our theory and demonstrate the utility of the proposed 
framework in settings where  
existing theory fails. These contributions constitute the first general-purpose theory for map estimation, compatible with a wide spectrum of real-world~applications where optimal transport may be intrinsically stochastic.
\end{abstract}

\section{Introduction}
\label{sec:intro}

Optimal transport (OT) is a principled framework for comparing and transforming probability distributions according to the geometry of the underlying metric space \citep{villani2003,santambrogio2010}. Central to OT theory is the transport map, which performs said transformation. For $\cX,\cY \subseteq \R^d$, we say that $T:\cX \to \cY$ is a \emph{transport map} from source distribution $\mu \in \cP(\cX)$ to target $\nu \in \cP(\cY)$ if the pushforward measure $T_\sharp \mu=\mu\circ T^{-1}$ coincides with $\nu$. An optimal transport map $T^\star$, if it exists, is a solution to the \emph{Monge OT problem} from $\mu$ to $\nu$, which reads as follows for the $p$-Wasserstein~cost: 
\begin{equation}
\label{eq:monge-problem}
    \inf_{T:\, T_\sharp \mu = \nu}\E_\mu[\|X - T(X)\|^p]^\frac{1}{p}.
\end{equation}
Monge maps are employed for many applications, including domain adaptation \citep{courty2016optimal,redko2019optimal}, single-cell genomics \citep{schiebinger2019optimal, bunne2023learning}, style transfer \citep{kolkin2019style,mroueh2020}, and generative modeling \citep{zhang2018monge, vesseron2025sample}. An important special case is when $p=2$ and $\mu$ is absolutely continuous with respect to (w.r.t.) the Lebesgue measure; then, Brenier's theorem guarantees the existence of a unique Monge map, often called the \emph{Brenier map}, given as the gradient of a convex potential \citep{brenier1991polar}. More generally, existence of optimal maps is guaranteed if $\mu$ is absolutely continuous and uniqueness holds if further $p > 1$ (see, e.g., Section 2.4 of \citealp{villani2003}).

There is a rich literature on formal guarantees for estimation of Brenier maps \citep{hutter2021minimax,pooladian2021entropic,deb2021rates,manole2024plugin} (see related work). However, all of these works impose stringent regularity assumptions on the density of $\mu$ (e.g., two-sided bounds) and/or the unique Brenier map $T^\star$ (e.g., Lipschitzness and H\"older smoothness). This is because the quality of the estimator is measured by its $L^p(\mu)$ distance from $T^\star$, which inherently requires uniqueness (otherwise, the $L^p(\mu)$ metric is meaningless) and hinges on said regularity assumptions to obtain quantitative error bounds. However, such regularity assumptions are often impossible to verify in practice. Worse yet, many real-world applications violate the conditions of Brenier's theorem, whence deterministic Monge maps may not exist, and optimal transportation strategies require stochasticity. For instance, this is the case in domain adaptation whenever the source distribution lies on a lower-dimensional manifold than the target \citep{courty2016optimal,redko2019optimal}, such as in text-to-image or sketch-to-photo translation. Similarly, single-cell developmental trajectories branch over time, so any measure-preserving map from an early snapshot to a later one must be stochastic \citep{schiebinger2019optimal}. As such scenarios far exceed the account of current OT map estimation theory, this work sets out to close this gap by providing a broadly applicable estimation framework that offers strong recovery guarantees for a breadth of applications. 

\subsection{New Framework for Stochastic OT Map Estimation and Contributions}

The Kantorovich OT problem \citep{kantorovich1942translocation} relaxes that of Monge by allowing stochastic maps. Reparametrizing the standard formulation via couplings in terms of Markov kernels, it~reads~as 
\begin{wrapfigure}{r}{0.32\textwidth}
\vspace{-3mm}
\hspace{1mm}\includegraphics[width=0.3\textwidth]{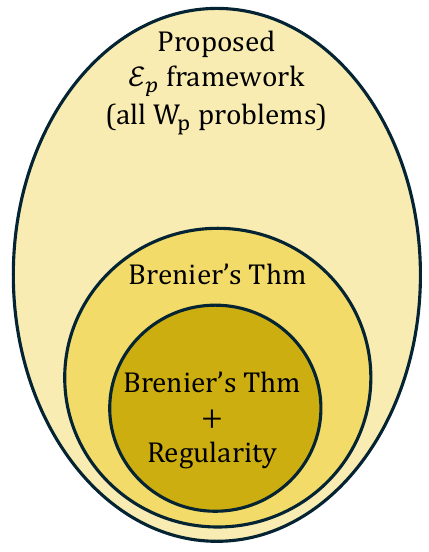}
\vspace{-3mm}
\caption{Previous map estimation theory only accounts for the inner circle, despite many OT map applications lying outside it. The proposed estimation framework under $\cE_p$ covers all possible $\Wp$ problems (subject to tail bounds for quantitative rates).
}\label{fig:Venn}
\vspace{-15mm}
\end{wrapfigure}
\begin{equation}
\label{eq:kernel-problem}
    \Wp(\mu,\nu) = \min_{\substack{\kappa \in \cK(\cX,\cY)\\ \kappa_\sharp \mu = \nu}} \left(\iint \|x - y\|^p \dd \kappa(y|x)  \dd\mu(x)\right)^\frac{1}{p},
\end{equation}
where $\kappa(\cdot|\cdot)$ varies over Markov kernels (regular conditional probability distributions) from $\cX$ to $\cY$ and $\kappa_\sharp \mu$ denotes the pushforward measure $\int \kappa(\cdot|x) \dd \mu(x)$.\footnote{The standard Kantorovich OT problem optimizes the cost over couplings $\pi \in \Pi(\mu,\nu)$, but the disintegration theorem yields that each such coupling can be decomposed as $\dd \pi(x,y) = \dd \mu(x) \dd \pi(y|x)$, where $\pi(\cdot|\cdot)$ is a Markov kernel induced by conditioning on the left argument. When a coupling is induced by a deterministic map $T$, i.e., $\pi = (\mathrm{Id},T)_\sharp\mu$, the corresponding kernel $\kappa$ is given by $\kappa_x =  \delta_{T(x)}$.} We propose a novel framework for stochastic OT map estimation by furnishing a suitable error metric. For source distribution $\mu \in \cP(\cX)$, target distribution $\nu \in \cP(\cY)$, and kernel $\kappa$ from $\cX$ to $\cY$, we define the \emph{transportation error} $\cE_p(\kappa;\mu,\nu)$ of $\kappa$ for the $\Wp(\mu,\nu)$ problem by
\begin{align*}
\underbrace{\biggl[\!\left(\iint\!\|x\!-\!y\|^p \dd \kappa_x(y) \dd \mu(x)\!\right)^\frac{1}{p} \!\!-\, \Wp(\mu,\nu)\biggr]_+}_{\text{optimality gap}} \!+ \underbrace{\vphantom{\biggr]_+}\Wp(\kappa_\sharp \mu,\nu)}_{\text{feasibility gap}},
\end{align*}
where $[c]_+ \!\defeq\! \max\{c,0\}$ and $\kappa_x(\cdot) \defeq \kappa(\cdot|x)$. Under $\cE_p$, the quality of $\kappa$ is thus measured by its transportation cost overhead on top of the optimum $\Wp(\mu,\nu)$ (dubbed \emph{optimality gap}), plus its $p$-Wasserstein gap from matching the target $\nu$ (the \emph{feasibility gap}). While the $\cE_p$ error metric naturally accounts for deterministic OT maps, it does not require uniqueness or even existence thereof. This enables treating OT map estimation settings far beyond those accounted by existing theory, as illustrated in \cref{fig:Venn} to the right. Remarkably, beyond the broad coverage of the proposed framework, quantitative bounds on $\cE_p$ can be derived under minimal and easy-to-verify assumptions, rendering the guarantees applicable in practice.

Our technical contributions build upon a foundation of stability lemmas for $\cE_p$ established in \cref{sec:basic-props}. These characterize how $\cE_p$ responds to TV and Wasserstein perturbations of the input measures and to compositions of the kernel. In \cref{sec:estimation}, we apply these to finite-sample estimation and computation. Here, our strongest result holds when $\nu$ is sub-Gaussian and $\mu$ has bounded $2p$th moments, but assuming no regularity of an optimal kernel. For i.i.d.\ samples $X_1, \dots, X_n \sim \mu$ and $Y_1, \dots, Y_n \sim \nu$, we present a rounding-based estimator $\hat{\kappa}_n$ which achieves
$\E[\cE_p(\hat{\kappa}_n;\mu,\nu)] = \widetilde{O}_{p,d}\bigl(n^{-1/(d+2p)}\bigr)$, with running time $O(n^{2 + o_d(1)})$ dominated by a single, low-accuracy call to an entropic OT solver. We also observe a minimax lower bound of $\Omega(n^{-1/(d \lor 2p)})$, showing that our rate is near-optimal.

In \cref{sec:lipschitz-kernel}, we examine the statistical landscape of estimation when there exists a H\"older continuous optimal kernel, a condition that is still significantly weaker than typical assumptions for Brenier map estimation when $p=2$. In particular, for the case of a Lipschitz optimal kernel $\kappa_\star$, i.e., when $\Wp((\kappa_\star)_x,(\kappa_\star)_{x'}) \lesssim \|x - x'\|$ for all $x,x' \in \cX$, we show that kernel estimation under $\cE_p$ has the same statistical complexity as estimating $\mu$ and $\nu$ under $\Wp$, with rate $\widetilde{O}(n^{-1/(d \lor 2p)})$. For this reduction, we employ an estimator based on Wasserstein distributionally robust optimization.

In \cref{sec:robust-estimation}, we show that effective kernel estimation is possible in the presence of adversarial data contamination. Historically, robust statistics has been well-studied under Huber's $\eps$-contamination model for global outliers, which is subsumed by TV $\eps$-corruptions of the input data \citep{huber64}. More recently, statisticians have examined robust estimation under localized Wasserstein corruptions of the input samples \citep{zhu2019resilience,chao2023statistical,liu2023robust}. We consider a strong corruption model where the clean samples can be corrupted both in TV and under the Wasserstein metric. This combination of local and global corruptions only has only explored recently \citep{nietert2023outlier,nietert2024robust,pittas2024optimal}, and their interaction has required careful analysis.
Here, stability of $\cE_p$ enables us to cleanly decouple the two corruption types. Against an adversary with TV budget $\eps$ and $\Wp$ budget $\rho$, we show that a convolutional estimator achieves error $\sqrt{d}\eps^{1/p} + \sqrt{d}\rho^{1/(p+1)} + O_{p,d}(n^{-1/(d+2p)})$. An accompanying minimax lower bound of $\sqrt{d}\eps^{1/p} + d^{1/4}\rho^{1/2} + n^{-1/(d \lor 2p)}$ implies a separation between robust map estimation under $\cE_p$ and robust distribution estimation under $\Wp$, where one can achieve linear dependence on $\rho$.

In \cref{sec:experiments} , we validate our theory with numerical simulations for two settings with irregular OT maps that are poorly suited for existing theory. These showcase the performance of our rounding estimator and the benefits of $\cE_p$ over $L^p$. Overall, our results constitute a general-purpose theory for (possibly stochastic) OT map estimation, subject to minimal primitive assumptions. As such, it is capable of providing formal performance guarantees in the $\cE_p$ sense in various practically relevant~settings.

\vspace{-2mm}
\paragraph{Related work.} Most related to this paper is a line of statistics work on the minimax sample complexity of Brenier map estimation when $p=2$, initiated by \cite{hutter2021minimax}. Under density assumptions on $\mu$ and smoothness conditions on the unique Brenier map $T^\star$ (in particular, Lipschitzness), they obtain near-optimal risk bounds of the form $\smash{\|\hat{T} - T^\star\|_{L^2(\mu)}} = \widetilde{O}(n^{-1/d})$, using empirical risk minimization for a semi-dual objective.
The myriad of follow-ups include \cite{pooladian2021entropic,deb2021rates,manole2024plugin}, all of which impose density and smoothness assumptions. \cite{pooladian2023minimax} considered the semi-discrete setting where the Brenier map is piecewise constant, employing an estimator based on entropic OT (EOT).
Recently, \cite{balakrishnan2025stability} provided refined guarantees that sidestep the typical density assumptions, but they still rely on the Brenier map being the gradient of a sufficiently regular convex potential. Lastly, a variety of neural map estimators have been developed by the machine learning community \citep{seguy2018mapping,meng2019large,Wang2024neural}, with applications to domain adaptation, style transfer, trajectory estimation, and the like.

Two recent approaches for neural map estimation warrant further discussion. First is the Monge gap regularizer of \cite{uscidda2023monge}. For the $p$-Wasserstein cost, this work proposes training a deterministic map estimator to minimize the objective $\cJ_p(T;\mu,\nu) = \cM_p(T;\mu) + \mathsf{D}(T_\sharp \mu,\nu)$, where $\mathsf{D}$ is a statistical divergence and the \emph{Monge gap} $\cM_p$ is defined by
\begin{equation}
\label{eq:Monge-gap}
\cM_p(T;\mu) \defeq \int \|x - T(x)\|^p \,\dd \mu(x) - \Wp(\mu,T_\sharp \mu)^p.
\end{equation}
They show that $\cM_p \geq 0$ with equality if and only if $T$ is $c$-cyclically monotone over $\supp(\mu)$. Consequently, $\cJ_p$ nullifies exactly when $T$ is optimal for the $\Wp(\mu,\nu)$ problem. The statistical analysis of that work accounts for consistency, under the assumption that a deterministic and continuous optimal map exists. In practice, they suggest taking $\mathsf{D}$ as an EOT cost, estimating $\Wp$ with EOT, and substituting $\mu$ and $\nu$ with their empirical measures. Parameterizing $T$ via a multilayer perceptron, they achieve competitive empirical performance on a range of map estimation tasks. As we will show in \cref{sec:basic-props}, $\cE_p$ and $\cJ_p$ are very connected; in particular, they coincide up to constant factors when $p=1$ and $\Delta = \Wone$. We view $\cE_p$ as better suited for quantitative statistical analysis, enabling rates which seem difficult to prove under $\cJ_p$ for general $p$ (and we are unaware of any existing rates proven under $\cJ_p$). On the other hand, as discussed in \cref{sec:experiments}, we find that $\cJ_p$ is better suited for neural implementation, since its gradients seem to carry a stronger signal when far from optimality.

Lastly, there is an existing line of work on the design of neural estimators for stochastic OT maps \citep{korotin2023kernel, korotin2023neural}. They show that any optimal kernel is the solution of a certain maximin problem, which they approximately solve via a neural net parameterization and stochastic gradient ascent-descent. However, this maximin problem sometimes admits spurious solutions associated with suboptimal maps. In general, these are more empirical works which do not address statistical rates.

\subsection{Preliminaries}
\label{ssec:prelims}

\paragraph{Notation.}
Let $\|\cdot\|$ denote the Euclidean norm on $\R^d$, and $\mathbb{B}^d$ be the $d$-dimensional unit ball. For measurable $S \subseteq \R^d$, write $\cP(S)$ for the space of probability measures over $S$ and $\diam(S)$ for its diameter. Let $\cP_q(S)$ denote those with finite $q$th moments, and write $\cN(x,\Sigma)$ for the multi-variate Gaussian distribution with mean $x \in \R^d$ and covariance $\Sigma \in \R^{d \times d}$. We say that $\mu \in \cP(\R^d)$ is $\sigma^2$-sub-Gaussian if $\E_\mu[\exp(\|X\|^2/\sigma^2)] \leq 2$. Write $\cM(S)$ for the space of finite signed measures on $S$, equipped with the TV norm $\|\nu\|_\tv \defeq \frac{1}{2}|\nu|(S)$, and $\cM^+(S)$ for those which are non-negative. Let $\hat{\mu}_n=\frac{1}{n}\sum_{i=1}^n \delta_{X_i}$ be the empirical measure of $n$ i.i.d. samples $X_1, \dots, X_n$ from $\mu$. We write $a \lor b \defeq \max\{a,b\}$, $a \land b \defeq \min\{a,b\}$, and $\lesssim_x,\gtrsim_x,\asymp_x$ for (in)equalities up to a constant depending only on $x$ (omitting $x$ for absolute constants).

\paragraph{Kernels and their composition.} Writing $\cB(\cY)$ for the Borel subsets of $\cY$, we recall that a Markov kernel $\kappa \in \cK(\cX,\cY)$ is a map $(A,x) : \cB(\cY) \times \cX \mapsto \kappa_x(A) \in [0,1]$ which is measurable in $x$ for fixed $A$ and is a probability measure on $\cY$ for fixed $x$. Consequently, given any $\mu \in \cP(\cX)$, the pushforward measure $\kappa_\sharp\mu(\cdot) \defeq \int \kappa_x(\cdot) \dd \mu(x)$ is well-defined probability measure on $\cY$. Moreover, fixing any intermediate space $\cZ \subseteq \R^d$, kernels $\kappa \in \cK(\cZ,\cY)$ and $\lambda \in \cK(\cX,\cZ)$ can be composed to obtain the composite kernel $\kappa \circ \lambda \in \cK(\cX,\cY)$ defined by $(\kappa \circ \lambda)(A|x) \defeq \int \kappa_z(A) \dd \lambda_x(z)$.

\paragraph{Statistical distances and empirical convergence.}

We often use the following standard results.

\begin{fact}[$\Wp$-TV comparison]
\label{fact:Wp-TV-comparison}
For $\mu,\nu \in \cP(\cX)$, we have $\Wp(\mu,\nu) \leq \diam(\cX)\|\mu - \nu\|_\tv^{1/p}$.
\end{fact}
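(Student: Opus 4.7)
The plan is to exhibit an explicit coupling $\pi \in \Pi(\mu,\nu)$ whose $p$-cost integrates to at most $\diam(\cX)^p \|\mu-\nu\|_\tv$, so that $\Wp(\mu,\nu)^p$ is bounded above by the same quantity and taking $p$-th roots yields the claim.

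First, I would invoke the Hahn--Jordan decomposition applied to the signed measure $\mu - \nu$ to extract the common mass: set $\eta \defeq \mu \wedge \nu$, the largest non-negative measure dominated by both $\mu$ and $\nu$. Writing $\alpha \defeq \|\mu - \nu\|_\tv$, standard identities give $\eta(\cX) = 1 - \alpha$, and the residuals $\mu' \defeq \mu - \eta$ and $\nu' \defeq \nu - \eta$ are mutually singular non-negative measures, each of total mass $\alpha$.

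Next, I would build the coupling in two parts. The ``shared'' part contributes $\pi_0 \defeq (\mathrm{Id},\mathrm{Id})_\sharp \eta$, which is supported on the diagonal of $\cX \times \cX$ and thus carries zero transport cost. The ``unshared'' part is handled by any coupling $\pi_1 \in \Pi(\mu',\nu')$ of the residual marginals; for instance $\pi_1 \defeq \alpha^{-1}\, \mu' \otimes \nu'$ (if $\alpha > 0$). Since $\pi_1$ is supported on $\cX \times \cX$, its cost density $\|x-y\|^p$ is everywhere bounded by $\diam(\cX)^p$, and its total mass is $\alpha$. Setting $\pi \defeq \pi_0 + \pi_1$, the marginals are $\eta + \mu' = \mu$ and $\eta + \nu' = \nu$, so $\pi \in \Pi(\mu,\nu)$, and
\begin{equation*}
\iint \|x-y\|^p \,\dd \pi(x,y) \;=\; 0 \,+\, \iint \|x-y\|^p \,\dd \pi_1(x,y) \;\leq\; \diam(\cX)^p\, \alpha.
\end{equation*}
Taking $p$-th roots and invoking the Kantorovich definition of $\Wp$ yields the desired inequality. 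The argument is essentially routine; the only thing to keep straight is the identity $\|\mu-\nu\|_\tv = (\mu-\nu)_+(\cX) = (\nu-\mu)_+(\cX)$ underlying the decomposition, and the edge case $\alpha = 0$, in which $\mu = \nu$ and the bound is trivial.
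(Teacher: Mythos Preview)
Your proof is correct; this is the standard maximal coupling argument. The paper states this result as a \emph{Fact} without proof (it is a well-known comparison inequality), so there is no proof in the paper to compare against.
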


\begin{lemma}[$\Wp$ empirical convergence, \citealp{lei2020convergence}]
\label{lem:Wp-empirical-convergence}
If $q > p$ and $\mu \in \cP_q(\R^d)$, then $\E[\Wp(\mu,\hat{\mu}_n)] $$\lesssim_{p,q} \E_\mu[\|X\|^q]^{\frac{1}{q}} n^{-\left[\frac{1}{(2p) \lor d} \land \left(\frac{1}{p} - \frac{1}{q}\right)\right]} \log^2(n)$.
If $d > q > 2p$, then $\E[\Wp(\mu,\hat{\mu}_n)] \lesssim_{p,q} \E_\mu[\|X\|^q]^{\frac{1}{q}} n^{-\frac{1}{d}}$.
\end{lemma}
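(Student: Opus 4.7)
This is a classical empirical-convergence bound (originally due to Fournier and Guillin, with the sharp form stated here given by Lei), so the plan is to combine the standard dyadic-partition method on bounded regions with a peeling argument over dyadic annuli to exploit the $q$th-moment assumption on $\mu$.

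First, I would handle the case of $\mu$ supported on $[0,1]^d$ via the multi-resolution bound. Partitioning the cube into dyadic subcubes $\mathcal{D}_k$ at scales $k=0,\dots,K$ with side lengths $\asymp 2^{-k}$, a telescoping transport construction yields
\[
\Wp(\mu,\hat{\mu}_n)^p \lesssim_{d,p} 2^{-pK} + \sum_{k=0}^{K-1} 2^{-pk} \sum_{C \in \mathcal{D}_k} |\mu(C) - \hat{\mu}_n(C)|.
\]
Bounding $\E|\mu(C) - \hat{\mu}_n(C)| \leq \sqrt{\mu(C)/n}$ and applying Cauchy--Schwarz over the $\asymp 2^{dk}$ cubes at level $k$ gives a level-$k$ expected contribution of order $2^{(d/2-p)k} n^{-1/2}$. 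Optimizing $K$ to balance this geometric sum against the discretization error $2^{-pK}$ produces three regimes: top-scale dominance when $d > 2p$ giving $n^{-p/d}$, base-scale dominance when $d < 2p$ giving $n^{-1/2}$, and a logarithm at $d = 2p$. Taking $p$th roots yields the bulk rate $n^{-1/((2p) \lor d)}$ (up to $\log n$ at the boundary).

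Second, to extend to unbounded support I would peel $\R^d$ into annuli $A_j = \{2^{j-1} \leq \|X\| < 2^j\}$ for $j \geq 1$ (with $A_0 = \mathbb{B}^d$), applying the bounded-support bound on each annulus (rescaled by its diameter $2^j$) to the renormalized restriction $\mu_j$ of mass $p_j = \mu(A_j)$. Markov's inequality gives $p_j \lesssim 2^{-jq}\E_\mu[\|X\|^q]$. Gluing the per-shell transports together, and using \cref{fact:Wp-TV-comparison} to absorb the binomial fluctuations $|p_j - \hat{\mu}_n(A_j)|$, yields a bound consisting of a series in $j$ whose terms are of order $2^j (n p_j)^{-1/((2p) \lor d)}$. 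This series converges when $q$ exceeds a threshold depending on $p$ and $d$; otherwise the tail produces the moment-limited term $n^{-(1/p - 1/q)}$, and taking the minimum with the bulk term gives the first claim. The second claim ($d > q > 2p$) then follows because the per-shell rate $n^{-1/d}$ is already slow enough to dominate the tail contribution for every admissible $q$, so no logarithm appears.

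The main obstacle is the interleaved bookkeeping across resolution scales $k$ (within each annulus) and shell indices $j$ (distance from the origin): one must track which regime of $q,p,d$ controls each term, handle the randomness of the per-shell sample counts $n p_j$ (e.g.\ via Poissonization or conditioning on the counts), and truncate the shell sum at the scale beyond which the empirical samples are too sparse for the bounded bound to apply. The $\log^2 n$ factor arises from the boundary regimes $d = 2p$ and $1/p - 1/q = 1/((2p) \lor d)$, together with this truncation.
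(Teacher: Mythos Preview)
The paper does not prove this lemma at all: it is stated as a known result and attributed to \cite{lei2020convergence} (building on Fournier--Guillin), with no proof given anywhere in the text or appendix. Your sketch is the standard dyadic-partition plus annular-peeling argument underlying those references, so there is nothing to compare against; if anything, you have supplied more than the paper does.
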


\section{Basic Properties of the Error Functional}
\label{sec:basic-props}

\begin{figure}[t]
    \centering
    \includegraphics[width=\linewidth]{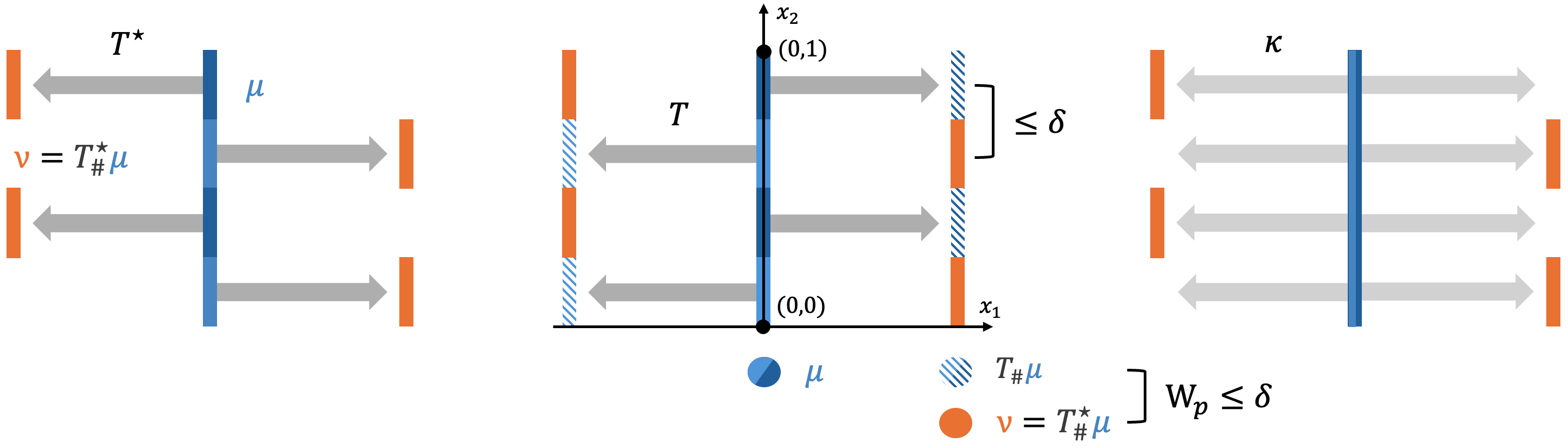}
    \caption{Diagrams of 2 maps and a kernel for $\Wp(\mu,\nu)$, where in each $\mu$ is uniform over the blue line connecting $(0,0)$ and $(0,1)$ and, in orange, $\nu = T^\star_\sharp \mu$ for $T^\star(x) = ((-1)^{\lfloor x_2/\delta \rfloor},x_2)$. We depict $T^\star$ on the left, $\smash{T(x) = (-(-1)^{\lfloor x_2/\delta \rfloor},x_2)}$ in the center, and kernel $\kappa_x = \Unif(\{(-1,x_2),(1,x_2)\})$ on the right. While $T$ and $\kappa$ are far from $T^\star$ in an $L^p$ sense, they achieve $\cE_p \leq \delta$ (indeed, both $T$ and $\kappa$ achieve zero optimality gap and at most $\delta$ feasibility gap). Taking $\delta \to 0$, $T^\star$ becomes impossible to recover from finite samples, whereas $\kappa$ can be estimated effectively.}
    \label{fig:Ep_vs_Lp}
\end{figure}

Before turning to estimation, we establish some fundamental properties of our error functional 
\begin{equation}
\cE_p(\kappa;\mu,\nu)\coloneqq\biggl[\!\left(\iint\!\|x\!-\!y\|^p \dd \kappa_x(y) \dd \mu(x)\!\right)^\frac{1}{p} \!\!-\, \Wp(\mu,\nu)\biggr]_+\!+ \vphantom{\biggr]_+}\Wp(\kappa_\sharp \mu,\nu).\label{eq:error_metric}
\end{equation}
This metric is natural because it vanishes for any optimal kernel, namely, $\cE_p(\kappa;\mu,\nu) = 0$ if and only if $\kappa$ minimizes \eqref{eq:kernel-problem}. Further, it  generalizes the existing $L^p$ benchmark, which applies only when an optimal deterministic map $T^\star$ exists.

\begin{proposition}[Relation to $L^p$ loss]
\label{prop:Ep-Lp-comparison}
For any map $T:\cX \to \cY$ and $T^\star:\cX \to \cY$ minimizing \eqref{eq:monge-problem}, $\cE_p(T;\mu,\nu) \leq 2\|T - T^\star\|_{L^p(\mu)}$.
\end{proposition}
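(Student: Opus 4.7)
The plan is to treat the deterministic map $T$ as the Markov kernel $\kappa_x = \delta_{T(x)}$ and bound each of the two components of $\cE_p$ separately by $\|T - T^\star\|_{L^p(\mu)}$.

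First I would handle the optimality gap. Under the kernel $\kappa_x = \delta_{T(x)}$ the inner integral collapses, so the transport cost becomes $\|X - T(X)\|_{L^p(\mu)}$ with $X \sim \mu$, while the optimality of $T^\star$ gives $\Wp(\mu,\nu) = \|X - T^\star(X)\|_{L^p(\mu)}$. Applying the triangle inequality in $L^p(\mu)$,
\[
\|X - T(X)\|_{L^p(\mu)} \,\le\, \|X - T^\star(X)\|_{L^p(\mu)} + \|T(X) - T^\star(X)\|_{L^p(\mu)},
\]
so the optimality gap is bounded by $\|T - T^\star\|_{L^p(\mu)}$.

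Next I would handle the feasibility gap $\Wp(T_\sharp \mu, \nu) = \Wp(T_\sharp \mu, T^\star_\sharp \mu)$, using that $\nu = T^\star_\sharp \mu$. The pair $(T(X), T^\star(X))$ with $X \sim \mu$ is a valid coupling between $T_\sharp \mu$ and $T^\star_\sharp \mu$, yielding
\[
\Wp(T_\sharp \mu, T^\star_\sharp \mu)^p \,\le\, \E_\mu\!\left[\|T(X) - T^\star(X)\|^p\right] \,=\, \|T - T^\star\|_{L^p(\mu)}^p.
\]
Adding the two bounds and using $[\,\cdot\,]_+ \le |\,\cdot\,|$ gives $\cE_p(T;\mu,\nu) \le 2\|T - T^\star\|_{L^p(\mu)}$.

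There is no real obstacle here — both inequalities are one-liners once the kernel is identified with the map and the synchronous coupling is used for the feasibility term. The only subtlety worth flagging is that the proof does not need $T^\star$ to be unique; any minimizer of \eqref{eq:monge-problem} suffices, since both bounds depend only on $\|T - T^\star\|_{L^p(\mu)}$ and on the identity $\Wp(\mu,\nu) = \|X - T^\star(X)\|_{L^p(\mu)}$ guaranteed by optimality.
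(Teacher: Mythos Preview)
Your proof is correct and matches the paper's own argument essentially line for line: the paper also writes the optimality gap as $[\|T-\Id\|_{L^p(\mu)} - \|T^\star-\Id\|_{L^p(\mu)}]_+$ and bounds it via the $L^p$ triangle inequality, and bounds the feasibility gap $\Wp(T_\sharp\mu,T^\star_\sharp\mu)$ by the synchronous coupling, summing to $2\|T-T^\star\|_{L^p(\mu)}$.
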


See \cref{prf:Ep-Lp-comparison} for the proof. In \cref{fig:Ep_vs_Lp}, we show how $L^p$ can be arbitrarily large compared to $\cE_p$, failing to recognize the performance of map estimates that deviate pointwise from $T^\star$. 

\begin{remark}[Optimality vs. feasibility]
An alternative version of $\cE_p$ weights the feasibility gap by a regularization strength $\lambda$. When $\lambda = 0$, the identity map is optimal, and, as $\lambda \to \infty$, the constant kernel $\kappa(\cdot|x) \defeq \nu$ becomes optimal. Our setting is a natural balance between these two extremes.
\end{remark}

\begin{remark}[Reverse $L^2$ comparison]
\label{rem:reverse-L2-comparison}
If $p=2$ and $T^\star$ is the gradient of an $L$-smooth convex potential, we show in \cref{app:reverse-L2-comparison} that $\|T - T_\star\|_{L^2(\mu)}^2 \lesssim (L \lor 1)\, \cE_2(T;\mu,\nu) \bigl(\Wtwo(\mu,\nu) + \cE_2(T;\mu,\nu)\bigr)$.
\end{remark}

We also compare $\cE_p$ to the Monge gap objective discussed in \cref{sec:intro}. The proof in \cref{prf:monge-gap} essentially follows by the $\Wp$ triangle inequality.

\begin{lemma}[Comparison to Monge gap]
\label{lem:monge-gap}
For the alternative objective $\cE'_p$ defined by
\begin{equation*}
    \cE'_p(\kappa;\mu,\nu)^p \defeq \underbrace{\iint \|x-y\|^p \dd \kappa_x(y) \dd \mu(x) - \Wp(\mu,\kappa_\sharp \mu)^p}_{\text{Monge gap}} + \Wp(T_\sharp \mu,\nu)^p,
\end{equation*}
we have $\cE_p \leq 4\cE'_p$. For $p=1$, we have $\cE'_1/2 \leq \cE_1 \leq 2\cE'_1$.
\end{lemma}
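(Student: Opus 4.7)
}

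The plan is to introduce shorthand for the four quantities that appear in the two objectives and then compare them using only the $\Wp$ triangle inequality and an elementary power inequality. Write $C \defeq \bigl(\iint \|x-y\|^p \dd\kappa_x(y)\dd\mu(x)\bigr)^{1/p}$, $W_1 \defeq \Wp(\mu,\nu)$, $W_2 \defeq \Wp(\mu,\kappa_\sharp\mu)$, and $W_3 \defeq \Wp(\kappa_\sharp\mu,\nu)$. Two observations drive the proof: (i) $C \geq W_2$, since the kernel $\kappa$ induces a transport plan between $\mu$ and $\kappa_\sharp\mu$ of cost $C$, so $W_2$ (the optimum) is no larger; and (ii) the triangle inequality $W_1 \leq W_2 + W_3$ and its consequence $W_2 \geq W_1 - W_3$. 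In this notation, $\cE_p = [C-W_1]_+ + W_3$ and $\cE'_p = (C^p - W_2^p + W_3^p)^{1/p}$, which is well-defined by (i).

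For the direction $\cE_p \leq 4\cE'_p$, I would first note that $[C-W_1]_+ \leq (C - W_2) + W_3$: when $C \leq W_1$ this is immediate, and otherwise $C - W_1 \leq C - W_2 + W_3$ by (ii), with both terms on the right nonnegative. Hence $\cE_p \leq (C - W_2) + 2W_3$. The key technical step is then the elementary inequality $a - b \leq (a^p - b^p)^{1/p}$ for $a \geq b \geq 0$ and $p \geq 1$, which follows from the superadditivity $(a-b)^p + b^p \leq a^p$ of $t \mapsto t^p$ on $[0,\infty)$. Applying this with $a=C$, $b=W_2$ gives $C - W_2 \leq (C^p - W_2^p)^{1/p} \leq \cE'_p$, while $W_3 \leq \cE'_p$ is direct. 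Combining yields $\cE_p \leq 3\cE'_p \leq 4\cE'_p$.

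The case $p=1$ is cleaner because $\cE'_1 = C - W_2 + W_3$ is already affine. The forward bound $\cE_1 \leq 2\cE'_1$ drops straight out of the argument above with $p=1$, since $C - W_2 \leq \cE'_1$ and $W_3 \leq \cE'_1$ give $\cE_1 \leq \cE'_1 + W_3 \leq 2\cE'_1$. For the reverse $\cE'_1 \leq 2\cE_1$, I would split on whether $C \geq W_1$: if yes, then (ii) gives $C - W_2 \leq (C - W_1) + W_3 = [C-W_1]_+ + W_3$, so $\cE'_1 \leq [C-W_1]_+ + 2W_3 \leq 2\cE_1$; if no, then $[C-W_1]_+ = 0$ and $C < W_1 \leq W_2 + W_3$ yields $C - W_2 < W_3$, hence $\cE'_1 < 2W_3 = 2\cE_1$.

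The only nonroutine ingredient is the power inequality $(a-b) \leq (a^p-b^p)^{1/p}$, which is elementary but indispensable for lifting the linear triangle inequality to the $p$-th power form used by $\cE'_p$; everything else is bookkeeping around the $\Wp$ triangle inequality and the optimality of $W_2$ against the feasible cost $C$.
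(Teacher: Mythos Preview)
Your proof is correct and follows essentially the same route as the paper's: both reduce $[C-W_1]_+$ to $(C-W_2)+W_3$ via the $\Wp$ triangle inequality, then invoke the elementary power inequality $a-b \leq (a^p-b^p)^{1/p}$ to pass to $\cE'_p$. Your final combination step (bounding $C-W_2$ and $W_3$ each by $\cE'_p$ to get $3\cE'_p$) is slightly simpler than the paper's use of the $\ell_1\leq 2^{1-1/p}\ell_p$ comparison (yielding $2^{2-1/p}\cE'_p$), and your case split for the $p=1$ reverse direction is harmless but unnecessary since $C-W_2 \leq C-W_1+W_3 \leq [C-W_1]_+ + W_3$ holds unconditionally.
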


We now examine stability of $\cE_p$ with respect to perturbations of the source and target distributions. These stability results form the backbone for the estimation risk analysis of the estimators proposed in the subsequent sections. Their proofs in \cref{app:basic-props} generally follow via the $L^p$ triangle inequality. 

\begin{lemma}[Stability in $\nu$]
\label{lem:nu-stability}
Fix $\mu \in \cP(\cX)$ and $\nu,\nu' \in \cP(\cY)$. For each $\kappa \in \cK(\cX,\cY)$, we have\vspace{-1mm}
\begin{align*}
    \bigl|\cE_p(\kappa;\mu,\nu) - \cE_p(\kappa;\mu,\nu')\bigr| \leq 2\,\Wp(\nu,\nu') \leq 2 \diam(\cY) \|\nu - \nu'\|_\tv^{1/p}.\vspace{-1mm}
\end{align*}
\end{lemma}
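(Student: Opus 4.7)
The plan is to decompose $\cE_p(\kappa;\mu,\cdot)$ into its two constituent gaps and bound each one separately. The key observation is that the transportation cost $C(\kappa;\mu) \defeq \bigl(\iint \|x-y\|^p \dd\kappa_x(y)\dd\mu(x)\bigr)^{1/p}$ is entirely independent of the target distribution, so when we swap $\nu$ for $\nu'$ only the two quantities $\Wp(\mu,\nu)$ (subtracted inside the optimality gap) and $\Wp(\kappa_\sharp \mu,\nu)$ (the feasibility gap) change.

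For the feasibility gap, the $\Wp$ triangle inequality yields
\[
\bigl|\Wp(\kappa_\sharp \mu,\nu) - \Wp(\kappa_\sharp \mu,\nu')\bigr| \;\leq\; \Wp(\nu,\nu').
\]
For the optimality gap, note that the map $a \mapsto [C(\kappa;\mu) - a]_+$ is $1$-Lipschitz (it is the positive part of an affine function with slope $-1$). Applying this Lipschitz property together with the $\Wp$ triangle inequality gives
\[
\bigl|[C(\kappa;\mu) - \Wp(\mu,\nu)]_+ - [C(\kappa;\mu) - \Wp(\mu,\nu')]_+\bigr| \;\leq\; \bigl|\Wp(\mu,\nu) - \Wp(\mu,\nu')\bigr| \;\leq\; \Wp(\nu,\nu').
\]
Summing the two bounds yields the first inequality $|\cE_p(\kappa;\mu,\nu) - \cE_p(\kappa;\mu,\nu')| \leq 2\Wp(\nu,\nu')$. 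The second inequality is then an immediate consequence of the $\Wp$-TV comparison (\cref{fact:Wp-TV-comparison}) applied to $\Wp(\nu,\nu') \leq \diam(\cY)\|\nu-\nu'\|_\tv^{1/p}$.

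There is no real obstacle here; the argument is simply a careful bookkeeping of how each term depends on the target measure, combined with the elementary fact that truncation at zero does not increase Lipschitz constants. The only mild subtlety is recognizing that the inner integral in the optimality gap depends on $\nu$ \emph{only} through $\Wp(\mu,\nu)$, which is what makes the factor of $2$ (rather than something larger) tight.
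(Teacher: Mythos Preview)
Your proof is correct and follows essentially the same approach as the paper: both decompose $\cE_p$ into its optimality and feasibility gaps, use that the transportation cost is independent of $\nu$, and apply the $\Wp$ triangle inequality (with the $1$-Lipschitzness of $[\cdot]_+$, which you make explicit and the paper leaves implicit) to bound each term by $\Wp(\nu,\nu')$, then invoke \cref{fact:Wp-TV-comparison}.
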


The $\cE_p$ metric tolerates $\Wp$ perturbations of $\mu$ when $\kappa$ is appropriately H\"older~continuous.

\begin{lemma}[$\Wp$ stability in $\mu$]
\label{lem:Wp-stability}
Fix $\mu,\mu' \!\in\! \cP(\cX)$, $\nu \!\in\! \cP(\cY)$, and $\kappa \!\in\! \cK(\cX,\cY)$ with $\Wp(\kappa_x, \kappa_{x'}) \!\leq\! L\|x \!-\! x'\|^\alpha$ for all $x,x' \in \cX$, where $0 < \alpha \leq 1$. Setting $\rho = \Wp(\mu',\mu)$, we have
\begin{equation*}
   |\cE_p(\kappa;\mu',\nu) - \cE_p(\kappa;\mu,\nu)| \leq 2\rho + 2L\rho^\alpha.
\end{equation*}
In particular, the above holds with $\alpha = 1$ whenever $\kappa$ is induced by a deterministic, $L$-Lipschitz map.
\end{lemma}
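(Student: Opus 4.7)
}

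The plan is to decompose $\cE_p(\kappa;\mu,\nu)$ into its three constituent pieces --- the transport cost $C(\kappa,\mu) \defeq \bigl(\iint \|x-y\|^p\,\dd\kappa_x(y)\,\dd\mu(x)\bigr)^{1/p}$, the optimum $\Wp(\mu,\nu)$, and the feasibility term $\Wp(\kappa_\sharp\mu,\nu)$ --- and bound the change in each under a $\Wp$ perturbation of $\mu$, then reassemble using $1$-Lipschitzness of $[\,\cdot\,]_+$. Fix an optimal coupling $\pi \in \Pi(\mu,\mu')$ for $\Wp(\mu,\mu')$ and, for each pair $(x,x')$, an optimal coupling $\gamma_{x,x'} \in \Pi(\kappa_x,\kappa_{x'})$ for $\Wp(\kappa_x,\kappa_{x'})$; concatenating these produces a joint law for $(X,X',Y,Y')$ with $Y\sim \kappa_X$, $Y'\sim \kappa_{X'}$.

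\textbf{Feasibility term.} The natural coupling of $\kappa_\sharp\mu$ and $\kappa_\sharp\mu'$ obtained by marginalizing $(Y,Y')$ from the joint above gives
\begin{equation*}
\Wp(\kappa_\sharp\mu,\kappa_\sharp\mu')^p \,\leq\, \int \Wp(\kappa_x,\kappa_{x'})^p\,\dd\pi(x,x') \,\leq\, L^p \int \|x-x'\|^{\alpha p}\,\dd\pi(x,x') \,\leq\, L^p\rho^{\alpha p},
\end{equation*}
where the last step is trivial for $\alpha=1$ and follows from Jensen's inequality applied to the concave map $t \mapsto t^\alpha$ for $\alpha < 1$. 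Then the $\Wp$ triangle inequality yields $|\Wp(\kappa_\sharp\mu,\nu) - \Wp(\kappa_\sharp\mu',\nu)| \leq L\rho^\alpha$.

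\textbf{Cost term.} Applying the pointwise Euclidean triangle inequality $\|X-Y\| \leq \|X-X'\| + \|X'-Y'\| + \|Y'-Y\|$ and then the $L^p$ triangle inequality under the joint law gives
\begin{equation*}
C(\kappa,\mu) \,\leq\, \rho + C(\kappa,\mu') + \bigl(\E[\|Y-Y'\|^p]\bigr)^{1/p} \,\leq\, \rho + C(\kappa,\mu') + L\rho^\alpha,
\end{equation*}
using the feasibility-term computation for the last bound. Swapping the roles of $\mu,\mu'$ yields $|C(\kappa,\mu) - C(\kappa,\mu')| \leq \rho + L\rho^\alpha$. Combined with $|\Wp(\mu,\nu) - \Wp(\mu',\nu)| \leq \rho$ (standard $\Wp$ triangle inequality) and the fact that $[\,\cdot\,]_+$ is $1$-Lipschitz, the optimality gaps differ by at most $2\rho + L\rho^\alpha$. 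Summing with the $L\rho^\alpha$ bound on the feasibility gap change gives the claimed $2\rho + 2L\rho^\alpha$. The ``in particular'' clause for deterministic $L$-Lipschitz $T$ is immediate since $\kappa_x = \delta_{T(x)}$ trivially satisfies $\Wp(\kappa_x,\kappa_{x'}) = \|T(x) - T(x')\| \leq L\|x-x'\|$.

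The main subtlety is the cost-term bound: one must select the couplings in a measurable way to glue $\pi$ and $\{\gamma_{x,x'}\}$ into a single joint distribution (standard via measurable selection / disintegration), after which the argument reduces to two applications of the triangle inequality plus the Hölder pushforward estimate. Everything else is a routine combination of $1$-Lipschitzness of $[\,\cdot\,]_+$ with the Wasserstein triangle inequality.
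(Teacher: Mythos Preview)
Your proposal is correct and matches the paper's own proof essentially line for line: the same joint coupling of $(X,X',Y,Y')$ built from an optimal $\pi\in\Pi(\mu,\mu')$ and fiberwise optimal $\gamma_{x,x'}$, the same Jensen step for $\E[\|Y-Y'\|^p]\le L^p\rho^{\alpha p}$, the same Minkowski/triangle bounds on the cost, optimum, and feasibility pieces, and the same final tally $2\rho+2L\rho^\alpha$. The paper also flags and resolves the measurable-selection issue you note, so there is nothing to add.
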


We can treat TV perturbations of the source measure when $\cY$ is bounded.

\begin{lemma}[TV stability in $\mu$]
\label{lem:TV-stability}
Fix $\mu,\mu' \in \cP(\cX)$, $\nu \in \cP(\cY)$, and kernel $\kappa \in \cK(\cX,\cY)$. Setting $\eps = \|\mu - \mu'\|_\tv$, we have\vspace{-1mm}
\begin{equation*}
    \cE_p(\kappa;\mu',\nu) \lesssim  \cE_p(\kappa;\mu,\nu) + \cE_p(\kappa;\mu,\nu)^\frac{1}{p} \Wp(\mu,\nu)^{\frac{p-1}{p}} + \diam(\cY)\eps^{1/p}.
\end{equation*}
In particular, we have $\cE_1(\kappa;\mu',\nu) \lesssim  \cE_1(\kappa;\mu,\nu) + \diam(\cY)\eps$.
\end{lemma}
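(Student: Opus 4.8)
The plan is to bound the two summands of $\cE_p(\kappa;\mu',\nu)$ separately. Write $\eps=\|\mu-\mu'\|_\tv$, $\delta=\cE_p(\kappa;\mu,\nu)$, $D=\diam(\cY)$, and $g(x)=\bigl(\int\|x-y\|^p\dd\kappa_x(y)\bigr)^{1/p}$, noting that $g(x)\le\|x-z\|+D$ for every $z\in\cY$ since $\kappa_x\in\cP(\cY)$. Decompose $\mu=\mu_0+\eps a$ and $\mu'=\mu_0+\eps b$ with $\mu_0=\mu\wedge\mu'$ and $a,b\in\cP(\cX)$. For the \emph{feasibility gap}, pushforward by $\kappa$ does not increase TV distance and $\kappa_\sharp\mu,\kappa_\sharp\mu'\in\cP(\cY)$, so \cref{fact:Wp-TV-comparison} gives $\Wp(\kappa_\sharp\mu',\kappa_\sharp\mu)\le D\eps^{1/p}$, whence $\Wp(\kappa_\sharp\mu',\nu)\le\delta+D\eps^{1/p}$ by the triangle inequality and $\Wp(\kappa_\sharp\mu,\nu)\le\delta$.

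The \emph{optimality gap} is the heart of the matter, and I would decompose both competitors along $\mu'=\mu_0+\eps b$. Linearity of the transport cost in the source gives $\iint\|x-y\|^p\dd\kappa_x\dd\mu'=\int g^p\dd\mu_0+\eps\int g^p\dd b$. Since restrictions of optimal transport plans are optimal, splitting the optimal plan for $\mu'\to\nu$ along $\mu_0,\eps b$ yields a decomposition $\nu=\theta+\eps\nu_1$ (with $\theta$ of mass $1-\eps$ and $\nu_1\in\cP(\cY)$) such that $\Wp(\mu',\nu)^p=\Wp(\mu_0,\theta)^p+\eps\Wp(b,\nu_1)^p$. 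For the $b$-block, combining $g(x)\le\|x-z\|+D$ with an optimal coupling of $b$ and $\nu_1$ and Minkowski's inequality gives $\|g\|_{L^p(b)}\le\Wp(b,\nu_1)+D$.

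For the $\mu_0$-block I claim $\bigl(\int g^p\dd\mu_0\bigr)^{1/p}\le\Wp(\mu_0,\theta)+E$ with $E\lesssim_p\delta^{1/p}\Wp(\mu,\nu)^{(p-1)/p}+\delta+D\eps^{1/p}$. Indeed, writing $\bigl(\int g^p\dd\mu_0\bigr)^{1/p}-\Wp(\mu_0,\theta)\le\bigl[\bigl(\int g^p\dd\mu_0\bigr)^{1/p}-\Wp(\mu_0,\kappa_\sharp\mu_0)\bigr]+\Wp(\kappa_\sharp\mu_0,\theta)$, the bracketed term is at most $\bigl(\int g^p\dd\mu_0-\Wp(\mu_0,\kappa_\sharp\mu_0)^p\bigr)^{1/p}$ (using $u^p-v^p\ge(u-v)^p$ for $u\ge v\ge0$), i.e.\ the $p$th root of the Monge gap of $\kappa$ at $\mu_0$; this Monge gap is superadditive under $\mu=\mu_0+\eps a$ (transport plans for $\mu$ can be glued from those for $\mu_0$ and $\eps a$, while the $\kappa$-cost is additive), hence bounded by the Monge gap of $\kappa$ at $\mu$, namely $\iint\|x-y\|^p\dd\kappa_x\dd\mu-\Wp(\mu,\kappa_\sharp\mu)^p\le(\Wp(\mu,\nu)+\delta)^p-[\Wp(\mu,\nu)-\delta]_+^p\lesssim_p\Wp(\mu,\nu)^{p-1}\delta+\delta^p$, using that the optimality and feasibility gaps of $\kappa$ at $\mu$ are each at most $\delta$. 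Finally $\Wp(\kappa_\sharp\mu_0,\theta)$ is controlled by restricting the optimal plan for $\kappa_\sharp\mu\to\nu$ (cost $\le\delta^p$) to the $\kappa_\sharp\mu_0$-part, which yields $\Wp(\kappa_\sharp\mu_0,\nu-\nu_1')\le\delta$ for some mass-$\eps$ submeasure $\nu_1'$ of $\nu$; since $\theta=\nu-\eps\nu_1$ and $\nu-\nu_1'$ are both $\nu$ minus a mass-$\eps$ submeasure, a rescaled \cref{fact:Wp-TV-comparison} gives $\Wp(\theta,\nu-\nu_1')\le D\eps^{1/p}$, so $\Wp(\kappa_\sharp\mu_0,\theta)\le\delta+D\eps^{1/p}$. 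Assembling the blocks, $\bigl(\iint\|x-y\|^p\dd\kappa_x\dd\mu'\bigr)^{1/p}\le\bigl((\Wp(\mu_0,\theta)+E)^p+\eps(\Wp(b,\nu_1)+D)^p\bigr)^{1/p}$, and applying Minkowski's inequality to the two-dimensional vectors $(\Wp(\mu_0,\theta),\eps^{1/p}\Wp(b,\nu_1))$ and $(E,\eps^{1/p}D)$ bounds this by $\Wp(\mu',\nu)+(E^p+\eps D^p)^{1/p}\le\Wp(\mu',\nu)+E+D\eps^{1/p}$. Hence the optimality gap is $\lesssim_p\delta^{1/p}\Wp(\mu,\nu)^{(p-1)/p}+\delta+D\eps^{1/p}$; adding the feasibility bound gives the lemma, and for $p=1$ the middle term collapses to $\delta$.

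The main obstacle is controlling the optimality gap without incurring a stray $\Wp(\mu,\nu)$ or $\diam(\cX)$. The naive bound on $\Wp(\kappa_\sharp\mu_0,\theta)$ fails because $\kappa_\sharp\mu_0$ and $\theta$ are merely two mass-$(1-\eps)$ pieces of the nearly equal measures $\kappa_\sharp\mu$ and $\nu$ and can be far apart, so one must produce a \emph{matched} pair by restricting the feasibility plan; likewise, it is essential to split $\Wp(\mu',\nu)^p$ via restriction-optimality so that the unbounded $\int g^p\dd b$ contribution to the cost is exactly cancelled by the corresponding block of $\Wp(\mu',\nu)^p$. The $\Wp(\mu,\nu)^{(p-1)/p}$ factor is intrinsic, arising from the expansion of $(\Wp(\mu,\nu)+\delta)^p-\Wp(\mu,\nu)^p$ in the Monge-gap estimate.
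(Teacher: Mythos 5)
Your argument is correct and follows essentially the same route as the paper's proof of \cref{lem:TV-stability} (via \cref{lem:TV-stability-refined}): both rely on the common-mass decomposition $\mu=\mu_0+\eps a$, $\mu'=\mu_0+\eps b$, transfer near-optimality of $\kappa$ from $\mu$ to the shared piece $\mu_0$ via superadditivity of the Monge gap, and control the resulting mismatch in target submeasures using that two mass-$(1-\eps)$ submeasures of $\nu$ differ by at most $\eps$ in TV. The only differences are organizational: you prove the bound directly in terms of $\delta=\cE_p(\kappa;\mu,\nu)$ rather than first establishing the paper's intermediate lemma parameterized by $\tau$ and then converting $\tau$ to $\delta$; you decompose $\Wp(\mu',\nu)^p$ by restricting the optimal plan for $\mu'\to\nu$ rather than introducing an explicit optimal kernel $\kappa'$ (these are the same thing in disguise); and your feasibility-gap bound is slightly more direct (TV contraction followed by one triangle inequality) than the paper's block-by-block estimate. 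None of these change the substance of the argument.
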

That is, if $\mu'$ and $\mu$ share substantial mass, and $\kappa$ performs well on $\mu$, then its performance on $\mu'$ under $\cE_p$ cannot be substantially worse. In particular, the proof in \cref{prf:TV-stability} decomposes $\mu = \alpha + \beta$, where $\alpha$ is its shared mass with $\mu'$, and uses that, if $\kappa$ is optimal for the $\Wp(\mu,\nu)$ problem, then it must also be optimal for the $\Wp(\alpha,\kappa_\sharp \alpha)$ sub-problem.

Finally, we consider the behavior of $\cE_p$ when evaluated on composite kernels, which we employ in some of the subsequent kernel estimators.

\begin{lemma}[Kernel composition]
\label{lem:kernel-composition}
Fixing an intermediate space $\cZ \subseteq \R^d$, let $\mu \in \cP(\cX)$, $\nu \in \cP(\cY)$, $\lambda \in \cK(\cX,\cZ)$, and $\kappa \in \cK(\cZ,\cY)$. We then bound
\begin{equation*}
    \bigl|\cE_p(\kappa \circ \lambda; \mu,\nu) - \cE_p(\kappa; \lambda_\sharp \mu,\nu)\bigr| \leq 2\smash{\left(\iint \|z - x\|^p \dd \lambda_x(z) \dd \mu(x)\right)^{\frac{1}{p}}}.\vspace{1mm}
\end{equation*}
In particular, for $\lambda$ induced by a deterministic map $f:\cX \to \cZ$, we have
\begin{equation*}
    \bigl|\cE_p(\kappa \circ f; \mu,\nu) - \cE_p(\kappa; f_\sharp \mu,\nu)\bigr| \leq 2\|f - \Id\|_{L^p(\mu)}.
\end{equation*}
\end{lemma}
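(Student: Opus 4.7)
The plan is to split $\cE_p$ into its optimality and feasibility gap components, show that the feasibility gaps are identical, and bound the discrepancy between the two optimality gaps by $2R$, where $R \defeq \bigl(\iint \|z - x\|^p \dd \lambda_x(z) \dd \mu(x)\bigr)^{1/p}$.

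First, by definition of composition, $(\kappa \circ \lambda)_\sharp \mu = \kappa_\sharp(\lambda_\sharp \mu)$, so the feasibility terms $\Wp((\kappa \circ \lambda)_\sharp \mu, \nu)$ and $\Wp(\kappa_\sharp(\lambda_\sharp \mu), \nu)$ agree exactly and cancel from the difference.

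Next, I would compare the transportation costs. Let $A \defeq \bigl(\iiint \|x-y\|^p \dd \kappa_z(y) \dd \lambda_x(z) \dd \mu(x)\bigr)^{1/p}$ and $B \defeq \bigl(\iiint \|z-y\|^p \dd \kappa_z(y) \dd \lambda_x(z) \dd \mu(x)\bigr)^{1/p}$; unfolding the definition of $\kappa \circ \lambda$ identifies $A$ with the cost in $\cE_p(\kappa \circ \lambda; \mu, \nu)$, and unfolding the pushforward $\lambda_\sharp \mu$ in the outer integral identifies $B$ with the cost in $\cE_p(\kappa; \lambda_\sharp \mu, \nu)$. Treating these as $L^p$ norms of $(x,z,y) \mapsto \|x-y\|$ and $(x,z,y) \mapsto \|z-y\|$ with respect to the joint law $\dd \kappa_z(y) \dd \lambda_x(z) \dd \mu(x)$, the pointwise bound $\|x-y\| \leq \|x-z\| + \|z-y\|$ combined with the $L^p$ triangle inequality yields $|A - B| \leq R$.

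Finally, to compare the Monge values $\Wp(\mu,\nu)$ and $\Wp(\lambda_\sharp \mu,\nu)$, I would note that $\dd \pi(x,z) \defeq \dd \lambda_x(z) \dd \mu(x)$ is a coupling of $\mu$ and $\lambda_\sharp \mu$ with $L^p$ transport cost exactly $R$, so $\Wp(\mu, \lambda_\sharp \mu) \leq R$ and the $\Wp$ triangle inequality gives $|\Wp(\mu,\nu) - \Wp(\lambda_\sharp \mu,\nu)| \leq R$. Since $[\cdot]_+$ is $1$-Lipschitz, combining the two $R$-bounds produces $|[A - \Wp(\mu,\nu)]_+ - [B - \Wp(\lambda_\sharp \mu,\nu)]_+| \leq 2R$, which is the claim. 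The deterministic case follows by substituting $\lambda_x = \delta_{f(x)}$, which collapses $R$ to $\|f - \Id\|_{L^p(\mu)}$. No substantive obstacle is anticipated; the only care needed is applying the $L^p$ triangle inequality against the correct joint measure (rather than any of its marginals) and recognizing $\lambda$ paired with $\mu$ as the witnessing coupling for the $\Wp$ stability step.
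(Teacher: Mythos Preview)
Your proposal is correct and follows essentially the same approach as the paper's proof: both note that the feasibility gaps coincide since $(\kappa \circ \lambda)_\sharp \mu = \kappa_\sharp(\lambda_\sharp \mu)$, bound the difference of transportation costs by $R$ via Minkowski's inequality on the joint law, and bound the difference of the $\Wp$ values by $R$ via the coupling $\dd\lambda_x(z)\dd\mu(x)$ and the $\Wp$ triangle inequality. Your treatment is slightly more explicit in invoking the $1$-Lipschitzness of $[\cdot]_+$ to combine the two $R$-bounds, but the argument is otherwise identical.
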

This can be applied iteratively to analyze the composition of many kernels, peeling off one at a time.

\section{Finite-Sample Estimation and Computation}
\label{sec:estimation}

Under $\cE_p$, we can perform kernel estimation without regularity or existence of a Brenier map. We analyze one estimator based on EOT using $\Wp$ stability (\cref{lem:Wp-stability}) and another based on rounding using TV stability (\cref{lem:TV-stability}). The former is closer to existing estimators, but the latter achieves sharper rates under milder assumptions. Fixing $n \geq 3$, we have i.i.d.\ samples $X_1, \dots, X_n \sim \mu \in \cP(\cX)$ and $Y_1, \dots, Y_n \sim \nu \in \cP(\cY)$, whose empirical measures we denote by $\hat{\mu}_n$ and $\hat{\nu}_n$ respectively. 

\textbf{Entropic kernel estimator.} As a warm-up, we recall the EOT problem, defined for $\tau > 0$~by
\begin{equation}
\label{eq:EOT-primal}
    S_{p,\tau}(\mu,\nu) \defeq \inf_{\pi \in \Pi(\mu,\nu)} \iint \|x-y\|^p \dd \pi(x,y) + \tau \mathsf{D}_{\mathrm{KL}}(\pi \| \mu \otimes \nu),
\end{equation}
where the Kullback-Leibler (KL) divergence is $\mathsf{D}_{\mathrm{KL}}(\mu\|\nu)\coloneqq\int\log \bigl(\frac{d \mu}{d \nu}\bigr)\dd\mu$ if $\mu \ll \nu$ and $+\infty$ otherwise. This objective is strictly convex due to the KL penalty, so a unique optimal coupling~always exists (supposing that the value is finite). Most solvers for EOT use its equivalent dual formulation:
\begin{equation}
\label{eq:EOT-dual}
    S_{p,\tau}(\mu,\nu) = \sup_{\substack{f \in L^1(\mu)\\ g \in L^1(\nu)}} \int f \dd \mu + \int g \,\dd \nu - \tau \iint e^{(f(x) + g(x) - \|x-y\|^p)/\tau} \dd \mu(x)  \dd \nu(y) + \tau.
\end{equation}
The primal and dual structure of EOT is well-studied (see, e.g., \citealp{cuturi2013sinkhorn,genevay2019sample}). In particular, if $\pi_\tau$ minimizes \eqref{eq:EOT-primal}, then there exists maximizers $f_\tau, g_\tau$ for \eqref{eq:EOT-dual}, termed \emph{entropic potentials}, such that the conditional \emph{entropic kernel} $\pi_\tau(\cdot|x)$ can be written as
\begin{equation}
\label{eq:entropic-kernel}
\begin{aligned}
    \dd \pi_\tau(y | x) &= \exp\bigl((f_\tau(x) + g_\tau(y) - \|x-y\|^p)/\tau\bigr)\, \dd \nu(y)\\
    &= \frac{\exp\bigl((g_\tau(y) - \|x-y\|^p)/\tau\bigr)\, \dd \nu(y)}{\int \exp\bigl((g_\tau(y') - \|x-y'\|^p)/\tau\bigr)\, \dd \nu(y')}.
\end{aligned}
\end{equation}
By this result, we may assume that the entropic kernel is defined over all $x \in \R^d$. 
We show that, if $\diam(\cX \cup \cY)$ is bounded, the empirical entropic kernel achieves a vanishing $\cE_p$ error.

\begin{theorem}[Entropic kernel estimator]
\label{thm:entropic-kernel}
Assume $\cX,\cY \subseteq [0,1]^d$ and set $\tau = d^{p/4}n^{-1/(2d \lor 4)} \log n$.
Let $\hat{\pi}_{\tau,n}$ be the optimal coupling for $S_{p,\tau}(\hat{\mu}_n,\hat{\nu}_n)$. Then, the conditional kernel $\hat{\kappa}_n$ defined by $(\hat{\kappa}_n)_x = \pi_{\tau,n}(\cdot|x)$ satisfies $\E[\cE_p(\hat{\kappa}_n;\mu,\nu)] \lesssim_{p,d} n^{-1/(2pd \lor 4p)} \log^2(n)$.
\end{theorem}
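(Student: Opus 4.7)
The plan is to combine a near-tight bound on the empirical error $\cE_p(\hat{\kappa}_n; \hat{\mu}_n, \hat{\nu}_n)$, obtained by an EOT-to-OT cost comparison, with the stability lemmas from \cref{sec:basic-props} to transfer control to the true measures. Chaining \cref{lem:nu-stability} and \cref{lem:Wp-stability} gives
\begin{equation*}
\cE_p(\hat{\kappa}_n; \mu, \nu) \leq \cE_p(\hat{\kappa}_n; \hat{\mu}_n, \hat{\nu}_n) + 2\Wp(\mu, \hat{\mu}_n) + 2 L\, \Wp(\mu, \hat{\mu}_n)^\alpha + 2\Wp(\nu, \hat{\nu}_n),
\end{equation*}
provided I can show the empirical entropic kernel $\hat{\kappa}_n$ is H\"older with exponent $\alpha$ and constant $L$.

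For the empirical term, $\hat{\pi}_{\tau,n}$ is a valid coupling, so $(\hat{\kappa}_n)_\sharp \hat{\mu}_n = \hat{\nu}_n$ and the feasibility gap vanishes. For the optimality gap, any deterministic assignment between the atoms of $\hat{\mu}_n$ and $\hat{\nu}_n$ is a coupling $\pi$ with $\mathsf{D}_{\mathrm{KL}}(\pi \| \hat{\mu}_n \otimes \hat{\nu}_n) = \log n$, so the EOT optimum satisfies $S_{p,\tau}(\hat{\mu}_n, \hat{\nu}_n) \leq \Wp(\hat{\mu}_n, \hat{\nu}_n)^p + \tau \log n$. Dropping the nonnegative KL term in $S_{p,\tau}$ bounds $\iint \|x-y\|^p \dd \hat{\pi}_{\tau,n} \leq \Wp(\hat{\mu}_n, \hat{\nu}_n)^p + \tau \log n$, and subadditivity of $t \mapsto t^{1/p}$ yields $\cE_p(\hat{\kappa}_n; \hat{\mu}_n, \hat{\nu}_n) \leq (\tau \log n)^{1/p}$.

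For the H\"older property of $\hat{\kappa}_n$, I would work from \eqref{eq:entropic-kernel}: shifting $x$ to $x'$ alters the conditional log-density at $y$ by $(\|x'-y\|^p - \|x-y\|^p)/\tau$, of magnitude at most $c_{p,d}\|x-x'\|/\tau$ on $[0,1]^d$. A matching bound on the log-normalizer follows from Lipschitzness of the entropic potential $\hat{f}_\tau$, which is immediate from its log-sum-exp representation in \eqref{eq:EOT-dual}. Combining yields $\|\hat{\kappa}_n(\cdot|x) - \hat{\kappa}_n(\cdot|x')\|_\tv \lesssim_{p,d} \|x-x'\|/\tau$, and \cref{fact:Wp-TV-comparison} upgrades this to $\Wp(\hat{\kappa}_n(\cdot|x), \hat{\kappa}_n(\cdot|x')) \lesssim_{p,d} (\|x-x'\|/\tau)^{1/p}$, giving $\alpha = 1/p$ and $L \asymp_{p,d} \tau^{-1/p}$.

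Assembling, taking expectations, and using Jensen's inequality to pull the $1/p$-exponent outside of $\E[\Wp(\mu,\hat{\mu}_n)^{1/p}]$, I obtain
\begin{equation*}
\E[\cE_p(\hat{\kappa}_n; \mu, \nu)] \lesssim_{p,d} (\tau \log n)^{1/p} + \tau^{-1/p}\, \E[\Wp(\mu, \hat{\mu}_n)]^{1/p} + \E[\Wp(\nu, \hat{\nu}_n)] + \E[\Wp(\mu,\hat{\mu}_n)],
\end{equation*}
after which \cref{lem:Wp-empirical-convergence} controls the empirical-measure terms and the prescribed choice $\tau \asymp d^{p/4} n^{-1/(2d \vee 4)} \log n$ equates the first two dominant terms, producing the claimed rate. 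The hardest step is the H\"older analysis: the generic $\Wp \leq \diam \cdot \|\cdot\|_\tv^{1/p}$ conversion is lossy, so in regimes where $d < 2p$ one likely needs a sharper coupling argument directly on the shared support $\{Y_j\}$ of $\hat{\kappa}_n(\cdot|x)$ and $\hat{\kappa}_n(\cdot|x')$, or a finer use of the smoothness of the entropic potentials, to ensure the $\mu$-stability overhead does not swamp the $(\tau \log n)^{1/p}$ empirical bound.
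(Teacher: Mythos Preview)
Your proposal follows the paper's three-step argument exactly: bound the empirical error $\cE_p(\hat{\kappa}_n;\hat{\mu}_n,\hat{\nu}_n)$ as $O(\tau^{1/p})$, establish $\Wp$-H\"older continuity of $\hat{\kappa}_n$ with exponent $1/p$ and constant $O_d(\tau^{-1/p})$ via the softmax structure and \cref{fact:Wp-TV-comparison}, then chain \cref{lem:nu-stability,lem:Wp-stability} and tune $\tau$. The one substantive difference is your empirical-error step: you plug the optimal permutation coupling (whose KL divergence against $\hat{\mu}_n\otimes\hat{\nu}_n$ is exactly $\log n$) directly into the EOT objective, whereas the paper cites Theorem~1 of \cite{genevay2019sample} to obtain $S_{p,\tau}(\hat{\mu}_n,\hat{\nu}_n) \leq \Wp(\hat{\mu}_n,\hat{\nu}_n)^p + 2\tau d\log(e^2 p d^{p/2-1}\tau^{-1})$. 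Your route is more elementary and self-contained, avoiding an external reference; the paper's route gives a dimension-explicit constant but is otherwise equivalent for the purpose of the $\lesssim_{p,d}$ bound.

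Your closing caveat about the lossy $\Wp\leq\diam\cdot\|\cdot\|_\tv^{1/p}$ conversion is not something the paper refines: the paper uses precisely this conversion (obtaining the H\"older constant $2\sqrt{d}\,\tau^{-1/p}$) and proceeds directly to the stated rate without any sharper coupling argument, so you need not worry about strengthening that step to match the paper.
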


The proof in \cref{prf:entropic-kernel} has three steps. First, we control $\cE_p(\hat{\kappa}_n;\hat{\mu}_n,\hat{\nu}_n) = \widetilde{O}_d(\tau^{1/p})$ using a known bound of \cite{genevay2019sample}. Then, using the support constraint and the softmax form in~\eqref{eq:entropic-kernel}, we bound the TV Lipschitz constant of $\hat{\kappa}_n$ by $O_d(\tau^{-1})$, which implies the kernel is $\Wp$ H\"older continuous with exponent $1/p$ and constant $\smash{O_d(\tau^{-1/p})}$. Finally, we apply \cref{lem:Wp-stability,lem:nu-stability} to bound
\begin{equation*}
    \cE_p(\hat{\kappa}_n;\mu,\nu) \leq  \cE_p(\hat{\kappa}_n;\hat{\mu}_n,\hat{\nu}_n) + O_d((\rho/\tau)^{1/p}) + O(\rho) = \widetilde{O}_d(\tau^{1/p} + (\rho/\tau)^{1/p} + \rho),
\end{equation*}
where $\rho = \Wp(\hat{\mu}_n,\mu) \lor \Wp(\hat{\nu}_n,\nu)$. Applying \cref{lem:Wp-empirical-convergence} to bound $\rho$ and tuning $\tau$ gives the theorem. Note that $\tau$ controls a bias-variance trade-off ($\tau\!\to\!0$ overfits, while $\tau\!\to\!\infty$ blurs out all structure).

To understand the quality of this bound, we compare to Brenier map estimation with $p=2$. Here, the conditional kernel is usually converted into a deterministic map $\smash{\hat{T}_n}$ via barycentric projection, which sends $x$ to the mean of $Y \sim \hat{\pi}_{\tau,n}(\cdot|x)$. Existing work has derived a variety of $L^2$ estimation guarantees for this estimator with respect to the Brenier map $T^\star$. In particular, \cite{pooladian2021entropic} show that $\E\|\hat{T}_n - T^\star\|_{L^2(\mu)} \lesssim_{d} n^{-(\alpha + 1)/[4(d+\alpha + 1)]} \log n$ if $T^\star \in \cC^{\alpha+1}$ for $1 < \alpha \leq 3$ and $\nabla T^\star$ has eigenvalues bounded from above and below. For the sake of comparison, we can take a formal limit as $\alpha \to 0$ (although not covered by their theory) to obtain a rate of $n^{-1/(4d+4)}$, which is always worse than our $n^{-1/(4d \lor 8)}$ rate (which does not require $p=2$ nor the existence of $T^\star$).

\textbf{Improved rounding estimator.} While guarantees for the entropic kernel estimator from \cref{thm:entropic-kernel} are  compelling, we can achieve sharper rates using the following rounding estimator, via an analysis based on TV stability of $\cE_p$. The estimator is specified by an accuracy $\delta \geq 0$, a trimming radius $R > 0$, a partition $\cP$ of $\R^d$, and a collection of centers $C_\cP = \{ c_P \}_{P \in \cP}$ such that each $c_P \in P$. This induces a rounding function $r_\cP : \R^d \to C_\cP$ which, for each $P \in \cP$, maps $x \in P$ to $c_P$. Given empirical measures $\hat{\mu}_n$ and $\hat{\nu}_n$, we proceed as follows:\vspace{-1mm}
\begin{enumerate}[leftmargin=*, itemsep=1mm]
    \item Round $\hat{\mu}_n$ onto $\cP$, taking $\mu'_n = (r_\cP)_\sharp \hat{\mu}_n$
    \item Compute a preliminary kernel $\bar{\kappa}_n \in \cK(C_\cP,\supp(\hat{\nu}_n))$ which pushes $\mu'_n$ onto $\hat{\nu}_n$ and is near-optimal for the $\Wp$ problem, satisfying $\iint \|x-y\|^p \dd \bar{\kappa}_n(y|x) \dd \mu'_n(x) \leq \Wp(\mu'_n,\hat{\nu}_n)^p + \delta$.\vspace{-1mm}
    \item Return kernel $\hat{\kappa}_n = \bar{\kappa}_n \circ r_\cP$, which, given $x \in \R^d$, rounds it to $r_\cP(x)$ before applying $\bar{\kappa}_n$.
\end{enumerate}\vspace{-1mm}

For a simple choice of $\cP$, this procedure achieves low $\cE_p$ error when $\mu$ and $\nu$ are sub-Gaussian. With a more complex partition, we can support $\mu$ with only bounded $2p$th moments.

\begin{theorem}[Rounding estimator]
\label{thm:rounding-estimator}
Let $\mu, \nu$ be $1$-sub-Gaussian, and take $\cP$ as the regular partition of $\R^d$ into cubes of side length $r$. Then, for $R$, $r$, and $\delta$ tuned independently of $\mu$ and $\nu$, we have $\E[\cE_p(\hat{\kappa}_n;\mu,\nu)] = \widetilde{O}_{p,d}\bigl(n^{-1/(d+2p)}\bigr)$. For an alternative, non-uniform partition, this guarantee still holds if the sub-Gaussianity assumption on $\mu$ is relaxed to $\E_\mu[\|X\|^{2p}] \leq 1$. In both cases, computation is dominated by Step 2 which, if implemented via an EOT solver, runs in time $O((C_\infty + d) n^{2 + o_d(1)})$, where $C_\infty = \max_{i,j}\|X_i - Y_j\|$.
\end{theorem}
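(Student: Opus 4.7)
My plan is to peel the error back toward the empirical level by chaining the stability lemmas of Section 2, treating discretization and TV estimation on the finite center set as the two sources of bias and variance.

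First I would apply the kernel composition lemma (Lemma \ref{lem:kernel-composition}) with $\lambda$ induced by the deterministic rounding map $r_\cP$ to obtain
\[
\cE_p(\hat{\kappa}_n; \mu, \nu) \;\leq\; \cE_p(\bar{\kappa}_n; \tilde{\mu}, \nu) \;+\; 2\|r_\cP - \Id\|_{L^p(\mu)},
\]
where $\tilde{\mu} \defeq (r_\cP)_\sharp \mu$. For the regular cube partition of side $r$ restricted to $R\mathbb{B}^d$ (with one extra cell for the exterior), the discretization term is $O(r\sqrt{d})$ plus a tail piece $O((R + r\sqrt{d})\,\Pr(\|X\| > R)^{1/p})$, which is negligible under $1$-sub-Gaussianity once $R \asymp \sqrt{\log n}$.

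Next I would control $\cE_p(\bar{\kappa}_n; \tilde{\mu}, \nu)$ by passing through the empirical measures $\tilde{\mu}_n = \mu'_n$ and $\hat{\nu}_n$. By construction, $(\bar{\kappa}_n)_\sharp \tilde{\mu}_n = \hat{\nu}_n$ and the optimality gap is at most $\delta^{1/p}$ (using $(a+\delta)^{1/p} \leq a^{1/p} + \delta^{1/p}$), so $\cE_p(\bar{\kappa}_n; \tilde{\mu}_n, \hat{\nu}_n) \leq \delta^{1/p}$. Applying Lemma \ref{lem:nu-stability} followed by Lemma \ref{lem:TV-stability} and bounding the effective target diameter by $R = O(\sqrt{\log n})$ (via the sub-Gaussian tail of $\nu$) yields
\[
\cE_p(\bar{\kappa}_n; \tilde{\mu}, \nu) \;\lesssim\; \delta^{1/p} + W_p(\hat{\nu}_n, \nu) + \bigl(\delta^{1/p} + W_p(\hat{\nu}_n, \nu)\bigr)^{1/p} \cdot M^{(p-1)/p} + R\,\|\tilde{\mu} - \tilde{\mu}_n\|_\tv^{1/p},
\]
where $M = O_{p,d}(1)$ absorbs the $W_p(\tilde{\mu}_n, \nu)$ factor for sub-Gaussian measures. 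The crucial observation is that $\tilde{\mu}$ and $\tilde{\mu}_n$ are both supported on the finite center set $C_\cP$ with $|C_\cP| = O((R/r)^d)$, so standard multinomial TV estimation gives $\E \|\tilde{\mu} - \tilde{\mu}_n\|_\tv = O(\sqrt{|C_\cP|/n})$.

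To tune, I would invoke Lemma \ref{lem:Wp-empirical-convergence} to get $\E W_p(\hat{\nu}_n, \nu) = \widetilde{O}(n^{-1/(d \lor 2p)})$ and balance the two dominant terms $r\sqrt{d}$ (discretization) against $R \cdot (|C_\cP|/n)^{1/(2p)} \asymp r^{-d/(2p)} n^{-1/(2p)}$ (TV estimation on discretized space), up to polylog factors. Setting $r \asymp n^{-1/(d+2p)}$ and $\delta \asymp n^{-p/(d+2p)}$ (independent of $\mu,\nu$) makes both terms $\widetilde{O}(n^{-1/(d+2p)})$, and one checks that the remaining contributions from $W_p(\hat{\nu}_n,\nu)$ and the fractional middle term are of lower order. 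For the $\E_\mu[\|X\|^{2p}] \leq 1$ case, I would replace the cubic partition with a dyadic radial one: split $\R^d$ into shells $A_k = \{2^k \leq \|x\| < 2^{k+1}\}$ and tile each with cubes of side $r_k$ chosen so that the per-shell discretization cost (weighted by the Markov tail $\mu(A_k) \lesssim 2^{-2pk}$) and the per-shell contribution to $|C_\cP|$ both remain controlled; this recovers the same rate without any tail assumption on $\mu$ beyond $2p$ moments.

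The runtime assertion follows because Steps 1 and 3 are $O(n)$ operations, while Step 2 is a discrete OT problem on at most $|C_\cP| \leq n$ and $n$ points with target accuracy $\delta = n^{-\Theta(1)}$, solvable by an EOT solver in $O((C_\infty + d)n^{2 + o_d(1)})$ time. The main obstacle I anticipate is the middle fractional-power term of Lemma \ref{lem:TV-stability}, which requires a uniform bound on $W_p(\tilde{\mu}_n, \nu)$ that survives the concentration analysis, together with the careful geometric design of the non-uniform partition in the heavy-tailed regime; verifying both that the resulting $|C_\cP|$ stays $\widetilde{O}(n^{d/(d+2p)})$ and that the weighted $L^p$ rounding error matches the target rate is the most delicate piece of the argument.
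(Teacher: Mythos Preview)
Your overall strategy matches the paper's: peel off $r_\cP$ via Lemma~\ref{lem:kernel-composition}, swap $\nu$ for $\hat\nu_n$ via Lemma~\ref{lem:nu-stability}, pass from the rounded population measure $\tilde\mu$ to the rounded empirical one $\tilde\mu_n$ via TV stability, and control $\|\tilde\mu-\tilde\mu_n\|_\tv$ by multinomial concentration on the finite center set. Your dyadic-shell partition for the heavy-tailed case is likewise essentially the paper's construction.

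The gap is your treatment of the middle fractional term from Lemma~\ref{lem:TV-stability}: with your stated parameters it is \emph{not} of lower order. Applied with source pair $(\tilde\mu_n,\tilde\mu)$ and target $\hat\nu_n$, that term is $\cE_p(\bar\kappa_n;\tilde\mu_n,\hat\nu_n)^{1/p}\,\Wp(\tilde\mu_n,\hat\nu_n)^{(p-1)/p}\asymp\delta^{1/p^2}$ up to polylogs, and with $\delta\asymp n^{-p/(d+2p)}$ this equals $n^{-1/(p(d+2p))}$, strictly slower than the target $n^{-1/(d+2p)}$ whenever $p>1$. If instead you apply the lemma with target $\nu$ (as your displayed bound suggests), the middle term also carries $\Wp(\hat\nu_n,\nu)^{1/p}$, which cannot be repaired by shrinking $\delta$. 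The obstacle you flag---a uniform bound on $\Wp(\tilde\mu_n,\nu)$---is not the issue; that quantity is harmlessly $\widetilde{O}(1)$. The real problem is the extra $1/p$ power on $\cE_p$. The paper sidesteps this by proving a \emph{refined} TV-stability lemma (stated only in the appendix, as the engine behind Lemma~\ref{lem:TV-stability}) that exploits exact feasibility $(\bar\kappa_n)_\sharp\tilde\mu_n=\hat\nu_n$ together with the $\delta$-near-optimality to obtain directly $\cE_p(\bar\kappa_n;\tilde\mu,\hat\nu_n)\lesssim\diam(\supp(\hat\nu_n))\,\|\tilde\mu-\tilde\mu_n\|_\tv^{1/p}+\delta^{1/p}$, with no cross term at all. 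Your route is salvageable if you apply Lemma~\ref{lem:TV-stability} with target $\hat\nu_n$ (so only $\delta$ enters the middle term) and shrink to $\delta\asymp n^{-p^2/(d+2p)}$; since $p^2/(d+2p)=o_d(1)$, this still fits the stated $n^{2+o_d(1)}$ EOT runtime budget.
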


This improved $n^{-1/(d+2p)}$ rate is near-optimal; indeed, when $\mu$ is a point mass, the problem reduces to estimation of $\nu$ under $\Wp$, for which there are existing minimax lower bounds of order $n^{-1/(d \lor 2p)}$ \citep{singh2018minimax} (see Appendix~\ref{ssec:sampling-LB} for full details). We also note that the tail bounds on $\mu$ and $\nu$ can be weakened further under our analysis, but not without worsening the rate.

We sketch the proof when $\mu$ is sub-Gaussian, $\delta = 0$, and $\diam(\cY) \leq 1$; see Appendix~\ref{prf:rounding-estimator} for the full derivation. We first show, for the cubic partition $\cP$ with side length $r$, that $\mu'_n = (r_\cP)_\sharp \hat{\mu}_n$ and $\mu' = (r_\cP)_\sharp \mu$ converge in TV at rate $\sqrt{r^{-d}/n}$. Here, $r^{-d}$ arises as a bound on the number of relevant partition blocks. We then bound
\begin{align*}
    \cE_p(\hat{\kappa}_n;\mu,\nu) &= \cE_p(\bar{\kappa}_n \circ r_\cP;\mu,\nu)\\
    &\leq \cE_p(\bar{\kappa}_n;\mu',\nu) + \sqrt{d}r \tag{\cref{lem:kernel-composition}}\\
    &\lesssim \Wp(\nu,\hat{\nu}_n) + (nr^d)^{-\frac{1}{2p}} + \sqrt{d}r \tag{\cref{lem:TV-stability,lem:nu-stability}}
\end{align*}
Applying \cref{lem:Wp-empirical-convergence} and tuning $r$ gives the theorem. The general sub-Gaussian case follows by a similar argument. If $\mu$ only has bounded $2p$th moments, we can still achieve TV convergence at a comparable rate by employing a partition whose bins increase in size away from the origin.

\begin{remark}[One-dimensional refinements]
\label{rem:1d-refinements}
Given the gap between our $n^{-1/(d + 2p)}$ upper bound and the $n^{-1/(d \lor 2p)}$ lower bound of \cite{singh2018minimax}, it is natural to ask if the lower bound can be improved. At least when $d=1$, this is impossible. In \cref{app:one-dimension}, we improve the rate from \cref{thm:rounding-estimator} to $n^{-1/2}$ when $d=1$, using stability of $\cE_p$ under the Kolmogorov-Smirnov metric. %
\end{remark}

\section{Improved Statistical Guarantees with H\"older Continuous Optimal Kernels}
\label{sec:lipschitz-kernel}

For $p=2$, many existing works assume the existence of a Brenier map $T^\star$ whose gradient has eigenvalues bounded from above and below (in particular, such a $T^\star$ is Lipschitz). For example, \cite{balakrishnan2025stability} show that a nearest-neighbor estimator achieves $\E\|\hat{T}_n - T^\star\|_{L^2(\mu)} = \smash{\widetilde{O}(n^{-1/(d \lor 4)})}$, matching the $\Wtwo$ empirical convergence rate. Of course, by \cref{prop:Ep-Lp-comparison}, this guarantee also holds under $\cE_2$. In this section, we treat general $p \geq 1$ under the related but distinctly weaker assumption that $\Wp(\mu,\nu)$ admits an optimal kernel which is H\"older continuous under $\Wp$.

\begin{assumption}
\label{assumption:Lipschitz-kernel}
There exists $L \geq 1$, $\alpha \in (0,1]$, and an optimal kernel $\kappa^\star \in \cK(\cX,\cY)$ for $\Wp(\mu,\nu)$ such that $\Wp(\kappa^\star_x, \kappa^\star_{x'}) \leq L\|x - x'\|^{\alpha}$ for all $x,x' \in \cX$.
\end{assumption}

Here, we propose an estimator based on Wasserstein distributionally robust optimization~(WDRO):
\begin{equation*}
    \hat{\kappa}^\rho_{\mathrm{DRO}}[\hat{\mu},\hat{\nu}] \defeq \argmin_{\kappa \in \cK(\R^d,\cY)} \sup_{\substack{\mu' \in \cP(\R^d):\, \Wp(\mu',\hat{\mu}) \leq \rho}} \cE_p(\kappa;\mu',\hat{\nu}),
\end{equation*}
where $\hat{\mu},\hat{\nu}$ are any proxies for $\mu,\nu$ (potentially their empirical measures). 
This estimator is computationally inefficient but allows us to better understand the statistical limits of estimation.

\begin{theorem}[WDRO estimator]
\label{thm:lipschitz-kernel-estimation}
Under Assumption~\ref{assumption:Lipschitz-kernel}, suppose $\Wp(\hat{\mu},\mu) \leq \rho_\mu$ and $\Wp(\hat{\nu},\nu) \leq \rho_\nu$. Then the estimate $\hat{\kappa} = \hat{\kappa}_{\mathrm{DRO}}^{2\rho_\mu}[\hat{\mu},\hat{\nu}]$ achieves $\cE_p(\hat{\kappa};\mu,\nu) \lesssim L \rho_\mu^{\alpha} + \rho_\mu + \rho_\nu$. 
\end{theorem}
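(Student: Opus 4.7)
The plan is to chain the stability lemmas of \cref{sec:basic-props} with the distributional robustness of $\hat\kappa$, using $\kappa^\star$ from \cref{assumption:Lipschitz-kernel} as a yardstick competitor. The key observation is that, by choosing the DRO radius as $2\rho_\mu$ (rather than $\rho_\mu$), the true source $\mu$ is guaranteed to lie in the ambiguity ball, and every other $\mu'$ in the ball is at $\Wp$ distance at most $3\rho_\mu$ from $\mu$ by triangle inequality.

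First I would apply \cref{lem:nu-stability} to swap $\nu$ for $\hat\nu$, obtaining
\begin{equation*}
    \cE_p(\hat\kappa;\mu,\nu) \leq \cE_p(\hat\kappa;\mu,\hat\nu) + 2\rho_\nu.
\end{equation*}
Since $\Wp(\mu,\hat\mu) \leq \rho_\mu \leq 2\rho_\mu$, the measure $\mu$ is feasible for the outer supremum that defines $\hat\kappa_{\mathrm{DRO}}^{2\rho_\mu}[\hat\mu,\hat\nu]$, so
\begin{equation*}
    \cE_p(\hat\kappa;\mu,\hat\nu) \leq \sup_{\Wp(\mu',\hat\mu)\leq 2\rho_\mu}\cE_p(\hat\kappa;\mu',\hat\nu).
\end{equation*}
By the $\argmin$ property of $\hat\kappa$, the right-hand side is upper bounded by the same supremum with $\hat\kappa$ replaced by any competitor $\kappa \in \cK(\R^d,\cY)$; I take $\kappa$ to be an extension of $\kappa^\star$ to all of $\R^d$ preserving the $\Wp$ Hölder constant up to a universal factor.

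To finish, fix any $\mu'$ in the ambiguity ball and apply \cref{lem:nu-stability,lem:Wp-stability} together with $\Wp(\mu',\mu)\leq 3\rho_\mu$:
\begin{align*}
    \cE_p(\kappa^\star;\mu',\hat\nu) &\leq \cE_p(\kappa^\star;\mu',\nu) + 2\rho_\nu \\
    &\leq \cE_p(\kappa^\star;\mu,\nu) + 2(3\rho_\mu) + 2L(3\rho_\mu)^\alpha + 2\rho_\nu.
\end{align*}
Since $\kappa^\star$ is optimal for $\Wp(\mu,\nu)$ the leading term vanishes, and concatenating the chain of inequalities produces the claimed bound $\cE_p(\hat\kappa;\mu,\nu)\lesssim L\rho_\mu^\alpha + \rho_\mu + \rho_\nu$.

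The main obstacle is the apparently innocuous extension of $\kappa^\star$ from $\cX$ to $\R^d$: the competitor class $\cK(\R^d,\cY)$ demands that $\kappa^\star$ be defined on all of $\R^d$, yet $\Wp$ Hölder extensions of metric-space-valued maps are not automatic (Kirszbraun-type theorems are delicate for $\Wp$ targets). When $\cX$ is closed convex, precomposing with the $1$-Lipschitz nearest-point projection $\pi_\cX:\R^d\to\cX$ yields a valid extension $\tilde\kappa^\star = \kappa^\star\circ\pi_\cX$ with the same Hölder constant; in general one can invoke a McShane-type extension at the cost of an absolute multiplicative constant on $L$, which is absorbed by the $\lesssim$ in the statement. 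A secondary minor point is existence of the $\argmin$, which can be sidestepped by taking a $\varepsilon$-minimizer and sending $\varepsilon\to 0$.
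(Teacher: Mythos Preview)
Your proof is correct and follows essentially the same chain as the paper: $\nu$-stability to swap to $\hat\nu$, feasibility of $\mu$ in the DRO ball, optimality of $\hat\kappa$ to replace it by $\kappa^\star$, the $\Wp$ triangle inequality to recenter the ball at $\mu$, and $\Wp$-stability of $\kappa^\star$ (together with $\cE_p(\kappa^\star;\mu,\nu)=0$) to conclude. The extension issue you flag is real and is in fact glossed over in the paper, which simply uses $\kappa_\star$ as a competitor in $\cK(\R^d,\cY)$ without comment; your fix $\kappa^\star\circ\pi_\cX$ for closed convex $\cX$ is the natural one, though the general ``McShane-type'' claim for $\Wp$-valued Hölder maps is not standard and would require its own argument.
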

\begin{proof}
Using the WDRO problem structure and our stability lemmas, we bound
\begin{align*}
    \cE_p(\hat{\kappa};\mu,\nu) &\leq \cE_p(\hat{\kappa};\mu,\hat{\nu}) + 2\rho_\nu \tag{\cref{lem:nu-stability}}\\
    &\leq \sup_{\substack{\mu' :\, \Wp(\mu',\hat{\mu}) \leq \rho_\mu}} \cE_p(\hat{\kappa};\mu',\hat{\nu}) + 2\rho_\nu \tag{$\Wp(\mu,\hat{\mu}) \leq \rho_\mu$}\\
    &\leq \sup_{\substack{\mu' :\, \Wp(\mu',\hat{\mu}) \leq \rho_\mu}} \cE_p(\kappa_\star;\mu',\hat{\nu}) + 2\rho_\nu \tag{optimality of $\hat{\kappa}$}\\
    &\leq \sup_{\substack{\mu' \in \cP(\R^d) :\, \Wp(\mu',\mu) \leq 2\rho_\mu}} \cE_p(\kappa_\star;\mu',\hat{\nu}) + 2\rho_\nu \tag{$\Wp$ triangle inequality}\\
    &\leq 4L\rho_\mu^\alpha + 4 \rho_\mu + 2 \rho_\nu,\tag{\cref{lem:Wp-stability}}
\end{align*}
as desired.
\end{proof}

This gives an information-theoretic reduction from kernel estimation under $\cE_p$ to estimation of $\mu$ and $\nu$ under $\Wp$, i.e., if we can estimate $\mu$ up to error $\rho_\mu$ and $\nu$ up to error $\rho_\nu$, then we can find a kernel with error $O(L\rho_\mu^\alpha + \rho_\mu + \rho_\nu)$. Focusing on the Lipschitz case with $\alpha = 1$, we first apply \cref{thm:lipschitz-kernel-estimation} using the plug-in estimators $\mu = \hat{\mu}_n$, $\nu = \hat{\nu}_n$.

\begin{corollary}[Plug-in estimators]
\label{cor:plug-in-estimators-Lipschitz}
Under Assumption~\ref{assumption:Lipschitz-kernel} with $\alpha = 1$ and $L=O(1)$, suppose $\mu, \nu \in \cP_{2p}(\R^d)$. Then taking $\hat{\mu} = \hat{\mu}_n$ and $\hat{\nu} = \hat{\nu}_n$, $\rho$ can be tuned to achieve $\cE_p(\hat{\kappa}_{\mathrm{DRO}}^\rho;\mu,\nu) = \widetilde{O}_{p,d}(n^{-1/(d \lor 2p)})$ with probability 0.9, which is minimax optimal up to logarithmic~factors.
\end{corollary}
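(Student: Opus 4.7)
The plan is to combine \cref{thm:lipschitz-kernel-estimation} with the empirical $\Wp$ convergence rates from \cref{lem:Wp-empirical-convergence}, converting expectation bounds to a high-probability bound via Markov's inequality, and then invoke the minimax lower bound alluded to after \cref{thm:rounding-estimator} for optimality.

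First, I would apply \cref{lem:Wp-empirical-convergence} with $q = 2p$ to $\mu$ and $\nu$, both of which lie in $\cP_{2p}(\R^d)$. Since $1/p - 1/q = 1/(2p)$, the exponent $\frac{1}{(2p) \lor d} \land \bigl(\frac{1}{p} - \frac{1}{q}\bigr)$ simplifies to $\frac{1}{d \lor 2p}$, yielding
\begin{equation*}
    \E[\Wp(\mu,\hat{\mu}_n)] \lesssim_{p,d} n^{-1/(d \lor 2p)} \log^2(n), \qquad \E[\Wp(\nu,\hat{\nu}_n)] \lesssim_{p,d} n^{-1/(d \lor 2p)} \log^2(n),
\end{equation*}
where the hidden constants depend on the fixed moment bounds $\E_\mu[\|X\|^{2p}]$ and $\E_\nu[\|Y\|^{2p}]$. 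By Markov's inequality, there exists a constant $C = C_{p,d}$ such that, with probability at least $0.95$, $\Wp(\hat{\mu}_n,\mu) \leq \rho \defeq C n^{-1/(d \lor 2p)} \log^2(n)$, and similarly $\Wp(\hat{\nu}_n,\nu) \leq \rho$. A union bound gives that both hold simultaneously with probability at least $0.9$.

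Next, I would invoke \cref{thm:lipschitz-kernel-estimation} with $\hat{\mu} = \hat{\mu}_n$, $\hat{\nu} = \hat{\nu}_n$, $\rho_\mu = \rho_\nu = \rho$, and the tuning parameter in the definition of $\hat{\kappa}_{\mathrm{DRO}}^{2\rho_\mu}$ set to $2\rho$. On the event above, we obtain
\begin{equation*}
    \cE_p(\hat{\kappa}_{\mathrm{DRO}}^{2\rho};\mu,\nu) \lesssim L \rho^\alpha + \rho + \rho \lesssim_{p,d} n^{-1/(d \lor 2p)} \log^2(n),
\end{equation*}
using $\alpha = 1$ and $L = O(1)$. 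This gives the upper bound at confidence $0.9$.

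For the matching minimax lower bound, I would specialize to the case where $\mu$ is a fixed Dirac measure $\delta_{x_0}$ (which trivially satisfies \cref{assumption:Lipschitz-kernel} with $\kappa^\star_{x_0} = \nu$ and any $L$). Then, the transportation error reduces to the feasibility gap $\Wp(\kappa_\sharp \mu, \nu) = \Wp(\kappa_{x_0}, \nu)$, so estimating $\kappa^\star$ under $\cE_p$ reduces to estimating $\nu$ under $\Wp$ from $n$ samples. Standard minimax lower bounds for $\Wp$ density estimation with bounded $2p$th moments (e.g., \citealp{singh2018minimax}, as already cited in the paper) give a matching rate of $\Omega(n^{-1/(d \lor 2p)})$, so the above upper bound is tight up to logarithmic factors.

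I do not expect any substantial obstacle: the core analytic work was done in \cref{thm:lipschitz-kernel-estimation}, and the only care needed is in converting the in-expectation empirical convergence rate to a deterministic bound on $\rho$ that holds with the required probability, which Markov's inequality handles cleanly.
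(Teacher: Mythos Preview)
Your proposal is correct and follows essentially the same approach as the paper: apply \cref{thm:lipschitz-kernel-estimation} with $\hat{\mu}=\hat{\mu}_n$, $\hat{\nu}=\hat{\nu}_n$, use \cref{lem:Wp-empirical-convergence} with $q=2p$ (plus Markov) to bound $\rho_\mu,\rho_\nu$, and obtain the lower bound via the Dirac-source reduction to $\Wp$ distribution estimation exactly as in \cref{ssec:sampling-LB}. One minor wording issue: when $\mu=\delta_{x_0}$, $\cE_p$ does not literally ``reduce to'' the feasibility gap, but it is lower bounded by it, which is all you need (and is precisely what the paper uses).
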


Plugging in $p=2$, we recover the $L^2$ rate of \cite{balakrishnan2025stability}; however, this result holds under our significantly weaker assumption and for general $p \geq 1$.
If further $\mu$ and $\nu$ are compactly supported with smooth densities, we can employ wavelet-based distribution estimators (see, e.g., \cite{weed2019estimation,manole2024plugin}) to attain faster rates.

\begin{corollary}[Wavelet estimators]
\label{cor:wavelet-estimators-Lipschitz}
Under Assumption~\ref{assumption:Lipschitz-kernel} with $\alpha = 1$ and $L=O(1)$, suppose that $\mu, \nu \in \cP([0,1]^d)$ admit Lebesgue densities $f,g \in \cC^s([0,1]^d)$. Then taking $\hat{\mu}$ and $\hat{\nu}$ as appropriate wavelet-based estimators, one can tune $\delta$ to achieve $\cE_p(\hat{\kappa};\mu,\nu) = \widetilde{O}_{p,d}(n^{-[(1+s/p)/(d + s) \land 1/(2p)]})$ with probability 0.9, which is minimax optimal up to logarithmic~factors.
\end{corollary}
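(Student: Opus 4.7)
The plan is to deduce the corollary as a direct instance of Theorem~\ref{thm:lipschitz-kernel-estimation} once appropriate $\Wp$ convergence rates for wavelet density estimators are plugged in. With $\alpha=1$ and $L=O(1)$, Theorem~\ref{thm:lipschitz-kernel-estimation} gives, for any proxies $\hat\mu,\hat\nu$ and any radii satisfying $\Wp(\hat\mu,\mu)\le\rho_\mu$, $\Wp(\hat\nu,\nu)\le\rho_\nu$,
\begin{equation*}
    \cE_p(\hat\kappa_{\mathrm{DRO}}^{2\rho_\mu}[\hat\mu,\hat\nu];\mu,\nu) \lesssim \rho_\mu + \rho_\nu.
\end{equation*}
Thus the entire task reduces to supplying high-probability $\Wp$ rates for the wavelet-based estimators of $\mu$ and $\nu$ under the smoothness hypothesis $f,g\in\cC^s([0,1]^d)$.

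For the second ingredient, I would use a standard wavelet projection density estimator, followed by a positive-part truncation and renormalization to produce a bona fide probability measure $\hat\mu$ on $[0,1]^d$. Classical Besov/$\cC^s$ wavelet theory controls the $L^1$ (and more generally $B^0_{1,1}$-type) error between $\hat\mu$ and $\mu$ at multiresolution scales, and combining this multiscale $L^1$ bound with the dyadic transport estimate $\Wp \lesssim \sum_j 2^{-j}\|\hat\mu - \mu\|_{B^{-1}_{p,\infty},j}$ (as in Weed-Berthet and Manole et al.) yields, for appropriately tuned resolution level, $\Wp(\hat\mu,\mu) = \widetilde O_{p,d}(n^{-r})$ with probability $\ge 0.95$, where $r = (1+s/p)/(d+s) \land 1/(2p)$. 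The two terms reflect the bias from truncating the wavelet expansion versus the concentration-limited variance floor $n^{-1/(2p)}$ familiar from $\Wp$ empirical convergence (Lemma~\ref{lem:Wp-empirical-convergence}). The same bound applies to $\hat\nu$. Selecting $\rho_\mu,\rho_\nu$ as these high-probability bounds and taking a union bound to confidence $0.9$, substitution into the display above delivers the claimed $\cE_p$ rate.

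For the minimax optimality claim, I would reduce to lower bounds for $\Wp$ density estimation. Specializing to $\mu = \delta_{x_0}$ (a degenerate but admissible instance of Assumption~\ref{assumption:Lipschitz-kernel}, satisfied by the constant kernel $\kappa^\star_x \equiv \nu$), one has $\cE_p(\kappa;\mu,\nu) = \Wp(\kappa_{x_0},\nu)$ for any kernel $\kappa$, so the problem collapses verbatim to estimating a $\cC^s$-smooth density on $[0,1]^d$ under $\Wp$. Existing minimax lower bounds for this problem match the rate $n^{-r}$ up to logarithmic factors, proving the optimality statement.

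The main obstacle is really a bookkeeping one, carried entirely in the cited density-estimation literature: one must verify that the truncation/renormalization step yields a valid probability measure while preserving the $\Wp$ bias-variance decomposition, and that the two regime breakpoint lands exactly at $(1+s/p)/(d+s) = 1/(2p)$. Once these are invoked as black boxes, the corollary itself follows from Theorem~\ref{thm:lipschitz-kernel-estimation} in two lines.
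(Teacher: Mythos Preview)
Your proposal is essentially correct and mirrors the paper's own treatment: the upper bound is obtained exactly as you say, by plugging wavelet-based $\Wp$ density estimation rates (from \cite{weed2019estimation,manole2024plugin}) into Theorem~\ref{thm:lipschitz-kernel-estimation} with $\alpha=1$, $L=O(1)$; and for the lower bound the paper likewise invokes the reduction of \cref{ssec:sampling-LB} to $\Wp$ distribution estimation, citing the matching lower bounds of \cite{weed2019estimation}.

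Two small points of care on your lower bound sketch. First, the equality $\cE_p(\kappa;\delta_{x_0},\nu)=\Wp(\kappa_{x_0},\nu)$ is not quite right (the optimality gap term $[\Wp(\delta_{x_0},\kappa_{x_0})-\Wp(\delta_{x_0},\nu)]_+$ need not vanish); what you need, and what the paper uses, is the inequality $\cE_p(\kappa;\mu,\nu)\ge\Wp(\kappa_\sharp\mu,\nu)$ coming from the feasibility gap alone. Second, and more substantively, the choice $\mu=\delta_{x_0}$ does not lie in the class defining the corollary, since $\delta_{x_0}$ has no Lebesgue density, let alone a $\cC^s$ one. The paper's appendix is equally terse here, but to make the argument airtight you should instead fix $\mu$ to be a smooth reference density (e.g., uniform on $[0,1]^d$), observe that $\cE_p(\hat\kappa;\mu,\nu)\ge\Wp(\hat\kappa_\sharp\mu,\nu)$ yields an estimator of $\nu$ from $n$ samples, and then verify that the hard $\nu$ instances from \cite{weed2019estimation} (which are small smooth perturbations of the reference) still satisfy Assumption~\ref{assumption:Lipschitz-kernel} with $L=O(1)$. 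This last verification is the only nontrivial step beyond what either you or the paper spell out.
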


\cite{balakrishnan2025stability} also reduce map estimation to distribution estimation, so they prove a variety of similar guarantees. However, unlike our derivation, their analysis relies crucially on the structure of the Brenier map as the gradient of a sufficiently regular convex potential.

\begin{remark}[Lipschitz regularization]
Wasserstein DRO is known to be closely related to Lipschitz regularization (see, e.g., \citealp{gao2017wasserstein}). So perhaps expectedly, one can show for $p=\alpha=1$ that the guarantees of $\hat{\kappa}_{\DRO}$ are matched by the estimator which minimizes the regularized empirical risk $\kappa \mapsto \cE_1(\kappa;\hat{\mu},\hat{\nu}) + \lambda \Lip(x \mapsto \kappa_x;\Wone)$. For deterministic map estimation, \cite{gonzalez2022gan} considered related neural estimators that enforced Lipschitz constraints on the estimated map. In general, minimizing the unregularized empirical risk $\cE_1$, or, by \cref{lem:monge-gap}, the corresponding Monge gap objective, achieves good rates whenever the obtained minimizer has a small Lipschitz constant (whether this arises due to explicit constraints or implicit optimization bias). This gives a partial explanation for the empirical success of the Monge gap regularizer for neural map estimation.
\end{remark}

\section{Robust Estimation with Adversarial Corruptions}
\label{sec:robust-estimation}

The previous sections allowed us to handle sampling error under $\cE_p$. We now address local and global adversarial perturbations of the data points with minimal technical overhead, thanks to the strong stability properties of $\cE_p$ in both TV and $\Wp$.

\textbf{Formal corruption model and assumptions.}
As discussed in the introduction, TV and $\Wp$ perturbations have historically been studied separately in robust statistics to model outliers and adversarial examples, respectively, with the former dating back to \cite{huber64}. Our work adopts a recent combined model permitting both local and global perturbations of the input data \citep{nietert2023outlier}.
Here, clean i.i.d.\ data from the unknown distributions $\mu \in \cP(\cX)$ and $\nu \in \cP(\cY)$ are first nudged by small local perturbations (namely, in Wasserstein distance with budget $\rho\geq 0$) and then partially overwritten by global outliers (in TV, with allowed fraction $\eps\in[0,1]$).
More precisely, letting $X_1, \dots, X_n\stackrel{\text{i.i.d.}}{\sim}\mu$ and $Y_1, \dots, Y_n\stackrel{\text{i.i.d.}}{\sim}\nu$ denote the clean samples, we observe $\smash{\tilde{X}_1, \dots, \tilde{X}_n \in \cX}$ and $\smash{\tilde{Y}_1, \dots, \tilde{Y}_n \in \cY}$ such that $\frac{1}{n}\sum_{i \in S} \|\tilde{X}_i - X_i\|^p \lor \frac{1}{n}\sum_{i \in T} \|\tilde{X}_i - X_i\|^p \leq \rho^p$ for some $S,T \subseteq [n]$ with $|S|,|T| \geq (1-\eps)n$.

Write $\hat{\mu}_n,\tilde{\mu}_n$ and $\hat{\nu}_n,\tilde{\nu}_n$ for the clean and corrupted empirical measures for $\mu$ and $\nu$, respectively. We further suppose that $\mu$ is 1-sub-Gaussian and $\cY \subseteq [0,1]^d$. These assumptions can be relaxed at the cost of estimation complexity, as in \cref{sec:estimation}. We impose them to focus on the new aspects of adversarial robustness without distractions.

An initial idea is to combine the \cref{sec:estimation} approaches, since the entropic kernel used $\Wp$ stability and the rounding estimator used TV stability. This is viable, however our entropic kernel analysis requires that $\cX \cup \cY$ is bounded. To avoid this, we employ a similar approach to the rounding estimator, but replace deterministic rounding with Gaussian convolution. Defining the kernel $N^\sigma_x = \cN(x,\sigma^2 I_d)$ and letting $\kappa^\star_p[\alpha \to \beta]$ denote (any) optimal kernel for the $\Wp(\alpha,\beta)$ problem, we consider
\begin{align*}
    \hat{\kappa}_{\mathrm{conv}}^{\sigma}[\tilde{\mu}_n,\tilde{\nu}_n] &\defeq \kappa^\star_p[N^\sigma_\sharp \tilde{\mu}_n \to \tilde{\nu}_n] \circ N^\sigma.
\end{align*}
That is, we find an optimal kernel for the convolved $\Wp(N^\sigma_\sharp \tilde{\mu}_n,\tilde{\nu}_n)$ problem and compose it with the convolution kernel. The initial convolution of $\tilde{\mu}_n$ ensures that the inner kernel is defined over all of $\R^d$, potentially outside the support of $\tilde{\mu}_n$. The subsequent composition ensures that the outer kernel is sufficiently continuous, as needed to apply \cref{lem:Wp-stability}. We prove the following in \cref{app:robust-estimation}.

\begin{theorem}[Robust estimation guarantee]
\label{thm:robust-estimation}
Under the setting above, we have
\begin{equation*}
    \E[\cE_p(\hat{\kappa}_{\mathrm{conv}}^{\sigma};\mu,\nu)] \lesssim \sqrt{d}\eps^{\frac{1}{p}} + \sqrt{d}\rho^\frac{1}{p+1} + \rho + O_{p,d}(n^{-\frac{1}{d+2p}}),
\end{equation*}
for tuned $\sigma = \sigma(\rho,d,p)$. Also, the na\"ive estimator $(\hat{\kappa}_{\mathrm{null}})_x \equiv \delta_0$ satisfies $\cE_p(\hat{\kappa}_{\mathrm{null}};\mu,\nu) \leq \sqrt{d}$. By selecting between the two estimators according to which bound is smaller, we achieve an error bound of  $\bigl(\sqrt{d}\eps^{\frac{1}{p}} + \sqrt{d}\rho^\frac{1}{p+1} + \rho + O_{p,d}(n^{-\frac{1}{d+2p}})\bigr) \land \sqrt{d}$. Moreover, up to constants, no estimator can achieve worst-case expected error less than $\bigl(\sqrt{d}\eps^\frac{1}{p} + d^{1/4}\rho^{1/2} + n^{-\frac{1}{d \lor 2p}}\bigr) \land \sqrt{d}$.
\end{theorem}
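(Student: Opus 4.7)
The plan is to handle this theorem in three parts: the upper bound for $\hat\kappa^\sigma_{\mathrm{conv}}$, the bound for $\hat\kappa_{\mathrm{null}}$, and the matching minimax lower bound. For the main upper bound, the guiding observation is that by construction $\bar\kappa \defeq \kappa^\star_p[N^\sigma_\sharp \tilde\mu_n \to \tilde\nu_n]$ satisfies $\cE_p(\bar\kappa; N^\sigma_\sharp \tilde\mu_n, \tilde\nu_n) = 0$, so the task reduces to propagating this anchor to the target instance $(\mu,\nu)$ through the stability lemmas of \cref{sec:basic-props}. First I would apply \cref{lem:kernel-composition} to peel off $N^\sigma$ at cost $O_p(\sigma\sqrt{d})$ (the $p$th moment of a standard Gaussian in $d$ dimensions). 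Then I would apply \cref{lem:nu-stability} to swap $\nu$ for $\tilde\nu_n$, decomposing $\Wp(\nu,\tilde\nu_n)$ through an intermediate $\nu^{\mathrm{mid}}_n$ obtained by reverting the adversary's TV-overwrites on $\nu$: the three legs contribute $\sqrt{d}\eps^{1/p}$ (TV via $\diam(\cY)\leq\sqrt{d}$), $\rho$ (the $\Wp$ budget), and $n^{-1/(d\lor 2p)}$ (\cref{lem:Wp-empirical-convergence} on bounded $\cY$). Finally, since the anchor satisfies $\cE_p=0$, a single application of \cref{lem:TV-stability} on the $\mu$-side reduces everything to bounding the smoothed TV distance $\|N^\sigma_\sharp \mu - N^\sigma_\sharp \tilde\mu_n\|_\tv$ (the cross term vanishes).

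The crucial step will be controlling this smoothed TV quantity sharply. Triangulating through $\hat\mu_n$ and $\mu^{\mathrm{mid}}_n$, the TV-corruption leg is $\leq\eps$ since Gaussian convolution is TV-contractive; the $\Wp$-corruption leg gives $O(\rho/\sigma)$ via the Pinsker-type fact $\|N^\sigma_x-N^\sigma_{x'}\|_\tv \lesssim \|x-x'\|/\sigma$ integrated against an optimal coupling; and the sampling leg needs a KDE-style bound yielding $\E\|N^\sigma_\sharp \hat\mu_n - N^\sigma_\sharp \mu\|_\tv = \widetilde O(\sigma^{-d/2}n^{-1/2})$, obtained by sub-Gaussian truncation to a ball of radius $O(\sqrt{\log n})$ together with Cauchy--Schwarz on the pointwise variance of the smoothed empirical density. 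Combining with the $O(\sigma\sqrt{d})$ composition cost, the three $\sigma$-dependent contributions $\sigma$, $(\rho/\sigma)^{1/p}$, and $(\sigma^{-d/2}n^{-1/2})^{1/p}$ balance at $\sigma \asymp \rho^{1/(p+1)} \lor n^{-1/(d+2p)}$, producing $\sqrt{d}\rho^{1/(p+1)} + \widetilde O_{p,d}(n^{-1/(d+2p)})$; adding back the $\sqrt{d}\eps^{1/p}+\rho$ pieces finishes the bound.

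The null estimator is a direct computation: $\cE_p(\hat\kappa_{\mathrm{null}};\mu,\nu)$ is bounded by $(\E_\mu\|X\|^p)^{1/p} + \Wp(\delta_0,\nu) \lesssim \sqrt{d}$ by 1-sub-Gaussianity of $\mu$ and $\cY\subseteq[0,1]^d$. For the lower bound, I would assemble three separate two-point Le Cam constructions: (a) with $\mu=\delta_0$, pick $\nu_0,\nu_1$ with $\|\nu_0-\nu_1\|_\tv\leq\eps$ and $\Wp(\nu_0,\nu_1)\asymp\sqrt{d}\eps^{1/p}$, reducing to $\Wp$-estimation under TV contamination; (b) a Gaussian-pair construction exchanging second-moment structure within the $\Wp$-budget $\rho$, yielding the $d^{1/4}\rho^{1/2}$ floor as in the robust-mean-estimation literature; (c) the $n^{-1/(d\lor 2p)}$ floor from \cite{singh2018minimax}, by taking $\mu=\delta_0$ so that kernel estimation under $\cE_p$ inherits the $\Wp$ distribution-estimation lower bound for $\nu$. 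The $\land\sqrt{d}$ cap is trivial given the null bound.

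The main obstacle will be the $d^{1/4}\rho^{1/2}$ lower bound. Standard constructions produce this rate via Gaussian pairs whose second moments can be exchanged within a $\Wp$-budget, but adapting them here requires care: the instances must remain indistinguishable under the combined sampling and corruption channels, and one must verify that any candidate kernel genuinely incurs $\cE_p$ error rather than trivializing through a target-side strategy that sidesteps the kernel difficulty. The remaining delicacies---the Pinsker-type TV comparison between Gaussian convolutions and the KDE-type smoothed-empirical TV bound---are standard once set up, so the bulk of the novelty lies in the stability-driven error decomposition and the $\sigma$-tuning balancing the three TV sources.
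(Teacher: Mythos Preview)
Your upper bound argument is correct and is actually a clean simplification of the paper's proof. The paper decomposes $N^\sigma = N^{\sigma_1}\circ N^{\sigma_2}$ and handles the $\Wp$-corruption via \cref{lem:Wp-stability} (using that $\bar\kappa\circ N^{\sigma_2}$ is H\"older continuous by a kernel-smoothing lemma), together with the symmetry of the outlier-robust Wasserstein distance $\RWp$ to reorder the TV and $\Wp$ legs. Your approach sidesteps both devices: you convert the $\Wp$-corruption directly into a TV bound on Gaussian convolutions via $\|N^\sigma_\sharp\alpha - N^\sigma_\sharp\beta\|_\tv \lesssim W_1(\alpha,\beta)/\sigma$, so a single application of \cref{lem:TV-stability} from the anchor (where $\cE_p=0$, hence the cross term indeed vanishes) handles all three $\mu$-side legs at once. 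This is the same underlying inequality---the paper's kernel-smoothing lemma is proved via the same Gaussian-TV comparison---but your organization avoids the two-stage $\sigma$-tuning and never invokes \cref{lem:Wp-stability}. The null-estimator bound and the TV and sampling lower bounds are also fine, though your TV construction corrupts $\nu$ whereas the paper corrupts $\mu$; both work.

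The gap is the $d^{1/4}\rho^{1/2}$ lower bound. Your proposal---a Gaussian pair exchanging second-moment structure ``as in the robust-mean-estimation literature''---does not carry over here. Those constructions yield floors for estimating a scalar functional (mean, covariance), whereas here you must exhibit two $\Wp$-indistinguishable instances such that \emph{every} kernel $\kappa$ incurs large $\cE_p$ on at least one; this requires a genuine tradeoff between optimality gap and feasibility gap, not just distributional indistinguishability. The paper's construction is discrete and quite different: with $\nu=\tfrac12\delta_{-y}+\tfrac12\delta_y$ for $y=(1,\dots,1)$ and a one-parameter family $\mu_t=(\tfrac12-t)\delta_{-cy}+(\tfrac12+t)\delta_{cy}$, one first restricts (via a projection lemma) to kernels supported on $\{\pm y\}$, parameterized by $\alpha_\pm\in[0,1]$, and then computes $\cE_p(\kappa;\mu_0,\nu)+\cE_p(\kappa;\mu_t,\nu)$ explicitly. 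The feasibility gaps combine to $\gtrsim\sqrt{d}\,t^{1/p}|\alpha_--\alpha_+|^{1/p}$ while the $\mu_0$ optimality gap contributes $\gtrsim\sqrt{d}\,c(1-|\alpha_--\alpha_+|)$, forcing the sum $\gtrsim\sqrt{d}\min\{c,t^{1/p}\}$; choosing $c=t^{1/p}\asymp\rho^{1/2}d^{-1/4}$ (so that $\Wp(\mu_0,\mu_t)\leq\rho$) yields $d^{1/4}\rho^{1/2}$. You correctly flagged this as the main obstacle, but the Gaussian-pair direction would not have led you here.
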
\vspace{-1mm}

The upper bound follows by a remarkably straightforward application of our stability lemmas. For the $d^{1/4}\rho^{1/2}$ term in the LB, we construct a pair of instances (with all distributions supported on two points) which are indistinguishable from $\rho$-corrupted samples and such that no kernel achieves error $o(d^{1/4}\rho^{1/2})$ on both. Interestingly, this $\sqrt{\rho}$ dependence rules out a lossless reduction from estimation under $\cE_p$ to distribution estimation under $\Wp$. That is, our rounding estimator from \cref{sec:estimation} achieves $\cE_p = \widetilde{O}(n^{-1/(d+2p)})$ but the guarantee that $\Wp(\hat{\mu}_n,\mu) \lor \Wp(\hat{\nu}_n,\nu) = \widetilde{O}(n^{-1/(d \lor2p)})$ alone cannot imply a rate faster than $\widetilde{O}(n^{-1/(2d \lor 4p)})$. Finally, although the convolved OT problem for our estimator may not be efficiently solvable, we show in \cref{app:robust-estimation} that an additional rounding step enables efficient computation,  mirroring the proof of \cref{thm:rounding-estimator}.\vspace{-1mm}

\section{Experiments}
\label{sec:experiments}
\vspace{-1mm}

To empirically validate our theory, we run experiments in two synthetic settings with OT maps whose irregularities limit the utility of the $L^p$ objective and prevent application of existing theory.
For Setting A, we fix $\mu$ and $\nu$ as uniform discrete measures over $N = 2000$ points, obtained as i.i.d.\ samples from $\Unif(\{0\} \times [0,1]^{d-1})$ and $\frac{1}{2}\Unif(\{-1\} \times [0,1]^{d-1}) + \frac{1}{2}\Unif(\{1\} \times [0,1]^{d-1})$, respectively. 
In the $N \to \infty$ limit, the optimal kernel satisfies $\kappa^\star_{(0,x_{2:d})} = \Unif(\{(-1,x_{2:d}), (1,x_{2:d})\})$. For our discrete $\mu$ and $\nu$, there is an optimal deterministic map $T^\star$ induced by a permutation, but it is highly oscillatory. For Setting B, we set $\mu$ and $\nu$ as discrete distributions over $N$ samples from $\Unif([-1,1]^d)$ and $f_\sharp \Unif([-1,1]^d)$, respectively, where $f(x) = x + (\operatorname{sign}(x_1),\dots, \operatorname{sign}(x_d))$ pushes each orthant of the cube away from the origin. Here, the OT map is discontinuous but Lipschitz within each orthant.

Now, for each setting and sample size $n \in \{10,20,\dots,100\}$, we take $n$ i.i.d.\ samples from $\mu$ and $\nu$ and compute the $p=1$ nearest-neighbor map estimate $\smash{\hat{T}_n^{\mathrm{NN}}}$ \citep{manole2024plugin} and the rounding kernel estimate $\hat{\kappa}_n^{\mathrm{round}}$ (\cref{sec:estimation}). (Specifically, the NN estimator first computes an optimal $\Wone$ map $\bar{T}_n$ from $\hat{\mu}_n$ to $\hat{\nu}_n$. Then, $\hat{T}_n^\mathrm{NN}$ maps each $x \in \R^d$ to the image of its nearest source point under $\bar{T}_n$.) We then compute the $L^1$ error $\|\hat{T}_n^{\mathrm{NN}} - T^\star\|_{L^1(\mu)}$ and the $\cE_1$ errors $\smash{\cE_1(\hat{T}_n^{\mathrm{NN}};\mu,\nu), \cE_1(\hat{\kappa}_n^{\mathrm{round}};\mu,\nu)}$. Since $\mu$ and $\nu$ are discrete, these can be computed using finite sums and the default Python Optimal Transport solver \citep{flamary2021pot}. Repeating this process for $K=100$ iterations, we compute mean errors for each sample size and dimension $d \in \{3,5,10\}$, along with bootstrapped 10\% and 90\% quantiles (via 1000 bootstrap resamples). In \cref{fig:experiments} (left), we compare the $\cE_1$ vs $L^1$ performance of the NN estimator under Setting A (where the latter is well-defined since each $\hat{T}_n^{\mathrm{NN}}$ is a deterministic map). As expected, $L^1$ performance is quite poor, with error always greater than 1. Although we currently lack formal guarantees for the NN estimator (and our setting lies outside of existing theory), it achieves strong $\cE_1$ performance, with faster rates in lower dimensions. 
In \cref{fig:experiments} (center), we compare the NN and rounding estimators under $\cE_1$, the latter enjoying formal guarantees by \cref{thm:rounding-estimator}. Empirically, the NN estimator performs better, but this gap diminishes in high dimensions. We suspect that low-dimensional performance of the rounding estimator is more sensitive to its side-length hyperparameter, which we have simply set to $n^{-1/(d+2)}$ as per the proof of \cref{thm:rounding-estimator}. Finally, in \cref{fig:experiments} (right) we turn to Setting B, again comparing $\smash{\hat{T}_n^{\mathrm{NN}}}$ and $\hat{\kappa}_n^{\mathrm{round}}$ under $\cE_1$ and observing similar trends. We note that all experiments were performed on an M1 MacBook Air with 16GB RAM and 8 CPU cores. See Appendix~\ref{app:experiments} for full experiment details and two additional experiments (one with larger parameter settings, but no bootstrapping, and one in two dimensions, so that our estimator can be visualized).

\begin{remark}[Neural map estimation]
Given the connections between $\cE_p$ and the Monge gap objective discussed in \cref{sec:intro,sec:basic-props}, one could consider training a neural map estimator to minimize an empirical $\cE_p$ objective, perhaps after approximating the $\Wp$ terms via EOT. However, while both $\cE_p$ and the Monge gap objective nullify on optimal maps, they behave quite differently when far from optimality. Indeed, gradients of the feasibility gap term in $\cE_p$ push towards the identity map (since it achieves the minimum transport cost of zero), while gradients of the Monge gap push towards the much larger set of $c$-cyclically monotone maps. In preliminary tests, we found that the Monge gap objective led to significantly more stable training dynamics, which we attribute to this difference. Thus, we maintain our recommendation of $\cE_p$ as an evaluation metric, enabling provable error guarantees under weaker assumptions, rather than a training objective for neural map estimation. Still, we hope that our analysis under $\cE_p$ might inspire new regularization methods in the future.
\end{remark}
\vspace{-2mm}

\begin{figure}[t]
    \makebox[\linewidth][c]{
        \includegraphics[width=1.03\linewidth]{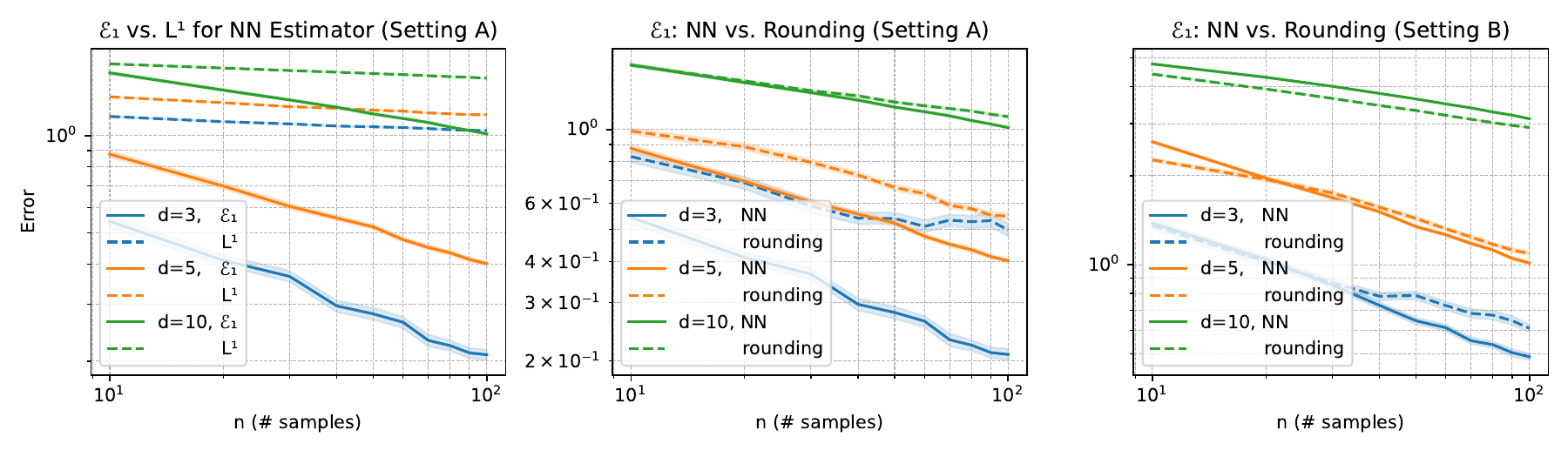}
    }
    \vspace{-2mm}
    \caption{$\cE_1$ and $L^1$ performance of nearest-neighbor and rounding estimators in two settings.}
    \label{fig:experiments}
    \vspace{-4mm}
\end{figure}

\section{Discussion}

This work proposed a novel error metric $\cE_p$ which broadens the scope of OT map estimation research to support stochastic maps, sidestepping existence, uniqueness, and regularity issues faced by existing approaches and treating $p \neq 2$. We developed an efficient rounding estimator with near-optimal rates under $\cE_p$ and characterized the minimax rate for Lipschitz continuous kernels. Our analysis extends naturally to adversarial corruptions, and our theory is supported by numerical simulations.

There are two clear open questions. First, what is the minimax finite-sample risk for estimation under $\cE_p$, say for $\mu,\nu \in \cP([0,1]^d)$? We have established that the correct rate lies between $n^{-1/(d \lor 2p)}$ and $n^{-1/(d + 2p)}$. The slower rate mirrors that attained by bounding $\E[\Wp(\hat{\mu}_n,\mu)]$ without analyzing sampling error at multiple geometric scales. Can a multi-scale approach extend to kernel estimation and improve the current upper bound in \cref{sec:estimation}?
Second, under the setting of \cref{sec:lipschitz-kernel} with $\alpha=1$, where there exists an optimal Lipschitz kernel, can a computationally efficient estimator achieve the optimal $n^{-1/(d \lor 2p)}$ rate? Our experiments demonstrate strong empirical performance of the NN estimator in varied settings, so it seems to be a promising candidate to attain such a~guarantee.

Finally, our objective can naturally be extended to many OT variants, including EOT, weak OT \citep{gozlan2017kantorovich}, conditional OT \citep{hosseini2025conditional}, and adapted OT \citep{bartl2024wasserstein}. Adapting our toolkit of stability lemmas to such settings is an interesting direction for future work.

\ack{
Z. Goldfeld is partially supported by NSF grants CCF-2046018, DMS-2210368, and CCF-2308446, and the IBM Academic Award.
}

\bibliographystyle{abbrvnat}
\bibliography{references}

\newpage
\appendix

\crefalias{section}{appendix}

\section{Proofs for \cref{sec:basic-props}}
\label{app:basic-props}

\subsection{Proof of \cref{prop:Ep-Lp-comparison}}
\label{prf:Ep-Lp-comparison}

Clearly, $\cE_p(\kappa;\mu,\nu) = 0$ if $\kappa$ minimizes \eqref{eq:kernel-problem}. On the other hand, if $\cE_p(\kappa,\mu,\nu) = 0$, then $\kappa_\sharp \mu = \nu$. Thus, $\kappa$ is feasible for \eqref{eq:kernel-problem} with optimal objective value, i.e., it is a minimizer. 

Further, if $T^\star$ is an optimal map, then $\Wp(\mu,\nu) = \|T^\star - \Id\|_{L^p(\mu)}$ and $T^\star_\sharp \mu = \nu$. We thus bound
\begin{align*}
    \cE_p(T;\mu,\nu) &= \left[\|T - \Id\|_{L^p(\mu)} - \Wp(\mu,\nu)\right]_+ + \Wp(T_\sharp \mu,\nu)\\
    &= \left[\|T - \Id\|_{L^p(\mu)} - \|T^\star - \Id\|_{L^p(\mu)}\right]_+ + \Wp(T_\sharp \mu,T^\star_\sharp \mu)\\
    &\leq 2\|T - T^\star\|_{L^p(\mu)},
\end{align*}
as desired.\qed

\subsection{Reverse \texorpdfstring{$L^2$}{L2} comparison (\cref{rem:reverse-L2-comparison})}
\label{app:reverse-L2-comparison}

Suppose that there exists a unique Brenier map of the form $T^\star = \nabla \varphi$, where $\varphi:\R^d \to \R$ is convex and twice differentiable such that $H \varphi \preceq L I_d$. Fixing any map $T:\cX \to \cY$, we abbreviate $\eps = \cE_2(T;\mu,\nu)$. By the definition of $\cE_2$, we have $\Wtwo(T_\sharp \mu,\nu) \leq \eps$. Let $\lambda \in \cK(\cY,\cY)$ be a kernel which achieves this bound, and take $\kappa = \lambda \circ T$. By construction, we have $\kappa_\sharp \mu = \nu$ and
\begin{align*}
    \left(\iint \|y - x\|^2 \dd \kappa(y|x) \dd \mu(x)\right)^\frac{1}{2} - \Wtwo(\mu,\nu) &\leq \left(\iint \|T(x) - x\|^2 \dd\mu(x)\right)^\frac{1}{2} - \Wtwo(\mu,\nu) + \eps\\
    &\leq 2\eps.
\end{align*}
Consequently, we have
\begin{align*}
    \iint \|y - x\|^2 \dd \kappa(y|x) \dd \mu(x) - \Wtwo(\mu,\nu)^2 &\leq 2\eps \left(\left(\iint \|y - x\|^2 \dd \kappa(y|x) \dd \mu(x)\right)^\frac{1}{2} + \Wtwo(\mu,\nu) \right)\\
    &\leq 2\eps \cdot \left(2\Wtwo(\mu,\nu) + 2\eps\right)
\end{align*}
Thus, by Proposition 3.1 of \cite{li2021quantitative}, we have
\begin{align*}
    \iint \|y - T^\star(x)\|^2 \dd \kappa(y|x) \dd \mu(x) &\leq L \left(\iint \|y-x\|^2 \dd \kappa(y|x)\dd \mu(x) - \Wtwo(\mu,\nu)^2\right)\\
    &\leq 4 L \eps \cdot \left(\Wtwo(\mu,\nu) + \eps\right).
\end{align*}
Finally, we bound
\begin{align*}
    \|T - T_\star\|_{L^2(\mu)} &\leq \left(\iint \|y - T^\star(x)\|^2 \dd \kappa(y|x) \dd \mu(x)\right)^\frac{1}{2} + \eps\\
    &\leq \sqrt{4 L \eps \cdot \left(\Wtwo(\mu,\nu) + \eps\right)} + \eps\\
    &\lesssim \sqrt{ (L \lor 1) \cdot  \eps \cdot \left(\Wtwo(\mu,\nu) + \eps\right)},
\end{align*}
as desired.

\subsection{Proof of \cref{lem:monge-gap}}
\label{prf:monge-gap}
First, we bound $\cE_p$ in terms of $\cE'_p$, computing
\begin{align*}
    \cE_p(\kappa;\mu,\nu) &= \left[\left(\iint \|y-x\|^p \dd \kappa_x(y) \dd \mu(x)\right)^{\frac{1}{p}}  - \Wp(\mu,\nu)\right]_+ + \Wp(T_\sharp \mu,\nu)\\
    &\leq \left[\left(\iint \|y-x\|^p \dd \kappa_x(y) \dd \mu(x)\right)^{\frac{1}{p}}  - \Wp(\mu,T_\sharp \mu)\right]_+ + 2\Wp(T_\sharp \mu,\nu)\\
    &\leq \left[\iint \|y-x\|^p \dd \kappa_x(y) \dd \mu(x)  - \Wp(\mu,T_\sharp \mu)^p\right]_+^{\frac{1}{p}} + 2\Wp(T_\sharp \mu,\nu)\\
    &= \left[\iint \|y-x\|^p \dd \kappa_x(y) \dd \mu(x)  - \Wp(\mu,T_\sharp \mu)^p\right]^{\frac{1}{p}} + 2\Wp(T_\sharp \mu,\nu)\\
    &\leq 2^{2 - \frac{1}{p}} \left(\iint \|y-x\|^p \dd \kappa_x(y) \dd \mu(x)  - \Wp(\mu,T_\sharp \mu)^p + \Wp(T_\sharp \mu,\nu)^p\right)^\frac{1}{p}\\
    &\leq 2^{2 - \frac{1}{p}} \cE'_p(\kappa;\mu,\nu),
\end{align*}
where the second inequality uses that $[a^{1/p} - b^{1/p}]_+ \leq [a - b]_+^{1/p}$ for all $a,b \geq 0$, and the penultimate inequality uses that $\ell_1 \leq 2^{1 - 1/p} \ell_p$ in $\R^2$. This implies the claimed bound of $\cE_p \leq 4\cE'_p$. When $p=1$, the above gives $\cE_1 \leq 2\cE'_1$, and we similarly bound
\begin{align*}
    \cE_1'(\kappa;\mu,\nu) &= \iint \|y-x\| \dd \kappa_x(y) \dd \mu(x)  - \Wone(\mu,T_\sharp \mu) + \Wone(T_\sharp \mu,\nu)\\
    &= \left[\iint \|y-x\| \dd \kappa_x(y) \dd \mu(x)  - \Wone(\mu,T_\sharp \mu)\right]_+ + \Wone(T_\sharp \mu,\nu)\\
    &\leq \left[\iint \|y-x\| \dd \kappa_x(y) \dd \mu(x)  - \Wone(\mu,\nu)\right]_+ + 2\Wone(T_\sharp \mu,\nu)\\
    &\leq 2\cE_1(\kappa;\mu,\nu)
\end{align*}
as desired.\qed

\subsection{Proof of \cref{lem:nu-stability}}
\label{prf:nu-stability}

We simply bound
\begin{align*}
     \bigl|\cE_p(\kappa;\mu,\nu) - \cE_p(\kappa;\mu,\nu')\bigr| &\leq |\Wp(\mu,\nu) - \Wp(\mu,\nu')| + |\Wp(\kappa_\sharp \mu,\nu) - \Wp(\kappa_\sharp \mu,\nu')|\\
     &\leq 2\Wp(\nu,\nu')\\
     &\leq 2\diam(\cY)\|\nu - \nu'\|_\tv,
\end{align*}
where the final inequality uses Fact~\ref{fact:Wp-TV-comparison}.\qed

\subsection{Proof of \cref{lem:Wp-stability}}
\label{prf:Wp-stability}

While the key ideas of this proof are straightforward, measurability issues require some care (we encourage the reader to skip such details on an initial read). In what follows, we equip all spaces of distributions with the weak topology and always employ Borel measurability. By the definition of a Markov kernel, $x \in \cX \mapsto \kappa_x(A)$ is a measurable function for each measurable $A \subseteq \cY$. Thus, $(x,x') \in \cX^2 \mapsto (\kappa_x(A), \kappa_{x'}(B))$ is measurable for fixed, measurable $A,B \subseteq \cY$, implying that $(x,x') \in \cX^2 \mapsto (\kappa_x,\kappa_{x'}) \in \cP(\cY)^2$ is measurable. Therefore, by Theorem 3.0.8 of \cite{toneian2019measurable}, there exists a measurable map $(x,x') \in \cX^2 \mapsto \gamma_{x,x'} \in \Pi(\kappa_x,\kappa_{x'})$ such that $\gamma_{x,x'}$ is an OT plan for $\Wp(\kappa_x,\kappa_{x'})$ for all $x,x' \in \cX$.\smallskip 

Now, let $\pi_0 \in \Pi(\mu,\mu')$ be an OT plan for $\Wp(\mu,\mu')$, and define the joint law $\pi$ by $\pi(A \times B \times C \times D) \defeq \iint_{A \times B} \gamma_{x,x'}(C \times D) \dd \pi_0(x,x')$, which is well-defined due to the measurability argument above. Taking $(X,X',Y,Y') \sim \pi$, our construction ensures the following:
\begin{itemize}
    \item $X \sim \mu$ and $X' \sim \mu'$ such that $\E[\|X - X'\|^p] = \rho^p$,
    \item $Y \sim \kappa_X$ and $Y' \sim \kappa_{X'}$ such that $\E[\|Y-Y'\|^p|X,X'] = \Wp(\kappa_X,\kappa_{X'})^p$.
\end{itemize}
Consequently, we bound
\begin{align*}
    \E[\|Y - Y'\|^p] &= \E\left[\E\left[\|Y - Y'\|^p \big| X,X'\right]\right]\\
    &= \E\left[\,\Wp(\kappa_{X},\kappa_{X'})^p\right]\\
    &\leq \E\left[L^p\|X-X'\|^{\alpha p}\right] \tag{H\"older continuity of $\kappa$}\\
    &\leq L^p\E\left[L\|X-X'\|^p\right]^\alpha \tag{Jensen's inequality, $0 < \alpha \leq 1$}\\
    &= L^p \rho^{\alpha p}.
\end{align*}
Moreover, using Minkoswki's inequality, we compute
\begin{align*}
    \bigl|\E[\|Y - X\|^p]^\frac{1}{p} - \E[\|Y' - X'\|^p]^\frac{1}{p}\bigr| \leq \E[\|X - X'\|^p]^\frac{1}{p} + \E[\|Y - Y'\|^p]^\frac{1}{p} \leq \rho + L\rho^{\alpha}.
\end{align*}
Finally, we bound $|\Wp(\mu,\nu) - \Wp(\mu',\nu)| \leq \Wp(\mu,\mu') \leq \rho$ and
\begin{align*}
    |\Wp(\kappa_\sharp \mu,\nu) - \Wp(\kappa_\sharp \mu',\nu)|\leq \Wp(\kappa_\sharp \mu,\kappa_\sharp \mu') \leq \E[\|Y-Y'\|^p]^\frac{1}{p} \leq L \rho^\alpha.
\end{align*}
The definition of $\cE_p$ and these bounds give the lemma.\qed

\subsection{Proof of \cref{lem:TV-stability}}
\label{prf:TV-stability}

To show this result, we prove a slightly more general lemma.

\begin{lemma}
\label{lem:TV-stability-refined}
Fix $\mu,\mu' \in \cP(\cX)$, $\nu \in \cP(\cY)$, and kernel $\kappa \in \cK(\cX,\cY)$ with $\iint \|y - x\|^p \dd \kappa_x(y) \dd \mu(x) \leq \Wp(\mu,\kappa_\sharp \mu)^p + \tau^p$ and $\Wp(\kappa_\sharp \mu,\nu) \leq \tau$ for some $\tau \geq 0$. Then, setting $\eps = \|\mu - \mu'\|_\tv$, we have $\cE_p(\kappa;\mu',\nu) \leq 3\diam(\cY)\eps^{1/p} + 3\tau$.
\end{lemma}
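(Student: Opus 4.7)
The plan is to write $\mu = \alpha + \beta$ and $\mu' = \alpha + \beta'$, where $\alpha := \mu \wedge \mu'$ denotes the common part (mass $1-\eps$) and $\beta, \beta'$ are the complementary pieces (each of mass $\eps$), and to bound the feasibility and optimality contributions to $\cE_p(\kappa;\mu',\nu)$ separately. Throughout, write $\|\kappa\|_\sigma^p := \iint \|y-x\|^p \dd\kappa_x(y)\,\dd\sigma(x)$ for a positive measure $\sigma$. The feasibility gap is handled quickly via the $\Wp$-triangle inequality, the $\tv$-contractivity of kernel pushforward, and Fact~\ref{fact:Wp-TV-comparison} on $\cY$: $\Wp(\kappa_\sharp\mu',\nu) \leq \Wp(\kappa_\sharp\mu',\kappa_\sharp\mu) + \Wp(\kappa_\sharp\mu,\nu) \leq \diam(\cY)\eps^{1/p} + \tau$.

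The optimality gap is the crux. The key subclaim, which I would establish first, is that near-optimality of $\kappa$ for $\mu \to \kappa_\sharp\mu$ descends to the shared mass: $\|\kappa\|_\alpha^p \leq \Wp(\alpha,\kappa_\sharp\alpha)^p + \tau^p$. This follows by gluing an optimal coupling for $\Wp(\alpha,\kappa_\sharp\alpha)$ with the natural $\kappa$-coupling on $\beta$ into a valid coupling of $\mu$ to $\kappa_\sharp\mu$, giving $\Wp(\mu,\kappa_\sharp\mu)^p \leq \Wp(\alpha,\kappa_\sharp\alpha)^p + \|\kappa\|_\beta^p$, and then combining with the hypothesis $\|\kappa\|_\mu^p \leq \Wp(\mu,\kappa_\sharp\mu)^p + \tau^p$ and the additivity $\|\kappa\|_\mu^p = \|\kappa\|_\alpha^p + \|\kappa\|_\beta^p$.

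Now let $\bar\pi$ be an optimal coupling for $\Wp(\mu',\nu)$, disintegrated as $\dd\bar\pi(x,y) = \dd\mu'(x)\,\dd\bar\kappa_x(y)$, and split $\bar\pi = \bar\pi_\alpha + \bar\pi_{\beta'}$ by the $\mu' = \alpha + \beta'$ decomposition of its left marginal, with right marginals $\bar\nu_\alpha, \bar\nu_{\beta'}$. On the $\beta'$-part, since $\kappa_x, \bar\kappa_x$ both live on $\cY$ (so $\Wp(\kappa_x, \bar\kappa_x) \leq \diam(\cY)$), applying Minkowski pointwise in $x$ and integrating in $L^p(\beta')$ yields $(\|\kappa\|_{\beta'}^p)^{1/p} \leq (\|\bar\kappa\|_{\beta'}^p)^{1/p} + \diam(\cY)\eps^{1/p}$. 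On the $\alpha$-part, the subclaim combined with $\Wp(\alpha,\kappa_\sharp\alpha) \leq \Wp(\alpha,\bar\nu_\alpha) + \Wp(\bar\nu_\alpha,\kappa_\sharp\alpha)$ and $\Wp(\alpha,\bar\nu_\alpha) \leq (\|\bar\kappa\|_\alpha^p)^{1/p}$ (since $\bar\pi_\alpha$ couples $\alpha$ to $\bar\nu_\alpha$ with cost $\|\bar\kappa\|_\alpha^p$) reduces matters to controlling $\Wp(\bar\nu_\alpha,\kappa_\sharp\alpha)$. For this, I would restrict an optimal coupling $\gamma$ for $\Wp(\kappa_\sharp\mu,\nu)$ to the $\kappa_\sharp\alpha$ portion, extracting a sub-measure $\nu_\alpha^\gamma \subseteq \nu$ of mass $1-\eps$ with $\Wp(\kappa_\sharp\alpha,\nu_\alpha^\gamma) \leq \tau$, and then use the identity $\bar\nu_\alpha - \nu_\alpha^\gamma = \nu_\beta^\gamma - \bar\nu_{\beta'}$ (with $\nu_\beta^\gamma := \nu - \nu_\alpha^\gamma$ and $\bar\nu_{\beta'} := \nu - \bar\nu_\alpha$) to conclude $\|\bar\nu_\alpha - \nu_\alpha^\gamma\|_\tv \leq \eps$, since both sides are sub-measures of $\nu$ of mass $\eps$. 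Fact~\ref{fact:Wp-TV-comparison} then gives $\Wp(\bar\nu_\alpha,\nu_\alpha^\gamma) \leq \diam(\cY)\eps^{1/p}$, hence $(\|\kappa\|_\alpha^p)^{1/p} \leq (\|\bar\kappa\|_\alpha^p)^{1/p} + \diam(\cY)\eps^{1/p} + 2\tau$.

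Applying Minkowski in $\R^2$ to the decomposition $\|\kappa\|_{\mu'}^p = \|\kappa\|_\alpha^p + \|\kappa\|_{\beta'}^p$ with the two bounds above, and using $\|\bar\kappa\|_\alpha^p + \|\bar\kappa\|_{\beta'}^p = \Wp(\mu',\nu)^p$, yields $(\|\kappa\|_{\mu'}^p)^{1/p} \leq \Wp(\mu',\nu) + 2\diam(\cY)\eps^{1/p} + 2\tau$; adding the feasibility gap gives the claimed $3\diam(\cY)\eps^{1/p} + 3\tau$ bound. The main obstacle is the $\alpha$-analysis: two sub-measures of $\nu$ of mass $1-\eps$ could a priori transport-differ by up to $\diam(\cY)(1-\eps)^{1/p}$, but the identity routing their gap through the complementary $\beta$-pieces forces the $\tv$-discrepancy down to $\eps$, which is precisely what delivers the desired $\eps^{1/p}$ rate.
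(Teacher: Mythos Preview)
Your proof is correct and follows essentially the same strategy as the paper: decompose $\mu,\mu'$ into shared and complementary parts, establish that $\kappa$ remains near-optimal on the shared mass, and control $\Wp(\kappa_\sharp\alpha,\bar\kappa_\sharp\alpha)$ via the sub-measure argument (two sub-measures of $\nu$ of mass $1-\eps$ differ in TV by at most $\eps$). Your feasibility-gap step via TV-contraction of the kernel pushforward is a bit cleaner than the paper's decomposition-and-joint-convexity argument, but the overall structure and constants coincide.
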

\begin{proof}
In what follows, we encourage the reader to focus on the $p=1$ case, where computations are more direct. Write $\eps = \|\mu - \mu'\|_\tv$. By the TV bound, there exist $\alpha, \beta, \gamma \in \cM_+(\cX)$ with $\gamma(\cX) = 1 - \eps$ and $\alpha(\cX) = \beta(\cX) = \eps$ such that $\mu = \gamma + \alpha$ and $\mu' = \gamma + \beta$.

First, we note that $\kappa$ must perform well on $\gamma$. Specifically, we have
\begin{align*}
    \iint \|x&-y\|^p \dd \kappa_x(y) \dd \gamma(x)\\
    &=  \iint \|x-y\|^p \dd \kappa_x(y) \dd \mu(x) - \iint \|x-y\|^p \dd \kappa_x(y) \dd \alpha(x) \tag{$\gamma = \mu - \alpha$}\\
    &\leq \Wp(\mu,\kappa_\sharp \mu)^p + \tau^p - \Wp(\alpha,\kappa_\sharp \alpha)^p \tag{error bound for $\kappa$, def.\ of $\Wp$}\\
    &\leq \Wp(\gamma,\kappa_\sharp \gamma)^p + \Wp(\alpha,\kappa_\sharp \alpha)^p + \tau^p - \Wp(\alpha,\kappa_\sharp \alpha)^p\tag{$\mu = \gamma + \alpha$}\\
    &= \Wp(\gamma, \kappa_\sharp \gamma)^p + \tau^p \\
    &\leq \left(\Wp(\gamma, \kappa_\sharp \gamma) + \tau\right)^p \tag{$\ell_p \leq \ell_1$}.
\end{align*}
Now, letting $\kappa'$ be an optimal kernel for the $\Wp(\mu',\nu)$ problem and writing $D = \diam(\cY)$, we have
\begin{align*}
    &\left(\iint \|y-x\|^p \dd \kappa_x(y) \dd \mu'(x)\right)^\frac{1}{p} - \Wp(\mu',\nu)\\
    =\, &\left(\iint \|y-x\|^p \dd \kappa_x(y) \dd \mu'(x)\right)^\frac{1}{p} - \left(\iint \|y-x\|^p \dd \kappa'_x(y) \dd \mu'(x)\right)^\frac{1}{p} \tag{optimality of $\kappa'$}\\
    =\, &\left(\iint \|y-x\|^p \dd \kappa_x(y) \dd \gamma(x) + \iint \|y-x\|^p \dd \kappa_x(y) \dd \beta(x)\right)^\frac{1}{p} \tag{$\mu' = \gamma + \beta$}\\
    &- \left(\iint \|y-x\|^p \dd \kappa'_x(y) \dd \gamma(x) + \iint \|y-x\|^p \dd \kappa'_x(y) \dd \beta(x)\right)^\frac{1}{p} \\
    \leq \,& \Bigg(\left[\left(\iint \|y-x\|^p \dd \kappa_x(y) \dd \gamma(x)\right)^{\!\frac{1}{p}} - \left(\iint \|y-x\|^p \dd \kappa'_x(y) \dd \gamma(x)\right)^{\!\frac{1}{p}}\right]_+^{\!p} \\
    &+\left[\left(\iint \|y-x\|^p \dd \kappa_x(y) \dd \beta(x)\right)^{\!\frac{1}{p}} - \left(\iint \|y-x\|^p \dd \kappa'_x(y) \dd \beta(x)\right)^{\!\frac{1}{p}}\right]_+^{\!p}\,\, \Bigg)^\frac{1}{p}\\
    \leq \,& \Bigg(\left[\Wp(\gamma, \kappa_\sharp \gamma) + \tau - \Wp(\gamma,\kappa'_\sharp \gamma)\right]_+^{p} \\
    &+ \left[\left(\iint \|y-x\|^p \dd \kappa_x(y) \dd \beta(x)\right)^{\!\frac{1}{p}} - \left(\iint \|y-x\|^p \dd \kappa'_x(y) \dd \beta(x)\right)^{\!\frac{1}{p}}\right]_+^{\!p}\,\, \Bigg)^\frac{1}{p}\\
    \leq \,& \Bigg(\left(\Wp(\kappa_\sharp \gamma, \kappa'_\sharp \gamma) + \tau\right)^{p} +  \iiint \|y - y'\|^p \dd \kappa_x(y) \dd \kappa'_x(y') \dd \beta(x)\Bigg)^\frac{1}{p}\\
    \leq\, &\left(\left(\Wp(\kappa_\sharp \gamma, \kappa'_\sharp \gamma) + \tau\right)^{p} + \eps D^p\right)^\frac{1}{p} \tag{\cref{fact:Wp-TV-comparison}}\\
    \leq \,& \Wp(\kappa_\sharp \gamma, \kappa'_\sharp \gamma) + \tau + D\eps^{\frac{1}{p}} \tag{$\ell_p \leq \ell_1$}.
\end{align*}
The first inequality uses that $(A^p+B^p)^{1/p} - (a^p+b^p)^{1/p} \leq ([A - a]_+^p + [B - b]_+^p)^{1/p}$, which can be obtained by rearranging the $\ell_p$ triangle inequality and using that $A = [A - a]_+ + A\land a$. The second inequality uses the previous bound and the fact that $\kappa'$ is feasible for the $\Wp(\gamma,\kappa'_\sharp \gamma)$ problem. The third uses the $\Wp$ triangle inequality and Minkoswki's inequality.\smallskip

We next bound $\Wp(\kappa_\sharp \gamma, \kappa'_\sharp \gamma)$. Let $\pi \in \Pi(\kappa_\sharp \mu,\nu)$ be an optimal plan for $\Wp(\kappa_\sharp \mu,\nu)$ and define $\lambda \in (1-\eps)\cP(\cY)$ by $\lambda(\cdot) = \int \pi(\cdot|x) \dd \gamma(x)$. By construction,  $\Wp(\kappa_\sharp \gamma, \lambda) \leq \Wp(\kappa_\sharp \mu,\nu) \leq \tau$. Moreover, both $\lambda$ and $\kappa'_\sharp \gamma$ are submeasures of $\nu$ with mass $1-\eps$, and so they must share common mass at least $1-2\eps$. This implies that their TV distance is at most $\eps$, and so Fact~\ref{fact:Wp-TV-comparison} gives that
\begin{equation}
\label{eq:Wp-submeasure-bd}
    \Wp(\kappa_\sharp \gamma,\kappa'_\sharp \gamma) \leq \tau + \Wp(\lambda,\kappa'_\sharp \gamma) \leq \tau + D\eps^\frac{1}{p}.
\end{equation}
Thus, the previous bound on the optimality gap can be tightened to
\begin{equation*}
    \left(\iint \|y-x\|^p \dd \kappa_x(y) \dd \mu'(x)\right)^\frac{1}{p} - \Wp(\mu',\nu) \leq \tau + 2D\eps^{\frac{1}{p}}.
\end{equation*}
Similarly, we bound the feasibility gap by
\begin{align*}
    \Wp(\kappa_\sharp \mu',\nu)^p &= \Wp(\kappa_\sharp \gamma + \kappa_\sharp \beta,\kappa'_\sharp \gamma + \kappa'_\sharp \beta)^p \tag{$\mu' = \gamma + \beta$, $\kappa'_\sharp \gamma = \nu$}\\
    &\leq \Wp(\kappa_\sharp \gamma,\kappa'_\sharp \gamma)^p +\Wp(\kappa_\sharp \beta,\kappa'_\sharp \beta)^p \tag{joint convexity of $\Wp^p$}\\
    &\leq \tau^p + (\tau + D \eps^\frac{1}{p})^p\tag{Fact~\ref{fact:Wp-TV-comparison} and Eq.~\ref{eq:Wp-submeasure-bd}}\\
    &\leq (2\tau + D\eps^{\frac{1}{p}})^p. \tag{$\ell_p \leq \ell_1$}
\end{align*}
Combining, we have that $\cE_p(\kappa;\mu',\nu) \leq 3\tau + 3D\eps^{1/p}$, as desired.\end{proof}

We now seek to find a suitable error bound $\tau$ in terms of $\cE_p(\kappa;\mu,\nu)$. We compute
\begin{align*}
    &\invisequals \iint \|y - x\|^p \dd \kappa_x(y) \dd \mu(x)\\
    &\leq \left(\Wp(\mu,\nu) + \cE_p(\kappa;\mu,\nu)\right)^p \tag{def. of $\cE_p$}\\
    &\leq \left(\Wp(\mu,\kappa_\sharp \mu) + 2\cE_p(\kappa;\mu,\nu)\right)^p \tag{$\Wp(\kappa_\sharp \mu,\nu) \leq \cE_p(\kappa;\mu,\nu)$}\\
    &\leq \Wp(\mu,\kappa_\sharp \mu)^p + 2p \cE_p(\kappa;\mu,\nu) \left(\Wp(\mu,\kappa_\sharp \mu) \lor 2\cE_p(\kappa;\mu,\nu)\right)^{p-1}\\
    &\leq \Wp(\mu,\kappa_\sharp \mu)^p + 2p \cE_p(\kappa;\mu,\nu) \left(\Wp(\mu,\nu) + 3\cE_p(\kappa;\mu,\nu)\right)^{p-1} \tag{$\Wp(\kappa_\sharp \mu,\nu) \leq \cE_p(\kappa;\mu,\nu)$}\\
    &\leq \Wp(\mu,\kappa_\sharp \mu)^p + 2^{p-1} p \cE_p(\kappa;\mu,\nu) \Wp(\mu,\nu)^{p-1} + 3^p 2^{p-1} p \cE_p(\kappa;\mu,\nu)^p.
\end{align*}
Thus, we can take
\begin{align*}
    \tau &= \cE_p(\kappa;\mu,\nu) \lor \left[2^{p-1} p \cE_p(\kappa;\mu,\nu) \Wp(\mu,\nu)^{p-1} + 3^p2^{p-1} p \cE_p(\kappa;\mu,\nu)^p\right]^{\frac{1}{p}}\\
    &\leq 3 \cE_p(\kappa;\mu,\nu)^\frac{1}{p} \Wp(\mu,\nu)^{\frac{p-1}{p}} + 7 \cE_p(\kappa;\mu,\nu)
\end{align*}
Plugging into \cref{lem:TV-stability-refined} gives that
\begin{equation*}
    \cE_p(\kappa;\mu',\nu) \leq 3D\eps^{1/p} + 9 \cE_p(\kappa;\mu,\nu)^\frac{1}{p} \Wp(\mu,\nu)^{\frac{p-1}{p}} + 21 \cE_p(\kappa;\mu,\nu),
\end{equation*}
as desired. The $p=1$ result is immediate.\qed

\subsection{Proof of \cref{lem:kernel-composition}}
\label{prf:kernel-composition}
First, we note that $\kappa_\sharp(\lambda_\sharp \mu) = (\kappa \circ \lambda)_\sharp \mu$ by the definition of kernel composition. This implies that the two feasibility gaps coincide. Moreover, by Minkowski's inequality, we have
\begin{align*}
    &\left|\left(\iint \|y - z\|^p \dd \kappa_{z}(y) \dd (\lambda_\sharp \mu)(z)\right)^{\frac{1}{p}} - \left(\iint \|y - x\|^p \dd (\kappa \circ \lambda)_{x}(y) \dd \mu(x)\right)^{\frac{1}{p}}\right|\\
    \leq \, &\left(\iint \|z - x\|^p \dd \lambda_x(z) \dd \mu(x) \right)^{\frac{1}{p}}
\end{align*}
and
\begin{align*}
    \left|\Wp(\lambda_\sharp \mu, \nu) - \Wp(\mu,\nu)\right| \leq \Wp(\mu,\lambda_\sharp \mu) \leq \left(\iint \|z - x\|^p \dd \lambda_x(z) \dd \mu(x) \right)^{\frac{1}{p}}. 
\end{align*}
Combining these two error bounds gives the lemma.\qed

\section{Proofs for \cref{sec:estimation}}
\label{app:estimation}

\subsection{Proof of \cref{thm:entropic-kernel}}
\label{prf:entropic-kernel}
By the support constraint, our cost $\|x-y\|^p$ is $pd^{(p-1)/2}$-Lipschitz over $\cX \times \cY$. Thus, by Theorem 1 of \cite{genevay2019sample}, we have
\begin{equation*}
    S_{p,\tau}(\hat{\mu}_n,\hat{\nu}_n) \leq \Wp(\hat{\mu}_n,\hat{\nu}_n)^p + 2\tau d\log\bigl(e^2 p d^{p/2-1}\tau^{-1}\bigr).
\end{equation*}
Since $\hat{\pi}_{\tau,n}$ achieves the left hand side above and KL divergence is non-negative, we have
\begin{equation*}
    \iint \|x-y\|^p \dd \hat{\kappa}_n(y | x) \dd \hat{\mu}_n(x) \leq \Wp(\hat{\mu}_n,\hat{\nu}_n)^p + 2\tau d\log\bigl(e^2 p d^{p/2-1}\tau^{-1}\bigr).
\end{equation*}
Taking $p$th roots and noting that $(\hat{\kappa}_n)_\sharp \hat{\mu}_n = \hat{\nu}_n$, this implies that
\begin{equation*}
\cE_p(\hat{\kappa}_n;\hat{\mu}_n,\hat{\nu}_n) \leq \left[2\tau d\log\bigl(e^2 p d^{p/2-1}\tau^{-1}\bigr)\right]^{\frac{1}{p}} \leq 4 (\tau d)^{\frac{1}{p}} \log(e^2 d \tau^{-1}).
\end{equation*}
Now, by \eqref{eq:entropic-kernel}, $(\hat{\kappa}_n)_x$ is obtained by applying softmax to $v(x) \defeq \bigl( (g_\tau(Y_i) - \|x-Y_i\|^p)/\tau \bigr)_{i=1}^n \in \R^n$. Since the $\ell_1,\ell_\infty$ Lipschitz constant of the softmax operation is $\leq 1$, we have
\begin{equation*}
    \|(\hat{\kappa}_n)_x - (\hat{\kappa}_n)_{x'}\|_\tv \leq \frac{1}{2}\|v(x) - v(x')\|_\infty \leq \frac{1}{2} p d^{(p-1)/2} \tau^{-1} \|x-x'\|_2
\end{equation*}
for all $x,x' \in [0,1]^d$. Thus, by \cref{fact:Wp-TV-comparison}, $\hat{\kappa}_n$ is H\"older continuous under $\Wp$ with exponent $1/p$ and constant $2 \sqrt{d} \tau^{-1/p}$. Applying \cref{lem:Wp-stability} now gives
\begin{align*}
    \cE_p(\hat{\kappa}_n;\mu,\nu) &\leq \cE_p(\hat{\kappa}_n;\mu,\hat{\nu}_n) + \Wp(\hat{\nu}_n,\nu)\\
    &\leq \cE_p(\hat{\kappa}_n;\hat{\mu}_n,\hat{\nu}_n) + 2\Wp(\mu,\hat{\mu}_n) + 4\sqrt{d}\, \Wp(\mu,\hat{\mu})^\frac{1}{p}\tau^{-\frac{1}{p}} + \Wp(\nu,\hat{\nu}_n)\\
    &\leq 4 (\tau d)^{\frac{1}{p}} \log(e^2 d \tau^{-1}) + 4\sqrt{d}\, \Wp(\mu,\hat{\mu})^\frac{1}{p}\tau^{-\frac{1}{p}} + 2\Wp(\mu,\hat{\mu}_n) + \Wp(\nu,\hat{\nu}_n).
\end{align*}
Taking expectations, applying \cref{lem:Wp-empirical-convergence}, and plugging in $\tau$ gives the theorem.\qed

\subsection{Minimax Lower Bound under Sampling}
\label{ssec:sampling-LB}

Fix $\mu = \delta_0$, so that the constant kernel $\kappa^\star$ defined by $\kappa^\star_x \equiv \nu$ is optimal. Note that the error $\cE_p(\kappa;\mu,\nu)$ of any kernel $\kappa$ is thus lower bounded by the feasibility gap $\Wp(\kappa_0,\nu)$. Since we only observe $n$ i.i.d.\ samples from $\nu \in \cP([0,1]^d)$, any upper bound on an estimator for this problem instance also gives an upper bound for $n$-sample distribution estimation of $\nu$ under $\Wp$. However, the minimax lower bound of \cite{singh2018minimax} implies that no distribution estimator can achieve $\Wp$ error less than $n^{-1/(2p \lor d)}$ for all $\nu \in \cP([0,1]^d)$.

\subsection{Proof of \cref{thm:rounding-estimator}}
\label{prf:rounding-estimator}

We start with some helpful lemmas.

\begin{lemma}[\cite{rigollet2015highdim}, Theorem 1.14]
\label{lem:sub-Gaussian-trimming}
Let $\mu \in \cP(\R)$ be 1-sub-Gaussian. Then, for $X_1, \dots, X_n$ sampled i.i.d. from $\mu$, we have $\max_{i=1, \dots, n} X_i \leq \sqrt{2\log(n/\delta)}$ with probability at least $1-\delta$.
\end{lemma}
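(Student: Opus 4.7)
The plan is to obtain a tail bound for a single draw from $\mu$ and then apply the union bound. The definition of $1$-sub-Gaussianity given in the preliminaries is $\E_\mu[\exp(X^2)] \leq 2$. Thus, for any $t > 0$, Markov's inequality applied to the nonnegative random variable $\exp(X^2)$ yields
\begin{equation*}
\P(X > t) \leq \P(\exp(X^2) > \exp(t^2)) \leq \frac{\E[\exp(X^2)]}{\exp(t^2)} \leq 2\exp(-t^2).
\end{equation*}

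Next, I would apply a union bound over the $n$ samples to get
\begin{equation*}
\P\Bigl(\max_{i=1,\dots,n} X_i > t\Bigr) \leq \sum_{i=1}^n \P(X_i > t) \leq 2n\exp(-t^2).
\end{equation*}
Setting the right-hand side equal to $\delta$ and solving for $t$ gives $t = \sqrt{\log(2n/\delta)}$. Since $\log(2n/\delta) \leq 2\log(n/\delta)$ whenever $n/\delta \geq 2$ (which can be assumed WLOG, as otherwise the claimed bound is vacuous for a sub-Gaussian variable after a simple case check), we obtain the bound $\max_i X_i \leq \sqrt{2\log(n/\delta)}$ with probability at least $1 - \delta$.

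There is no real obstacle here, just a standard chain of Markov plus union bound. The only mild subtlety is reconciling the constant in the exponent: the paper's sub-Gaussian definition via $\E[\exp(X^2)] \leq 2$ gives a tail bound with constant $1$ in the exponent rather than $1/2$, so the extra factor of $2$ inside the logarithm of the target bound accommodates both the union bound factor of $n$ and the factor of $2$ in the Markov bound. Since the lemma is cited from \cite{rigollet2015highdim}, the proof would just be sketched at this level, with the constant-reconciliation done inline.
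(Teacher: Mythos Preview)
The paper does not give its own proof of this lemma; it is simply cited from \cite{rigollet2015highdim}. Your Markov-plus-union-bound argument is the standard one and is correct for the regime that matters in the paper (where $n/\delta$ is large, e.g., $\delta = 1/n$).

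One small caution on your edge-case handling: the claim that the bound is ``vacuous'' when $n/\delta < 2$ is not accurate. For instance, with $n=1$, $\delta = 0.9$, and $\mu = \delta_{\sqrt{\log 2}}$ (which satisfies $\E[\exp(X^2)] = 2$), the statement asserts $X_1 \leq \sqrt{2\log(10/9)} \approx 0.46$ with probability at least $0.1$, which fails since $X_1 \approx 0.83$ deterministically. This is really a constant-mismatch between the paper's Orlicz-type sub-Gaussian definition and the MGF-based definition used in the cited reference, not a flaw in your argument. Since the paper only invokes the lemma with $\delta = 1/n$ (so $n/\delta = n^2 \geq 2$), your proof covers every case that is actually used.
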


\begin{lemma}
\label{lem:rounding-empirical-convergence}
Let $\mu \in \cP(\R^d)$ be 1-sub-Gaussian and let $\cP$ denote the regular partition of $\R^d$ into cubes of side-length $r > 0$. Then, for any choice of rounding map $r_\cP$, we have
\begin{equation*}
    \E\left[\|(r_\cP)_\sharp (\hat{\mu}_n - \mu)\|_\tv\right] = \widetilde{O}\left( \sqrt{\frac{5^d r^{-d}}{n}}\right).
\end{equation*}
\end{lemma}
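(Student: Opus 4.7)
The idea is to pass to a discrete representation and apply a near/far decomposition of the partition. Since $\nu \defeq (r_\cP)_\sharp \mu$ and $\hat\nu_n \defeq (r_\cP)_\sharp \hat\mu_n$ are atomic on $C_\cP$ with $\nu(\{c_P\}) = \mu(P)$ and $\hat\nu_n(\{c_P\}) = \hat\mu_n(P)$, we have
\begin{equation*}
2 \|\hat\nu_n - \nu\|_\tv \,=\, \sum_{P \in \cP} |\mu(P) - \hat\mu_n(P)|.
\end{equation*}
I would fix a radius $R = c\sqrt{\log n}$ for a sufficiently large absolute constant $c$, set $B \defeq \{x \in \R^d : \|x\| \leq R\}$, and split $\cP$ into the near cells $\cP_B \defeq \{P \in \cP : P \cap B \neq \emptyset\}$ and the complementary far cells $\cP \setminus \cP_B$. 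The interesting regime is $R \gtrsim r$; otherwise a single cube already contains $B$, the main contribution is $O(\sqrt{1/n})$, and the claim follows up to constants.

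For the near contribution, a regular cubic grid of side $r$ has at most $M \defeq (2R/r + 3)^d \leq 5^d (R/r)^d = \widetilde{O}(5^d r^{-d})$ cells meeting $B$, by elementary volume comparison. Writing $\hat\mu_n(P) - \mu(P) = n^{-1}\sum_{i=1}^n(\mathbf{1}\{X_i \in P\} - \mu(P))$ and applying the variance bound $\mathrm{Var}(\hat\mu_n(P)) \leq \mu(P)/n$ together with Cauchy--Schwarz,
\begin{equation*}
\E\!\sum_{P \in \cP_B}\! |\hat\mu_n(P) - \mu(P)| \,\leq\, \sum_{P \in \cP_B}\! \sqrt{\mu(P)/n} \,\leq\, \sqrt{\tfrac{M}{n}\!\sum_{P \in \cP_B}\! \mu(P)} \,\leq\, \sqrt{M/n} \,=\, \widetilde{O}\!\left(\sqrt{5^d r^{-d}/n}\right).
\end{equation*}

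For the far contribution, each $P \in \cP \setminus \cP_B$ lies entirely in $B^c$, so $|a-b| \leq a+b$ bounds its expected contribution by $2\mu(B^c)$. Markov's inequality applied to the $1$-sub-Gaussian identity $\E_\mu[e^{\|X\|^2}] \leq 2$ gives $\mu(B^c) \leq 2 e^{-R^2} = O(n^{-c^2})$, which is dominated by the near term for $c \geq 1$. Combining yields the lemma. The argument is essentially bookkeeping; the only detail requiring care is the elementary volume count for $|\cP_B|$ (and the edge regime $r \gtrsim R$), so I do not anticipate a substantive obstacle.
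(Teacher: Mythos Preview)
Your proposal is correct and follows essentially the same approach as the paper: a near/far split of the partition cells around a ball of radius $R \asymp \sqrt{\log n}$, a volume count giving $\widetilde{O}((R/r)^d)$ near cells, the variance/Cauchy--Schwarz bound $\sqrt{M/n}$ for the near term, and the sub-Gaussian tail $\mu(B^c) = O(1/n)$ for the far term. The only cosmetic differences are that the paper invokes its sub-Gaussian trimming lemma for the tail (rather than Markov on $e^{\|X\|^2}$) and writes the cell count as $(3R/r)^d$ rather than your $(2R/r+3)^d$.
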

\begin{proof}
Let $B$ denote a ball of radius $R = \sqrt{2\log(n)}$ centered at the origin, so that $\mu(B) \geq 1-1/n$ by \cref{lem:sub-Gaussian-trimming}.
Write $\cP_R$ for the subset of partition blocks $P \in \cP$ which intersect $B$, and note that $|\cP_R| \leq \vol(B) r^{-d} \leq (3R/r)^d$. We then bound
\begin{align*}
    \E\left[\|(r_\cP)_\sharp (\hat{\mu}_n - \mu)\|_\tv\right] &= \frac{1}{2} \E\left[\sum_{P \in \cP} |(\hat{\mu}_n - \mu)(P)|\right]\\
    &= \frac{1}{2} \E\left[\sum_{P \in \cP_R} |(\hat{\mu}_n - \mu)(P)| + \sum_{P \in \cP \setminus \cP_R} |(\hat{\mu}_n - \mu)(P)|\right]\\
    &\lesssim \sqrt{\frac{|\cP_R|}{n}} + \E\left[\sum_{P \in \cP \setminus \cP_R} \hat{\mu}_n(P) + \mu(P) \right]\\
    &\lesssim \sqrt{\frac{|\cP_R|}{n}} + \mu(\R^d \setminus B)\\
    &\lesssim \sqrt{\frac{(3\sqrt{2\log(n)})^d r^{-d}}{n}} + \frac{1}{n}\\
    &= \widetilde{O}\left( \sqrt{\frac{5^d r^{-d}}{n}}\right),
\end{align*}
as desired.
\end{proof}

\begin{lemma}
\label{lem:refined-partition}
There exists a partition $\cP$ parameterized by $\delta > 0$ such that, for all  $\mu \in \cP(\R^d)$ with $\E_\mu[\|X\|^{p+1}] \leq 1$ and any rounding map $r_\cP$, we have $\E\left[\|(r_\cP)_\sharp (\hat{\mu}_n - \mu)\|_\tv\right] = \widetilde{O}\bigl( \sqrt{\delta^{-d}/n}\bigr)$ and $\|r_\cP - \Id\|_{L^p(\mu)} \lesssim \delta$.
\end{lemma}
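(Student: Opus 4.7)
The plan is to build a partition whose cells grow with distance from the origin, at precisely the rate that balances quantization error against the total cell count. Specifically, I would take $A_0 = \{x : \|x\| < 1\}$ and $A_k = \{x : 2^{k-1} \leq \|x\| < 2^k\}$ for $k \geq 1$, then intersect each annulus with a regular grid of cubes of side length $s_k = \delta \cdot 2^{\max(k-1,0)}$. The cells of $\cP$ are the non-empty intersections of these cubes with the annuli; $r_\cP$ may be any measurable selection of one representative per cell, and the bounds below depend only on the partition.

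For the quantization bound, each cell in $A_k$ has diameter at most $\sqrt{d}\,s_k$, so $\|r_\cP(x) - x\| \leq \sqrt{d}\,\delta\,2^k$ on $A_k$ for $k \geq 1$. Markov's inequality applied to $\E_\mu[\|X\|^{p+1}] \leq 1$ gives $\mu(A_k) \leq \mu(\|X\| \geq 2^{k-1}) \leq 2^{-(k-1)(p+1)}$, hence
\[
\|r_\cP - \Id\|_{L^p(\mu)}^p \leq (\sqrt{d}\,\delta)^p + \sum_{k \geq 1}(\sqrt{d}\,\delta\,2^k)^p\, 2^{-(k-1)(p+1)} \lesssim d^{p/2}\,\delta^p,
\]
where the tail collapses to a geometric series of ratio $1/2$. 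This gives the claimed $\|r_\cP - \Id\|_{L^p(\mu)} \lesssim \delta$ up to the $d$-dependent constants handled implicitly elsewhere in the paper.

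For the TV bound, $\vol(A_k) \lesssim 2^{kd}$ and each cube has volume $s_k^d$, so the number of cells $N_k \lesssim 2^{kd}/s_k^d \lesssim \delta^{-d}$ is independent of $k$. Mirroring the proof of \cref{lem:rounding-empirical-convergence}, I would bound $\E|\hat{\mu}_n(P) - \mu(P)| \leq \sqrt{\mu(P)/n}$ and apply Cauchy-Schwarz within each annulus to get
\[
\E\!\left[\sum_{P \subseteq A_k} |\hat{\mu}_n(P) - \mu(P)|\right] \lesssim \sqrt{N_k\, m_k/n} \lesssim \sqrt{\delta^{-d}/n}\cdot 2^{-(k-1)(p+1)/2},
\]
where $m_k = \mu(A_k)$. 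Summing over $k \geq 0$ (with $m_0 \leq 1$) yields a convergent geometric series and the claimed $\widetilde{O}(\sqrt{\delta^{-d}/n})$ rate. Strictly speaking no log factors arise from this argument, but one could alternatively truncate past a radius of order $n^{1/(p+1)}$, which is why I expect the statement to be safely phrased with $\widetilde{O}$.

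The only real subtlety is the two-sided balance in choosing $s_k$: the side length must grow like $2^k$ to keep $N_k$ bounded uniformly in $k$ (otherwise the TV sum blows up from too many bins at large radii), yet the $(p+1)$-moment assumption must then outpace the $2^{kp}$ inflation of the per-cell diameter raised to the $p$th power when averaging against $\mu(A_k)$. The exponent $p+1$ leaves exactly one factor of $2^{-k}$ to spare in the quantization sum, which is why the construction succeeds for heavy-tailed $\mu$ while the uniform-cube partition of \cref{lem:rounding-empirical-convergence} required sub-Gaussianity.
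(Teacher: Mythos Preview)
Your proposal is correct and follows essentially the same approach as the paper: dyadic annuli $A_k$ with cells whose side length scales like $2^k\delta$, so that the cell count per annulus is $O(\delta^{-d})$, combined with Markov on the $(p+1)$-moment and Cauchy--Schwarz within each annulus. The only cosmetic difference is that the paper uses dilated Voronoi cells from a $(3\delta)$-covering of the unit ball rather than cubic grids, and you are right that neither argument actually produces log factors.
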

\begin{proof}
Let $X_0$ be a minimal $(3\delta)$-covering of the unit ball, denoted $S_0$. In particular, this implies that $|X_0| \leq \delta^{-d}$. Now, take $\cP_0$ to be the Voronoi partition of $S_0$ induced by $X_0$, so that $\cP_0$ has at most $\delta^{-d}$ cells of diameter at most $6\delta$. Then for each integer $i > 0$, set $S_i \defeq 2^i S_0 \setminus 2^{i-1} S_0$, and let $\cP_i$ be the dilated partition $\{ (2^i P) \cap S_i : P \in \cP_0 \}$. By construction,  $|\cP_i| \leq \delta^{-d}$ and each $P \in \cP_i$ has diameter at most $2^{i} \cdot 6\delta$, for all $i \geq 0$. Moreover, by Markov's inequality, we have
\begin{equation*}
    \mu(S_i) \leq \Pr_\mu\bigl(\|X\| > 2^{i-1}\bigr) = \Pr_\mu\bigl(\|X\|^{p+1} > 2^{(p+1)(i-1)}\bigr)\leq 2^{(1-i)(p+1)}
\end{equation*}
for each $i > 0$. We thus bound
\begin{align*}
      \E\left[\|(r_\cP)_\sharp(\mu - \hat{\mu}_n)\|_\tv\right] &= \E\left[\sum_{i=0}^\infty \sum_{P \in \cP_i} |(\mu - \hat{\mu}_n)(P)|\right]\\
      &\leq \sum_{i=0}^\infty \sum_{P \in \cP_i} \sqrt{\Var_{\mu^{\otimes n}}[\hat{\mu}_n(P)]}\\
      &\leq \frac{1}{\sqrt{n}} \cdot \sum_{i=0}^\infty \sum_{P \in \cP_i} \sqrt{\mu(P)}\\
      &\leq \frac{1}{\sqrt{n}} \cdot \sum_{i=0}^\infty \sqrt{|\cP_i|\mu(S_i)}\\
      &\leq (\delta^d n)^{-\frac{1}{2}} \cdot \left(1 + \sum_{i=1}^\infty 2^{\frac{1-i}{2}}\right)\\
      &\lesssim (\delta^d n)^{-\frac{1}{2}}.
\end{align*}
Similarly, we bound
\begin{align*}
    \|r_\cP - \Id\|_{L^p(\mu)} &\leq \left(\sum_{i=0}^\infty \sum_{P \in \cP_i} \mu(P) \diam(P)^p\right)^{\frac{1}{p}}\\
    &\leq \left(\sum_{i=0}^\infty (2^i \cdot 6 \delta)^p \mu(S_i)\right)^{\frac{1}{p}}\\
    &\lesssim \left(\sum_{i=0}^\infty (2^i \delta)^p 2^{(1-i)(p+1)}\right)^{\frac{1}{p}}\\
    &= \delta \left(\sum_{i=0}^\infty 2^{p+1-i}\right)^{\frac{1}{p}}\\
    &\lesssim \delta,
\end{align*}
as desired.
\end{proof}

We now prove the theorem. First, note that for $D = \sqrt{4\log(n)}$, we have $\max_{i=1,\dots,n}\|Y_i\| \leq DS$ with probability at least $1/n$, by \cref{lem:sub-Gaussian-trimming}. Now, for a general partition $\cP$, we bound
\begin{align*}
    \cE_p(\hat{\kappa}_n;\mu,\nu) &= \cE_p(\bar{\kappa}_n \circ r_\cP;\mu,\nu)\\
    &\leq \cE_p(\bar{\kappa}_n;\mu',\nu) + 2\,\|r_\cP - \Id\|_{L^p(\mu)} \tag{\cref{lem:kernel-composition}}\\
    &\leq \cE_p(\bar{\kappa}_n;\mu',\hat{\nu}_n) + 2\,\|r_\cP - \Id\|_{L^p(\mu)} + \Wp(\nu,\hat{\nu}_n) \tag{\cref{lem:nu-stability}}\\
    &\lesssim \|(r_\cP)_\sharp (\mu - \hat{\mu}_n)\|_\tv^{1/p} \cdot \diam(\supp(\hat{\nu}_n)) + \delta^\frac{1}{p} + \|r_\cP - \Id\|_{L^p(\mu)} + \Wp(\nu,\hat{\nu}_n),
\end{align*}
where the last inequality follows by \cref{lem:TV-stability-refined} and our choice of $\bar{\kappa}_n$. Applying this bound for the regular cube partition and taking expectations, we bound
\begin{align*}
    \E[\cE_p(\hat{\kappa}_n;\mu,\nu)] &\lesssim D \E\left[\|(r_\cP)_\sharp (\mu - \hat{\mu}_n)\|_\tv\right]^{1/p} + \frac{1}{n} + \delta^\frac{1}{p} + \sqrt{d}r + \Wp(\nu,\hat{\nu}_n)\\
    &\lesssim \widetilde{O}\left( \frac{5^d r^{-d}}{n}\right)^{\frac{1}{2p}}  + \delta^\frac{1}{p} + \sqrt{d}r + \widetilde{O}_p\left(n^{-\frac{1}{d \lor 2p}}\right). \tag{\cref{lem:sub-Gaussian-trimming,lem:rounding-empirical-convergence,lem:Wp-empirical-convergence}}
\end{align*}
Taking $r = n^{-1/(d+2p)}$, we obtain $\E[\cE_p(\hat{\kappa}_n;\mu,\nu)] = \widetilde{O}_{p,d}(n^{-1/(d+2p)}) + \delta^{1/p}$. The same rate is obtained under bounded $2p$th moments by using the alternative partition from \cref{lem:refined-partition}. Thus, to achieve the desired rate, it suffices to solve the preliminary OT problem to accuracy $\delta = n^{-p/(d+2p)}$.

Computational complexity is dominated by this OT computation. The source and target distributions are both supported on $n$ points, and we require accuracy $\delta = n^{-p/(d+2p)}$. Computing the relevant cost matrix requires time $O(n^2 d)$. Using a state of the art OT solver based on entropic OT (e.g., \citealp{luo2023improved}) gives a running time of $O(C_\infty n^2/\delta) = O(C_\infty n^{2 + p/(d+2p)})$, where $C_\infty$ is the largest distance between a source point and a target point.

\subsection{One-Dimensional Refinements (\cref{rem:1d-refinements})}
\label{app:one-dimension}

In one dimension, OT maps can be expressed concisely in terms of CDFs; in particular, if $\mu$ and $\nu$ have strictly increasing CDFs $F_\mu$ and $F_\nu$, respectively, then the map $T^\star(x) = F_\nu^{-1}(F_\mu(x))$ solves the $\Wp(\mu,\nu)$ problem for all $p \geq 1$. As a result, many OT-based inference tasks become more analytically tractable when $d=1$, including map estimation. In fact, minor adjustments to folklore techniques imply that the optimal risk of $n^{-1/(2p)}$ is achievable when $d=1$. We now provide a clean derivation of this risk bound using the Kolmogorov-Smirnov (KS) distance.\smallskip

The KS distance is a useful alternative to the TV metric in one dimension, defined via $\|\mu - \nu\|_\KS \defeq \sup_{t \in \R}|(\mu - \nu)((-\infty,t])| = \|F_\mu - F_\nu\|_\infty$. We always have $\|\mu - \nu\|_\KS \leq \|\mu - \nu\|_\tv$, since $\|\mu - \nu\|_\tv$ can alternatively be expressed as $\sup_{A \text{ meas.}}|(\mu - \nu)(A)|$. A comparison with $\Wp$ mirroring Fact~\ref{fact:Wp-TV-comparison} is direct.

\begin{lemma}[$\Wp$-KS comparison]
\label{lemma:Wp-KS-comparison}
For $\mu,\nu \in \cP([0,D])$, we have
$\Wp(\mu,\nu) \leq D \|\mu - \nu\|_{\KS}^{1/p}$.
\end{lemma}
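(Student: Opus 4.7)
The plan is to reduce the $\Wp$ bound to a $\Wone$ bound via a trivial uniform bound on the transport displacement, and then to control $\Wone$ by KS via the standard one-dimensional CDF formula.

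First, I would invoke the quantile representation of $\Wp$ in one dimension: $\Wp(\mu,\nu)^p = \int_0^1 |F_\mu^{-1}(u) - F_\nu^{-1}(u)|^p \dd u$. Since $\mu$ and $\nu$ are both supported on $[0,D]$, each quantile lies in $[0,D]$, so $|F_\mu^{-1}(u) - F_\nu^{-1}(u)| \leq D$. Factoring out $p-1$ powers of this bound yields the standard inequality
\begin{equation*}
    \Wp(\mu,\nu)^p \leq D^{p-1} \int_0^1 |F_\mu^{-1}(u) - F_\nu^{-1}(u)| \dd u = D^{p-1}\, \Wone(\mu,\nu).
\end{equation*}

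Next, I would use the one-dimensional identity $\Wone(\mu,\nu) = \int_{\R} |F_\mu(t) - F_\nu(t)| \dd t$, which since both measures are supported on $[0,D]$ reduces to an integral over $[0,D]$. Bounding the integrand pointwise by $\|F_\mu - F_\nu\|_\infty = \|\mu - \nu\|_{\KS}$ gives $\Wone(\mu,\nu) \leq D\|\mu - \nu\|_{\KS}$. Chaining this with the previous bound gives $\Wp(\mu,\nu)^p \leq D^p \|\mu - \nu\|_{\KS}$, and taking $p$th roots yields the claim.

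There is no real obstacle here: both ingredients are textbook one-dimensional OT facts, and the only mild subtlety is the initial step of pulling a factor of $D^{p-1}$ out of the $\ell_p$ quantile integral, which is justified purely by the compact support assumption.
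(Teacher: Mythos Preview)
Your proposal is correct and is essentially identical to the paper's proof: both use the one-dimensional quantile representation of $\Wp^p$, factor out $D^{p-1}$ via the support bound to reduce to $\Wone$, apply the CDF identity $\Wone(\mu,\nu)=\int_0^D|F_\mu-F_\nu|$, and bound the integrand by $\|\mu-\nu\|_\KS$.
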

\begin{proof}
Writing $F,G$ for the CDFs of $\mu$ and $\nu$, with generalized inverses $F^{-1}$ and $G^{-1}$, we bound
\begin{align*}
    \Wp(\mu,\nu)^p &= \int_0^1 |F^{-1}(u) - G^{-1}(u)|^p \dd u\\
    &\leq D^{p-1} \int_0^1 |F^{-1}(u) - G^{-1}(u)| \dd u\\
    &= D^{p-1} \int_0^D |F(x) - G(x)| \dd x\\
    &\leq D^p \|\mu - \nu\|_\KS.
\end{align*}
Taking $p$th roots gives the statement.
\end{proof}

The KS distance admits useful empirical convergence guarantees not shared by the TV distance.

\begin{fact}[KS empirical convergence, \citealp{massart1990tight}]
\label{fact:KS-empirical-convergence}
For all $\mu \in \cP(\R)$, $\E[\|\mu - \hat{\mu}_n\|_\KS] \leq 1/\sqrt{n}$.
\end{fact}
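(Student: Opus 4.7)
The plan is to prove this by combining the Dvoretzky–Kiefer–Wolfowitz inequality (with Massart's sharp constant) and tail integration. The bound is a classical fact about the empirical distribution function, and the cited source $\citep{massart1990tight}$ is precisely the paper that established the optimal constant in DKW, which in turn gives an expectation bound of the desired order.

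First, I would reduce to the case $\mu = \Unif[0,1]$ via the probability integral transform: if $F_\mu$ denotes the CDF of $\mu$, setting $U_i \defeq F_\mu(X_i)$ yields a sequence that is i.i.d.\ uniform on $[0,1]$ when $\mu$ is continuous, and the KS distance is invariant under the monotone reparametrization $F_\mu$. For general $\mu$ (with possible atoms), an approximation argument via smoothing handles the extension, or one can simply work with the empirical CDF directly, since DKW requires no regularity of $F_\mu$.

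Next, invoke Massart's sharp DKW inequality, namely
\begin{equation*}
\Pr\bigl(\|\mu - \hat{\mu}_n\|_\KS > t\bigr) = \Pr\bigl(\|\hat{F}_n - F_\mu\|_\infty > t\bigr) \leq 2 \exp(-2 n t^2), \quad t > 0.
\end{equation*}
Then convert to an expectation via the layer-cake representation $\E[Z] = \int_0^\infty \Pr(Z > t)\, \dd t$ for nonnegative $Z$, splitting the integral at a threshold $t^\star \asymp \sqrt{\log 2/(2n)}$ chosen so that the trivial bound $\Pr(\cdot) \leq 1$ kicks in below $t^\star$ and Massart's bound dominates above, yielding an $O(1/\sqrt{n})$ estimate.

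The main obstacle is obtaining the exact constant $1$ in front of $1/\sqrt{n}$, since naive tail integration of Massart's bound yields the slightly larger prefactor $\sqrt{\pi/(2n)} \approx 1.25/\sqrt{n}$ or, after the split-integral trick, a constant slightly above $1$ (close to $\sqrt{\log 2/2} + 1/(2\sqrt{2\log 2}) \approx 1.01$). To tighten to an absolute constant of $1$, one can either exploit a more refined analysis (for instance, writing $\|\hat{F}_n - F_\mu\|_\infty$ as a supremum over a VC class of dimension $1$ and applying a Talagrand/Rademacher bound with explicit constants, or using the exact distribution of the Kolmogorov statistic under uniform samples), or simply observe that for $n=1$ the left side is at most $3/4 < 1$, and invoke Massart's result directly for $n \geq 2$. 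Either way, the $n^{-1/2}$ rate is sharp and the constant can be pushed to $1$ by a careful estimate of the tail integral.
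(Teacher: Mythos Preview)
The paper does not prove this statement; it is stated as a cited fact from \cite{massart1990tight} and used only up to constants in the downstream bound $\E[\cE_p(\hat{\kappa}_n;\mu,\nu)] \lesssim n^{-1/(2p)}$ of \cref{prop:1d-risk-bound}. Your approach---Massart's sharp DKW inequality followed by tail integration---is the standard route and correctly delivers the $n^{-1/2}$ rate, which is all the paper actually needs.

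One small gap: your proposed fixes for the exact constant $1$ do not work as stated. The split-integral computation you describe gives approximately $1.01/\sqrt{n}$ for \emph{every} $n$, so checking $n=1$ separately and ``invoking Massart directly for $n \geq 2$'' still leaves you at $1.01$ rather than $1$. Obtaining the sharp constant would require either the exact (non-asymptotic) distribution of the Kolmogorov statistic or a more delicate argument than DKW alone. That said, since the paper's application only uses this fact inside a $\lesssim$ bound, the discrepancy is immaterial, and your derivation of the rate is entirely adequate for the paper's purposes. The probability-integral-transform reduction you mention first is also unnecessary, as you yourself note---DKW applies to arbitrary $F_\mu$ directly.
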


Moreover, for fixed $\mu$ and $\nu$, there exists an optimal kernel for $\Wp(\mu,\nu)$ (namely, based on CDFs as above), which is near-optimal for all $\mu'$ in a KS neighborhood of $\mu$, as shown next.

\begin{lemma}[KS corruptions in $\mu$]
\label{lem:KS-stability}
For $\cX,\cY \subseteq \R$, fix $\mu\in \cP(\cX)$ and $\nu \in \cP(\cY)$. There exists an optimal kernel $\kappa^\star \in \cK(\cX,\cY)$ for the $\Wp(\mu,\nu)$ problem such that, for all $\mu' \in \cP(\cX)$, we have
\begin{equation*}
    \cE_p(\kappa^\star;\mu',\nu) \lesssim \diam(\cY) \|\mu - \mu'\|_\KS^{1/p}.
\end{equation*}
\end{lemma}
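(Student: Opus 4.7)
My plan is to construct an optimal kernel $\kappa^\star$ via the classical 1D monotone rearrangement of $\mu$ to $\nu$, then leverage two facts about it: (i) the induced coupling with any source $\mu'$ is comonotone, so $\kappa^\star$ attains $\Wp(\mu', \kappa^\star_\sharp \mu')$; and (ii) its pushforward is contractive in KS distance, i.e., $\|\kappa^\star_\sharp \mu' - \nu\|_\KS \leq \|\mu - \mu'\|_\KS$. Combined with \cref{lemma:Wp-KS-comparison}, these bound both the feasibility and optimality gaps in $\cE_p$.

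Writing $F_\mu, F_\nu$ for the CDFs and $F_\nu^{-1}$ for the quantile function, I define $\kappa^\star_x$ as the law of $F_\nu^{-1}(U)$ where $U \sim \Unif([F_\mu(x^-), F_\mu(x)])$ (interpreted as a Dirac when $F_\mu$ is continuous at $x$). Standard 1D OT theory gives $\kappa^\star_\sharp \mu = \nu$ and $\iint |x-y|^p \dd \kappa^\star_x(y) \dd \mu(x) = \Wp(\mu, \nu)^p$. Moreover, since $x < x'$ implies $F_\mu(x) \leq F_\mu({x'}^-)$, monotonicity of $F_\nu^{-1}$ forces $y \leq y'$ for every $y \in \supp(\kappa^\star_x)$ and $y' \in \supp(\kappa^\star_{x'})$. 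Hence, for \emph{any} $\mu' \in \cP(\cX)$, the joint law of $(X,Y)$ with $X \sim \mu'$ and $Y \sim \kappa^\star_X$ is supported on a comonotone set, and classical 1D OT results for convex costs give
\begin{equation*}
\iint |x - y|^p \dd \kappa^\star_x(y) \dd \mu'(x) = \Wp(\mu', \kappa^\star_\sharp \mu')^p.
\end{equation*}
The triangle inequality then bounds the optimality gap by the feasibility gap: $\Wp(\mu', \kappa^\star_\sharp \mu') - \Wp(\mu', \nu) \leq \Wp(\kappa^\star_\sharp \mu', \nu)$.

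For the feasibility gap, I fix $t \in \R$ and observe that $f_t(x) \defeq \kappa^\star_x((-\infty, t])$ is nonincreasing in $x$ with values in $[0,1]$, hence of total variation at most $1$. Since $\kappa^\star_\sharp \mu = \nu$, integration by parts yields
\begin{equation*}
F_{\kappa^\star_\sharp \mu'}(t) - F_\nu(t) = \int f_t \dd(\mu' - \mu) = -\int (F_{\mu'} - F_\mu) \dd f_t,
\end{equation*}
whose absolute value is at most $\|\mu - \mu'\|_\KS \cdot \mathrm{TV}(f_t) \leq \|\mu - \mu'\|_\KS$. Taking the supremum over $t$ gives $\|\kappa^\star_\sharp \mu' - \nu\|_\KS \leq \|\mu - \mu'\|_\KS$, and \cref{lemma:Wp-KS-comparison} yields $\Wp(\kappa^\star_\sharp \mu', \nu) \leq \diam(\cY) \|\mu - \mu'\|_\KS^{1/p}$. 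Summing the optimality and feasibility gaps gives the desired bound with constant $2$.

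The main obstacle I anticipate is the measure-theoretic care needed to define $\kappa^\star$ over all of $\cX$ (not just $\supp(\mu)$) while preserving the pointwise monotonicity of supports, which is what transfers comonotonicity from $\mu$ to an arbitrary $\mu'$. Once this is in place, the comonotone optimality argument and the integration-by-parts bound are routine, with the boundary terms in the latter vanishing since $f_t(x) \to 1$ as $x \to -\infty$ and $f_t(x) \to 0$ as $x \to +\infty$.
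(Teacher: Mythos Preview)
Your proof is correct and takes a genuinely different route from the paper's. The paper fixes the deterministic map $T^\star = G^{-1}\!\circ F$ and directly bounds $\|T^\star - G^{-1}\!\circ F'\|_{L^p(\mu')}$ (where $G^{-1}\!\circ F'$ is optimal for the $\Wp(\mu',\nu)$ problem) via a combinatorial argument: it partitions $[0,1]$ into $\approx 1/\eps$ intervals of length $\eps$, uses monotonicity of $G^{-1}$ to bound each contribution by a telescoping increment, and sums. It then invokes the $L^p$ comparison (\cref{prop:Ep-Lp-comparison}) to pass to $\cE_p$.

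Your argument is more structural and arguably cleaner. The key observation that the monotone-rearrangement kernel has \emph{zero Monge gap for every source}---because the induced coupling with any $\mu'$ is supported on a monotone set and hence $c$-cyclically monotone for $|x-y|^p$---collapses the optimality gap to the feasibility gap by the triangle inequality. Your KS-contraction step $\|\kappa^\star_\sharp \mu' - \nu\|_\KS \leq \|\mu'-\mu\|_\KS$ then reduces everything to one application of \cref{lemma:Wp-KS-comparison}, avoiding the partitioning computation entirely and handling atoms in $\mu$ transparently through the kernel definition. The paper's approach, by contrast, more explicitly tracks the $L^p$ geometry and would generalize differently (e.g., if one wanted $L^p$ control rather than $\cE_p$ control).

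One small correction: in your integration-by-parts step, the boundary terms $[f_t\,(F_{\mu'}-F_\mu)]_{-\infty}^{+\infty}$ vanish because $F_{\mu'}-F_\mu \to 0$ at $\pm\infty$, not because $f_t\to 1,0$ (which can fail for $t$ outside the range of $\nu$; e.g., for $t$ above $\sup\cY$, $f_t\equiv 1$). Since $f_t$ is bounded, the former suffices. Alternatively, the layer-cake identity $\int f_t\,\dd(\mu'-\mu)=\int_0^1[\mu'(\{f_t>s\})-\mu(\{f_t>s\})]\,\dd s$ with each level set a half-line gives the bound directly and sidesteps Stieltjes subtleties.
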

\begin{proof}
Write $F,F',G,$ for the CDFs of $\mu$, $\mu'$, and $\nu$, respectively, and let $\eps = \|\mu - \mu'\|_\KS = \|F - F'\|_\infty$. Write $D = \diam(\cY)$ and suppose without loss of generality that $\cY = [0,D]$. For now, suppose further that $1/\eps = 3M$ is a multiple of 3 (without loss of generality) and that $F$ is continuous (which will be relaxed). We consider the kernel induced by the map $T^\star = G^{-1} \circ F$, where $G^{-1}$ is the generalized inverse of $G$ with $G^{-1}(q)$ defined as as $0$ for $q \leq 0$ and $D$ for $q \geq 1$. In particular, we compare $T^\star$ with the optimal kernel $G^{-1} \circ F'$ for the $\Wp(\mu',\nu)$ problem, bounding
\begin{align*}
    \int_\cX\bigl|G^{-1}(F(x)) - G^{-1}(F'(x))\bigr|^p\dd F'(x) &\leq \int_\cX\bigl|G^{-1}(F'(x) \pm \eps) - G^{-1}(F'(x))\bigr|^p\dd F'(x)\\
    &= \int_0^1\bigl|G^{-1}(u \pm \eps) - G^{-1}(u)\bigr|^p\dd u\\
    &= \sum_{i=0}^{3M - 1} \int_{i\eps}^{(i+1)\eps}\bigl|G^{-1}(u \pm \eps) - G^{-1}(u)\bigr|^p\dd u\\
    &\leq \eps \sum_{i=0}^{3M - 1} \bigl[G^{-1}((i+2)\eps) - G^{-1}((i-1)\eps)\bigr]^p.
\end{align*}
Here, the first equality uses that $F'_\sharp \mu' = \Unif([0,1])$, and the second inequality uses that $G^{-1}$ is monotonic. If $F'$ is discontinuous, one should replace it with the kernel $\tilde{F}'$ which coincides with $F'$ where continuous and, at any point $x$ where there is a jump from $p_1$ to $p_2$, satisfies $\tilde{F}'_\sharp \delta_x = \Unif([p_1,p_2])$. By this choice, we have $\tilde{F}'_\sharp \mu = \Unif([0,1])$, and one can do the same for $F$ to obtain $\tilde{F}$ such that $\mathsf{W}_\infty(\tilde{F}_\sharp \mu,\tilde{F}'_\sharp \mu) \leq \eps$. At this point, we can derive the same bound as above. Now, writing $\Delta_i = G^{-1}((i+3)\eps) - G^{-1}(i\eps)$, we have
\begin{align*}
    \int_\cX\bigl|G^{-1}(F(x)) &- G^{-1}(F'(x))\bigr|^p\dd F'(x)\\
    &\leq \eps \sum_{i=-1}^{3M- 2} \Delta_i^p\\
    &= \eps \left(\sum_{i=0}^{M-1} \Delta_{3i-1}^p + \sum_{i=0}^{M-1} \Delta_{3i}^p + \sum_{i=0}^{M - 1} \Delta_{3i+1}^p\right)\\
    &= \eps D^p \left(\sum_{i=0}^{M-1} \left(\frac{\Delta_{3i-1}}{D}\right)^p + \sum_{i=0}^{M-1} \left(\frac{\Delta_{3i}}{D}\right)^p + \sum_{i=0}^{M - 1}\left(\frac{\Delta_{3i+1}}{D}\right)^p\right)\\
    &\leq \eps D^p \left(\left(\sum_{i=0}^{M-1} \frac{\Delta_{3i-1}}{D}\right)^p + \left(\sum_{i=0}^{M-1} \frac{\Delta_{3i}}{D}\right)^p + \left(\sum_{i=0}^{M - 1}\frac{\Delta_{3i+1}}{D}\right)^p\right)\\
    &= O(D\eps^{1/p})^p.
\end{align*}
Thus, we have $\cE_p(G^{-1} \circ F;\mu',\nu) \lesssim \|G^{-1} \circ F - G^{-1} \circ F'\|_{L^p(\mu')} \leq D \eps^{1/p}$, as desired.
\end{proof}

Together, the three results stated above yield our desired risk bound.

\begin{proposition}
\label{prop:1d-risk-bound}
Let $X_1, \dots, X_n \stackrel{\text{i.i.d.}}{\sim} \mu \in \cP(\R)$ and $Y_1, \dots, Y_n \stackrel{\text{i.i.d.}}{\sim} \nu \in \cP([0,1])$. Then the estimator $\hat{\kappa}_n$ which, given $\hat{\mu}_n$ and $\hat{\nu}_n$, returns the optimal kernel for $\Wp(\hat{\mu}_n,\hat{\nu}_n)$ given by \cref{lem:KS-stability}, achieves risk $\E[\cE_p(\hat{\kappa}_n;\mu,\nu)] \lesssim n^{-1/(2p)}$.
\end{proposition}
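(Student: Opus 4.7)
The plan is to combine the KS stability lemma (Lemma~\ref{lem:KS-stability}) with the $\nu$-stability lemma (Lemma~\ref{lem:nu-stability}) and the KS--$\Wp$ comparison (Lemma~\ref{lemma:Wp-KS-comparison}), then invoke Dvoretzky--Kiefer--Wolfowitz via Fact~\ref{fact:KS-empirical-convergence} to finish. This mirrors the two-step ``stability plus empirical convergence'' pattern already used in the proofs of Theorems~\ref{thm:entropic-kernel} and~\ref{thm:rounding-estimator}, but with KS distance in place of $\Wp$ as the intermediate metric.

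Concretely, I would first use Lemma~\ref{lem:nu-stability} to pass from the target $\nu$ to its empirical version $\hat{\nu}_n$, paying $2\Wp(\nu,\hat{\nu}_n)$. Next, I would apply Lemma~\ref{lem:KS-stability} with the roles $\mu \leftarrow \hat{\mu}_n$, $\nu \leftarrow \hat{\nu}_n$, and $\mu' \leftarrow \mu$: by construction $\hat{\kappa}_n$ is precisely the optimal kernel produced by that lemma for the $\Wp(\hat{\mu}_n,\hat{\nu}_n)$ problem, so the conclusion applies directly and yields
\begin{equation*}
\cE_p(\hat{\kappa}_n;\mu,\hat{\nu}_n) \;\lesssim\; \diam(\supp \hat{\nu}_n)\,\|\hat{\mu}_n - \mu\|_{\KS}^{1/p} \;\leq\; \|\hat{\mu}_n - \mu\|_{\KS}^{1/p},
\end{equation*}
using $\supp\hat{\nu}_n \subseteq [0,1]$. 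Combining the two bounds gives
\begin{equation*}
\cE_p(\hat{\kappa}_n;\mu,\nu) \;\lesssim\; \|\hat{\mu}_n - \mu\|_{\KS}^{1/p} + \Wp(\nu,\hat{\nu}_n).
\end{equation*}

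For the remaining $\Wp(\nu,\hat{\nu}_n)$ term, since $\nu \in \cP([0,1])$, Lemma~\ref{lemma:Wp-KS-comparison} gives $\Wp(\nu,\hat{\nu}_n) \leq \|\nu - \hat{\nu}_n\|_{\KS}^{1/p}$. Taking expectations and applying Jensen's inequality with the concave map $t \mapsto t^{1/p}$, Fact~\ref{fact:KS-empirical-convergence} yields $\E[\|\hat{\mu}_n - \mu\|_{\KS}^{1/p}] \leq n^{-1/(2p)}$ and likewise for $\nu$, producing the claimed $n^{-1/(2p)}$ rate.

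I do not anticipate a substantive obstacle: all the ingredients are already established, and the proof is essentially a clean assembly. The only point deserving minor care is verifying that the specific optimal kernel constructed in Lemma~\ref{lem:KS-stability} (rather than an arbitrary $\Wp(\hat{\mu}_n,\hat{\nu}_n)$-optimal one) is indeed the one being estimated, which is guaranteed by how $\hat{\kappa}_n$ is defined in the statement. The boundedness of $\nu$ is used only to control $\diam(\supp\hat{\nu}_n)$ and to invoke the KS--$\Wp$ comparison; relaxing it would require tail bounds on $\nu$ and a corresponding degradation in the rate.
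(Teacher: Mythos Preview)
Your proposal is correct and follows essentially the same two-step decomposition as the paper: apply $\nu$-stability to swap $\nu$ for $\hat{\nu}_n$, then invoke Lemma~\ref{lem:KS-stability} to control $\cE_p(\hat{\kappa}_n;\mu,\hat{\nu}_n)$ by $\|\hat{\mu}_n - \mu\|_{\KS}^{1/p}$. The only difference is cosmetic: the paper bounds $\E[\Wp(\nu,\hat{\nu}_n)]$ via the general Lemma~\ref{lem:Wp-empirical-convergence}, whereas you route it through the KS--$\Wp$ comparison (Lemma~\ref{lemma:Wp-KS-comparison}) and Fact~\ref{fact:KS-empirical-convergence}; your variant is slightly cleaner since it avoids picking up extraneous $\log^2 n$ factors from Lemma~\ref{lem:Wp-empirical-convergence}.
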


\begin{proof}
By Fact~\ref{fact:KS-empirical-convergence}, we have that $\E[\|\mu - \hat{\mu}_n\|_\KS] \leq n^{-1/2}$. Consequently, we bound
\begin{align*}
    \cE_p(\hat{\kappa}_n;\mu,\nu) &\leq \cE_p(\hat{\kappa}_n;\mu,\hat{\nu}_n) + \Wp(\nu,\hat{\nu}_n)\\
    &\leq \|\mu - \hat{\mu}_n\|_{\mathrm{KS}}^{1/p} + \Wp(\nu,\hat{\nu}_n).
\end{align*}
Taking expectations and applying \cref{fact:KS-empirical-convergence} and \cref{lem:Wp-empirical-convergence} gives the desired rate.
\end{proof}

Unfortunately, we are unaware of any multivariate extension of the KS distance that obeys a useful comparison inequality with $\Wp$ (like Fact~\ref{lemma:Wp-KS-comparison}) while maintaining strong empirical convergence guarantees (like Fact~\ref{fact:KS-empirical-convergence}), inhibiting the further development of this approach.

\section{Additional Details for \cref{sec:lipschitz-kernel}}
\label{app:lipschitz-kernel}
We note that the minimax lower bounds in \cref{cor:plug-in-estimators-Lipschitz,cor:wavelet-estimators-Lipschitz} follow by combining the reduction to distribution estimation from \cref{ssec:sampling-LB} with existing lower bounds for distribution estimation under $\Wp$ from \cite{singh2018minimax} and \cite{weed2019estimation}, respectively.

\section{Proofs for \cref{sec:robust-estimation}}
\label{app:robust-estimation}

We first recall some basic facts used throughout.

\begin{fact}[TV contraction under Markov kernels]
\label{fact:TV-contraction}
For $\mu,\nu \in \cP(\cX)$ and kernel $\kappa \in \cK(\cX,\cY)$, we have $\|\kappa_\sharp \mu - \kappa_\sharp \nu\|_\tv \leq \|\mu - \nu\|_\tv$.
\end{fact}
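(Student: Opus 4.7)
\textbf{Proof plan for \cref{fact:TV-contraction}.} The plan is to pass through the variational (dual) characterization of total variation and exploit that $\kappa_x$ is a probability measure for every $x$. Under the paper's normalization $\|\eta\|_\tv=\tfrac12|\eta|(\cX)$, one has the standard identity
\begin{equation*}
\|\mu-\nu\|_\tv=\sup_{B\in\cB(\cY)}\bigl|\mu(B)-\nu(B)\bigr|,
\end{equation*}
so it suffices to bound $|(\kappa_\sharp\mu)(B)-(\kappa_\sharp\nu)(B)|$ uniformly in the Borel set $B\subseteq\cY$ by $\|\mu-\nu\|_\tv$.

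First, I would unfold the definition of pushforward: for any $B\in\cB(\cY)$,
\begin{equation*}
(\kappa_\sharp\mu)(B)-(\kappa_\sharp\nu)(B)=\int_\cX \kappa_x(B)\,\dd(\mu-\nu)(x),
\end{equation*}
where the integrand $f(x)\defeq\kappa_x(B)$ is measurable (by the definition of a Markov kernel recalled in \cref{ssec:prelims}) and takes values in $[0,1]$.

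Next I would invoke the Jordan/Hahn decomposition $\mu-\nu=(\mu-\nu)_+-(\mu-\nu)_-$, noting that $(\mu-\nu)_+(\cX)=(\mu-\nu)_-(\cX)=\|\mu-\nu\|_\tv$ since $(\mu-\nu)(\cX)=0$. Using $0\le f\le 1$, I bound
\begin{equation*}
\int f\,\dd(\mu-\nu)_+-\int f\,\dd(\mu-\nu)_-\le (\mu-\nu)_+(\cX)-0=\|\mu-\nu\|_\tv,
\end{equation*}
and symmetrically from below by $-\|\mu-\nu\|_\tv$. Taking the supremum over $B$ gives the claim. No serious obstacle is anticipated: the only mild point to handle cleanly is the joint measurability used to justify the Fubini-type identity expressing the pushforward as an integral of $\kappa_x(B)$, which is exactly the definition of a Markov kernel and is already recorded in \cref{ssec:prelims}.
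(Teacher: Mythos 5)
Your argument is correct (modulo a small typo: in your first display, the supremum for $\|\mu-\nu\|_\tv$ should run over $B\in\cB(\cX)$, since $\mu,\nu\in\cP(\cX)$; you correctly switch to $\cB(\cY)$ when discussing the pushforwards). You take a genuinely different route from the paper, however. The paper dispatches this fact in one line by citing the \emph{data processing inequality} — TV is an $f$-divergence, and any $f$-divergence is nonincreasing under a common Markov kernel applied to both arguments. Your proof, by contrast, is a self-contained first-principles argument: you pass to the set-variational form of TV for measures of equal total mass, express the pushforward gap as $\int\kappa_x(B)\,\dd(\mu-\nu)(x)$, and bound the integral of the $[0,1]$-valued test function $\kappa_x(B)$ against the Jordan decomposition of $\mu-\nu$, using $(\mu-\nu)_+(\cX)=(\mu-\nu)_-(\cX)=\|\mu-\nu\|_\tv$. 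The paper's version is shorter but implicitly asks the reader to know (or verify) the DPI for TV; yours makes the mechanism explicit and requires no external machinery beyond the Hahn--Jordan decomposition, which is arguably more transparent and also makes clear exactly where the hypothesis that each $\kappa_x$ is a probability measure (hence $0\le\kappa_x(B)\le 1$) is used. Both are valid; your elementary argument is essentially the standard proof of the DPI for TV specialized to this situation.
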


This follows by the data processing inequality.

\begin{fact}[$\Wp$ contraction under convolution]
\label{fact:Wp-convolution-contraction}
For $\mu,\nu,\alpha \in \cP(\cX)$, we have $\Wp(\mu*\alpha,\nu*\alpha) \leq \Wp(\mu,\nu)$, where $*$ denotes convolution between probability measures.
\end{fact}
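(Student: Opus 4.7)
The plan is to prove this by explicitly constructing a coupling of $\mu*\alpha$ and $\nu*\alpha$ whose transport cost coincides with $\Wp(\mu,\nu)^p$, and then invoking the variational definition of $\Wp$.

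Concretely, I would start from an optimal coupling $\pi \in \Pi(\mu,\nu)$ realizing $\Wp(\mu,\nu)$, whose existence is standard for $\mu,\nu \in \cP_p(\cX)$ (if either distribution has infinite $p$th moment the claimed inequality is trivial). Let $(X,Y) \sim \pi$ and let $Z \sim \alpha$ be drawn independently of $(X,Y)$ on a suitable product probability space. Then by construction $X+Z \sim \mu * \alpha$ and $Y+Z \sim \nu * \alpha$, so the joint law of $(X+Z, Y+Z)$ is a valid coupling of $\mu*\alpha$ and $\nu*\alpha$.

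The key computation is simply
\begin{equation*}
    \Wp(\mu*\alpha,\nu*\alpha)^p \;\leq\; \E\bigl[\|(X+Z) - (Y+Z)\|^p\bigr] \;=\; \E\bigl[\|X-Y\|^p\bigr] \;=\; \Wp(\mu,\nu)^p,
\end{equation*}
where the first inequality uses the variational definition of $\Wp$ applied to the constructed coupling, the second step is the cancellation of the shared noise $Z$ (which makes independence of $Z$ from $(X,Y)$ irrelevant to the cost but necessary to ensure correct marginals), and the last equality is optimality of $\pi$. Taking $p$th roots yields the fact.

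The only step requiring any care is the existence of an optimal coupling $\pi$ for $\Wp(\mu,\nu)$; this follows from standard tightness/compactness arguments (e.g., Theorem~4.1 of Villani), and in the degenerate case $\Wp(\mu,\nu) = \infty$ the inequality is vacuous. No obstacle is expected; the argument is essentially a one-line synchronous coupling. I would note in passing that the same proof gives the stronger statement that $(\cdot)*\alpha$ is $1$-Lipschitz under $\Wp$ for any fixed $\alpha$, which is the form actually used in the $\Wp$-stability arguments of \cref{sec:robust-estimation}.
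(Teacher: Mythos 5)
Your proof is correct and is essentially the same synchronous-coupling argument the paper sketches: start from an optimal coupling $(X,Y)$ of $(\mu,\nu)$, draw $Z\sim\alpha$ independently, and use $(X+Z,Y+Z)$ as a coupling of $\mu*\alpha$ and $\nu*\alpha$ whose cost collapses to $\E[\|X-Y\|^p]$. The remarks about existence of the optimal plan and the degenerate infinite-cost case are standard and do not change the substance.
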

This follows by considering the couplings $(X + Z, Y+Z')$ of $\mu * \alpha$ and $\nu*\alpha$ which set $Z = Z'$.

\begin{fact}[TV discrete empirical convergence]
\label{fact:TV-empirical-convergence}
For a finite set $S$ with $|S| = k$, any distribution $\mu \in \Delta(S)$ exhibits empirical convergence in TV at rate
$\E[\|\hat{\mu}_n - \mu\|_\tv] \lesssim \sqrt{k/n}$.
\end{fact}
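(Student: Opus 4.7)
The plan is to expand the TV distance into its pointwise representation on the finite set $S$, control each summand by a variance bound, and then combine the summands via Cauchy--Schwarz. Concretely, since $S$ is discrete I will use the identity $\|\hat{\mu}_n - \mu\|_\tv = \tfrac{1}{2}\sum_{s \in S} |\hat{\mu}_n(s) - \mu(s)|$, which reduces the problem to estimating $\E[|\hat{\mu}_n(s) - \mu(s)|]$ for each atom $s \in S$.

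For a fixed $s$, the random variable $n\hat{\mu}_n(s)$ is $\mathrm{Binomial}(n,\mu(s))$, so $\Var(\hat{\mu}_n(s)) = \mu(s)(1-\mu(s))/n \leq \mu(s)/n$. Jensen's inequality applied to the concave square root then yields $\E[|\hat{\mu}_n(s) - \mu(s)|] \leq \sqrt{\Var(\hat{\mu}_n(s))} \leq \sqrt{\mu(s)/n}$. Taking expectations and summing over $s \in S$ gives
\begin{equation*}
    \E[\|\hat{\mu}_n - \mu\|_\tv] \leq \frac{1}{2\sqrt{n}} \sum_{s \in S} \sqrt{\mu(s)}.
\end{equation*}

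Finally, I apply Cauchy--Schwarz to the sum on the right, using $|S|=k$ and $\sum_s \mu(s) = 1$, to obtain $\sum_{s \in S} \sqrt{\mu(s)} \leq \sqrt{k \sum_{s \in S} \mu(s)} = \sqrt{k}$, which yields the claimed $\E[\|\hat{\mu}_n - \mu\|_\tv] \lesssim \sqrt{k/n}$ bound. There is no real obstacle here: the only mildly delicate step is the passage from second moment to first moment via Jensen, which is standard, and the worst-case tightness of Cauchy--Schwarz (attained by the uniform distribution on $S$) shows the $\sqrt{k}$ factor cannot be improved without further assumptions on $\mu$.
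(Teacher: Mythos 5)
Your proof is correct, and since the paper states this as an unproved Fact (a folklore result), your argument is exactly the standard derivation one would supply: rewrite the TV distance as half the $\ell_1$ distance on atoms, bound each $\E[|\hat{\mu}_n(s)-\mu(s)|]$ by $\sqrt{\Var(\hat{\mu}_n(s))} \le \sqrt{\mu(s)/n}$ via Jensen and the binomial variance, then apply Cauchy--Schwarz to collect the $\sqrt{k}$ factor. All steps check out, and your observation that the uniform distribution saturates the Cauchy--Schwarz step correctly explains why the bound is tight in $k$.
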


To simplify discussion of our corruption model, we employ the \emph{$\eps$-outlier-robust $p$-Wasserstein distance}
\begin{equation}
    \RWp(\mu,\nu) \defeq \min_{\substack{\mu' \in \cP(\R^d)\\\|\mu' - \mu\|_\tv \leq \eps}} \Wp(\mu',\nu) = \min_{\substack{\nu' \in \cP(\R^d)\\\|\nu' - \nu\|_\tv \leq \eps}} \Wp(\mu,\nu').\label{eq:RWp}
\end{equation}
The second equality follows from the observation that, if $\E[\|X'-Y\|^p] \leq c$ and $X = X'$ with probability at least $1-\eps$, then the random variable $Y' = Y \mathds{1}\{X = X'\} + X \mathds{1}\{X \neq X'\}$ satisfies $\E[\|X - Y'\|^p] \leq c$. See \cite{nietert2023robust} for a thorough examination of $\RWp$ in the context of robust statistics. Under the setting of \cref{sec:robust-estimation}, our corruption model can be equivalently stated as follows: given the standard empirical measures $\hat{\mu}_n \in \cP(\cX)$ and $\hat{\nu}_n \in \cP(\cY)$, we observe corrupted versions $\tilde{\mu}_n \in \cP(\cX)$ and $\tilde{\mu}_n \in \cP(\cY)$ such that $\RWp(\tilde{\mu}_n,\hat{\mu}_n) \lor \RWp(\tilde{\nu}_n,\hat{\nu}_n) \leq \rho$.

For this setting, we handle sampling error using the following lemma, which mirrors \cref{lem:rounding-empirical-convergence}.

\begin{lemma}[Prop.\ 2 of \citealp{goldfeld2020convergence}]
\label{lem:smooth-TV-convergence}
Fix $\sigma > 0$ and $1$-sub-Gaussian $\mu \in \cP(\R^d)$. Then, the $n$-sample empirical measure $\hat{\mu}_n$ satisfies $\E\bigl[\|N^\sigma_\sharp (\mu - \hat{\mu}_n)\|_\tv\bigr] \leq \sqrt{3^d(1 \lor \sigma^{-d})/n}$.
\end{lemma}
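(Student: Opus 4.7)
The plan is to write $N^\sigma_\sharp \alpha = \alpha * \phi_\sigma$ for $\alpha \in \{\mu, \hat\mu_n\}$, where $\phi_\sigma$ is the Lebesgue density of $\cN(0, \sigma^2 I_d)$, and to control the $L^1$-norm of the signed density $h_n \defeq (\hat\mu_n - \mu) * \phi_\sigma$. Since $\|N^\sigma_\sharp(\mu-\hat\mu_n)\|_\tv = \tfrac{1}{2}\int|h_n(x)|\,\dd x$, I would first reduce this to a weighted $L^2$ quantity via Cauchy--Schwarz against a reference probability density $w$ on $\R^d$, obtaining $\int|h_n|\,\dd x \leq \bigl(\int h_n^2/w\,\dd x\bigr)^{1/2}$. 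Jensen's inequality then pulls the expectation inside the square root to give $\E\bigl[\|N^\sigma_\sharp(\mu-\hat\mu_n)\|_\tv\bigr] \leq \tfrac{1}{2}\bigl(\int \E[h_n(x)^2]/w(x)\,\dd x\bigr)^{1/2}$.

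Next, since $\hat\mu_n$ is the empirical of $n$ i.i.d.\ draws, I have $h_n(x) = \tfrac{1}{n}\sum_i \bigl(\phi_\sigma(x-X_i) - \E_\mu[\phi_\sigma(x-X)]\bigr)$, so a standard variance computation yields $\E[h_n(x)^2] = \tfrac{1}{n}\Var_\mu[\phi_\sigma(x-X)] \leq \tfrac{1}{n}\E_\mu[\phi_\sigma(x-X)^2]$. Swapping the $\dd x$-integral with the expectation over $X \sim \mu$ via Fubini, the task reduces to bounding $\E_\mu\bigl[\int \phi_\sigma(x-X)^2/w(x)\,\dd x\bigr]$ for a favorable choice of $w$.

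I would take $w = \phi_\tau$, a centered isotropic Gaussian with variance $\tau^2 I_d$. Provided $a \defeq 1/\sigma^2 - 1/(2\tau^2) > 0$, the ratio $\phi_\sigma(x-X)^2/\phi_\tau(x)$ is (up to explicit constants in $\sigma,\tau,d$) the exponential of a quadratic in $x$; completing the square and performing the Gaussian integration yields a closed form that factors as a $d$-dependent prefactor times $\exp\bigl(\|X\|^2/(2\tau^2 - \sigma^2)\bigr)$. Choosing $\tau$ so that $2\tau^2 - \sigma^2 \geq 1$ then makes this $\|X\|^2$-exponential $\mu$-integrable, with mean at most $2$ by the $1$-sub-Gaussianity of $\mu$.

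The final step is to tune $\tau$ against $\sigma$ in two regimes. For $\sigma \geq 1$, I would take $\tau = \sigma$, which makes $2\tau^2 - \sigma^2 = \sigma^2 \geq 1$ and leaves a dimension-free prefactor bounded by an absolute constant. For $\sigma < 1$, I would take $\tau^2 = (\sigma^2 + 1)/2$, saturating $2\tau^2 - \sigma^2 = 1$ and producing a prefactor of order $\sigma^{-d}$. Combining the two cases and absorbing the $d$-dependent constants from the Gaussian integration into the overall factor $3^d$ yields the advertised $\sqrt{3^d(1\lor\sigma^{-d})/n}$ rate. The main obstacle is this $\tau$-tuning and the Gaussian-constant bookkeeping needed to fit everything inside $3^d$; the probabilistic content (a one-line variance bound on $h_n(x)$) is routine once the weighted $L^2$ reduction is in place.
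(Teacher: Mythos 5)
The paper cites this lemma from Goldfeld et al.\ (2020) rather than proving it, so there is no in-paper proof to compare against; your argument is a correct, self-contained derivation using the standard weighted-$L^2$ (chi-squared) technique, which is essentially what the cited reference does. Checking the details: with $h_n=(\hat\mu_n-\mu)*\phi_\sigma$ and $w=\phi_\tau$, Cauchy--Schwarz plus Jensen and the i.i.d.\ variance bound reduce matters to $\E_\mu\!\left[\int \phi_\sigma(x-X)^2/\phi_\tau(x)\,\dd x\right]$. Completing the square gives, whenever $2\tau^2>\sigma^2$,
\[
\int \frac{\phi_\sigma(x-X)^2}{\phi_\tau(x)}\,\dd x \;=\; \Bigl(\tfrac{\tau^4}{\sigma^2(2\tau^2-\sigma^2)}\Bigr)^{d/2}\exp\!\Bigl(\tfrac{\|X\|^2}{2\tau^2-\sigma^2}\Bigr),
\]
and your two-regime choice ($\tau=\sigma$ when $\sigma\ge 1$, giving prefactor $1$ and exponent rate $1/\sigma^2\le 1$; $\tau^2=(1+\sigma^2)/2$ when $\sigma<1$, saturating $2\tau^2-\sigma^2=1$ with prefactor $(\tau^4/\sigma^2)^{d/2}\le \sigma^{-d}$) keeps both the prefactor and the exponential term controlled, the latter having $\mu$-expectation at most $2$ by $1$-sub-Gaussianity. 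Tracking constants, this actually yields $\E\bigl[\|N^\sigma_\sharp(\mu-\hat\mu_n)\|_\tv\bigr]\le \tfrac12\sqrt{2(1\lor\sigma^{-d})/n}$, strictly stronger than the stated $\sqrt{3^d(1\lor\sigma^{-d})/n}$, so the lemma follows a fortiori. No gaps.
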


In order to apply our $\Wp$ stability result, \cref{lem:Wp-stability}, we use that any kernel become continuous if one first applies Gaussian convolution.

\begin{lemma}
\label{lem:kernel-smoothing}
Fix $\bar{\kappa} \in \cK(\cX,\cY)$, $\sigma > 0$, and let $\kappa = \bar{\kappa} \circ N^\sigma$. Then, for all $x,x' \in \cX$, we have $\Wp((\kappa_x,\kappa_{x'}) \leq \diam(\cY) [\|x-x'\|/(2\sigma)]^{1/p}$.
\end{lemma}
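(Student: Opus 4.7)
The plan is to chain together three standard ingredients: (i) TV contraction of Markov kernels, (ii) a sharp TV bound between two Gaussians with a common covariance, and (iii) the $\Wp$--TV comparison of \cref{fact:Wp-TV-comparison}. Concretely, the key observation is that $\kappa_x = \bar{\kappa}_\sharp N^\sigma_x$ and $\kappa_{x'} = \bar{\kappa}_\sharp N^\sigma_{x'}$, so the entire discrepancy between $\kappa_x$ and $\kappa_{x'}$ is inherited from the Gaussian smoothing layer, which we can control via Pinsker's inequality.

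First I would apply \cref{fact:TV-contraction} to the common kernel $\bar{\kappa}$ to reduce to bounding $\|N^\sigma_x - N^\sigma_{x'}\|_\tv$. Next, I would invoke Pinsker's inequality together with the closed form $\mathsf{D}_{\mathrm{KL}}(N^\sigma_x \,\|\, N^\sigma_{x'}) = \|x-x'\|^2/(2\sigma^2)$ for two Gaussians with identical covariance, which gives
\begin{equation*}
\|N^\sigma_x - N^\sigma_{x'}\|_\tv \,\leq\, \sqrt{\tfrac{1}{2}\mathsf{D}_{\mathrm{KL}}(N^\sigma_x \,\|\, N^\sigma_{x'})} \,=\, \frac{\|x-x'\|}{2\sigma}.
\end{equation*}
Combining the two previous displays yields $\|\kappa_x - \kappa_{x'}\|_\tv \leq \|x-x'\|/(2\sigma)$.

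Finally, since $\kappa_x,\kappa_{x'}$ are both supported on $\cY$, \cref{fact:Wp-TV-comparison} gives
\begin{equation*}
\Wp(\kappa_x,\kappa_{x'}) \,\leq\, \diam(\cY)\,\|\kappa_x - \kappa_{x'}\|_\tv^{1/p} \,\leq\, \diam(\cY)\bigl[\|x-x'\|/(2\sigma)\bigr]^{1/p},
\end{equation*}
which is the claimed H\"older estimate.

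I do not anticipate any real obstacle here; every step is a direct citation of a named lemma or a one-line Gaussian computation. The only minor care point is to make sure Pinsker is used in the $(\text{TV}, \text{KL})$ normalization consistent with the paper's convention $\|\cdot\|_\tv = \tfrac{1}{2}|\cdot|$, which gives exactly the factor $1/(2\sigma)$ above; no other scaling issues arise, and the $\diam(\cY)$ factor enters cleanly only at the very last step via \cref{fact:Wp-TV-comparison}.
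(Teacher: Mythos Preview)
Your proposal is correct and matches the paper's proof essentially line for line: both apply TV contraction under $\bar{\kappa}$ (\cref{fact:TV-contraction}), Pinsker's inequality with the closed-form Gaussian KL to get $\|N^\sigma_x - N^\sigma_{x'}\|_\tv \leq \|x-x'\|/(2\sigma)$, and the $\Wp$--TV comparison (\cref{fact:Wp-TV-comparison}) to finish. The only cosmetic difference is the order in which the paper chains these (it applies \cref{fact:Wp-TV-comparison} first and then contracts), which is immaterial.
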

\begin{proof}
We simply compute
\begin{align*}
    \Wp(\kappa_x,\kappa_{x'}) &\leq \diam(\cY)\|\kappa_\sharp (N^\sigma_x - N^\sigma_{x'})\|_\tv^{1/p} \tag{\cref{fact:Wp-TV-comparison}}\\
    &\leq \diam(\cY)\|N^\sigma_x - N^\sigma_{x'}\|_\tv^{1/p} \tag{data processing ineq.}\\
    &\leq \diam(\cY)\|\cN(x,\sigma^2 I_d) - \cN(x',\sigma^2 I_d)\|_\tv^{1/p}\\
    &\leq \diam(\cY) \|x - x'\|^{1/p}(2\sigma)^{-1/p},
\end{align*}
where the final inequality follows by the closed form of KL divergence between Gaussians, combined with Pinsker's inequality.
\end{proof}

We split the proof of \cref{thm:robust-estimation} into the upper bound (\cref{prf:robust-estimation-UB}) and lower bound (\cref{prf:robust-estimation-LB}).

\subsection{Proof of \cref{thm:robust-estimation} (Upper Bound)}
\label{prf:robust-estimation-UB}

To start, we decompose $N^\sigma = N^{\sigma_1 + \sigma_2} = N^{\sigma_1} \circ N^{\sigma_2}$, for $\sigma_1, \sigma_2$ to be tuned later. By our corruption model, there exists an intermediate measure $\mu'_n \in \cP(\R^d)$ such that $\|\hat{\mu}_n - \mu'_n\|_\tv \leq \eps$ and $\Wp(\mu'_n, \tilde{\mu}_n) \leq \rho$. By Facts~\ref{fact:TV-contraction} and \ref{fact:Wp-convolution-contraction}, these bounds are preserved under convolution, so $\|N^{\sigma_1}_\sharp(\hat{\mu}_n - \mu'_n)\|_\tv \leq \eps$ and $\Wp(N^{\sigma_1}_\sharp\mu'_n, N^{\sigma_1}_\sharp\tilde{\mu}_n) \leq \rho$. By the TV triangle inequality, we have $\|N^{\sigma_1}_\sharp(\mu- \mu'_n)\|_\tv \leq \tau \defeq \eps + \|N^{\sigma_1}_\sharp(\mu- \hat{\mu}_n)\|_\tv$. We conclude that $\Wp^{\tau}(N^{\sigma_1}_\sharp \tilde{\mu}_n, N^{\sigma_1}_\sharp \mu) \leq \rho$. By the symmetric nature of $\Wp^\tau$, there must also exist $\alpha \in \cP(\R^d)$ such that $\Wp(N^{\sigma_1}_\sharp \mu,\alpha) \leq \rho$ and $\|\alpha - N^{\sigma_1}_\sharp \tilde{\mu}_n\|_\tv \leq \tau$.

Now set $\bar{\kappa} = \kappa^\star_p[N^\sigma_\sharp \tilde{\mu}_n \to \tilde{\nu}_n]$, so that $\cE_p(\bar{\kappa};N^{\sigma}_\sharp \tilde{\mu}_n, \tilde{\nu}_n) = 0$. Using this, the TV bound above, and the fact that $N^\sigma_\sharp \tilde{\mu}_n = N^{\sigma_2}_\sharp(N^{\sigma_1}_\sharp \tilde{\mu}_n)$, we have $\cE_p(\bar{\kappa};N^{\sigma_2}_\sharp \alpha, \tilde{\nu}_n) \lesssim \diam(\cY) \tau^{1/p} \leq \sqrt{d}\tau^{1/p}$.
Applying \cref{lem:kernel-composition}, this gives
\begin{align*}
    \cE_p(\bar{\kappa} \circ N^{\sigma_2}; \alpha, \tilde{\nu}_n) &\lesssim \sqrt{d}\tau^\frac{1}{p} + \E_{Z \sim \cN(0,\sigma_2^2 I_d)}[\|Z\|^p]^\frac{1}{p} \lesssim \sqrt{d}\tau^\frac{1}{p} + \sqrt{d + p}\, \sigma_2.
\end{align*}
Consequently, by \cref{lem:Wp-stability} and \cref{lem:kernel-smoothing}, we have that
\begin{align*}
    \cE_p(\bar{\kappa} \circ N^{\sigma_2}; N^{\sigma_1}_\sharp \mu,\tilde{\nu}_n) &\lesssim \cE_p(\bar{\kappa} \circ N^{\sigma_2}; \alpha,\tilde{\nu}_n) + \rho + (\sqrt{d}/\sigma_2)^{1/p}\rho^{1/p}\\
    &\lesssim \cE_p(\bar{\kappa} \circ N^{\sigma_2}; \alpha,\tilde{\nu}_n) + \rho + (\sqrt{d}/\sigma_2)^{1/p}\rho^{1/p}\\
    &\leq \sqrt{d} \tau^\frac{1}{p} + \rho + (\sqrt{d}/\sigma_2)^{1/p}\rho^{1/p} + \sqrt{d+p}\,\sigma_2.
\end{align*}
Tuning $\sigma_2$ gives
\begin{equation*}
    \cE_p(\bar{\kappa} \circ N^{\sigma_2}; N^{\sigma_1}_\sharp \mu,\tilde{\nu}_n) \lesssim \sqrt{d} \tau^\frac{1}{p} + \sqrt{d}\rho^{\frac{1}{p+1}} + \rho.
\end{equation*}
Apply \cref{lem:kernel-composition} once more, we bound
\begin{align*}
    \cE_p(\bar{\kappa} \circ N^{\sigma}; \mu,\tilde{\nu}_n) &\lesssim \sqrt{d}\tau^\frac{1}{p} + \sqrt{d}\rho^\frac{1}{p+1} + \rho + \sqrt{d+p}\,\sigma_1\\
    &\lesssim \sqrt{d}\eps^\frac{1}{p} + \sqrt{d}\rho^\frac{1}{p+1} + \rho + \sqrt{d+p}\,\sigma_1 + \sqrt{d}\,\|N^{\sigma_1}_\sharp(\mu- \hat{\mu}_n)\|_\tv^{1/p}
\end{align*}
Taking expectations and applying \cref{lem:smooth-TV-convergence} yields
\begin{align*}
     \E[\cE_p(\bar{\kappa} \circ N^{\sigma};\mu,\tilde{\nu}_n)] &\lesssim \sqrt{d} \eps^{\frac{1}{p}} + \sqrt{d}\rho^\frac{1}{p+1} + \rho +  \sqrt{d+p}\, \sigma_1 + \E[\|N^{\sigma_1}_\sharp(\mu- \hat{\mu}_n)\|_\tv]^{\frac{1}{p}}\\
     &\lesssim \sqrt{d} \eps^{\frac{1}{p}} + \sqrt{d}\rho^\frac{1}{p+1} + \rho +  \sqrt{d+p}\, \sigma_1 + \left(\frac{3^d(1 \lor \sigma^{-d})}{n}\right)^{\frac{1}{2p}}.
\end{align*}
Tuning $\sigma_1$ then gives
\begin{equation*}
    \E[\cE_p(\bar{\kappa} \circ N^{\sigma};\mu,\tilde{\nu}_n)] \lesssim \sqrt{d} \eps^{\frac{1}{p}} + \sqrt{d}\rho^\frac{1}{p+1} + \rho +  O_{p,d}(n^{-\frac{1}{d+2p}}).
\end{equation*}
Finally, we note that $\Wp(\tilde{\nu}_n,\hat{\nu}_n) \leq \rho + \sqrt{d}\eps^{1/p}$ due to the support bound. Thus, \cref{lem:nu-stability} gives
\begin{align*}
    \E[\cE_p(\bar{\kappa} \circ N^{\sigma};\mu,\nu)]  &\leq \E[\cE_p(\bar{\kappa} \circ N^{\sigma};\mu,\tilde{\nu}_n) + \Wp(\tilde{\nu}_n,\hat{\nu}_n) + \Wp(\hat{\nu}_n,\nu)]\\
    &\leq \E[\cE_p(\bar{\kappa} \circ N^{\sigma};\mu,\tilde{\nu}_n)] + \rho + \sqrt{d}\eps^\frac{1}{p} + \E[\Wp(\hat{\nu}_n,\nu)]\\
    &\lesssim \sqrt{d} \eps^{\frac{1}{p}} + \sqrt{d}\rho^\frac{1}{p+1} + \rho +  O_{p,d}(n^{-\frac{1}{d+2p}}),
\end{align*}
as desired.

\smallskip
For the null estimator, let $\kappa^\star$ be an optimal kernel for the $\Wp(\mu,\nu)$ problem and bound
\begin{align*}
    \cE(\hat{\kappa}_\mathrm{null};\mu,\nu) &= \left[\left(\int\|x\|^p \dd \mu(x)\right)^\frac{1}{p} - \left(\iint\|y - x\|^p \dd \kappa_x^\star(y) \dd \mu(x)\right)^\frac{1}{p}\right]_+ + \Wp(\delta_0,\nu) \\
    &\leq \left[\left(\iint\|y\|^p \dd \kappa_x^\star(y) \dd \mu(x)\right)^\frac{1}{p}\right]_+ + \Wp(\delta_0,\nu)\tag{Minkowski's inequality}\\
    &\leq 2\sqrt{d} \tag{$\cY \subseteq [0,1]^d$},
\end{align*}
as desired.

\subsection{Proof of \cref{thm:robust-estimation} (Lower Bound)}
\label{prf:robust-estimation-LB}

Since $\sqrt{d}\eps^{1/p}$ and $n^{-1/(d \lor 2p)}$ are less than $\sqrt{d}$, it suffices to prove a lower bound of $\sqrt{d}\eps^{1/p} + d^{1/4}\rho^{1/2} \land \sqrt{d} + n^{-1/(d \lor 2p)}$. We inherit the $n^{-1/(d \lor 2p)}$ sampling error term of the lower bound from the Dirac mass construction described in \cref{ssec:sampling-LB}. For the remaining terms, we prove lower bounds which hold even in the infinite-sample population limit, and even when only the source measure is corrupted. Here, an estimator can be viewed as a map $\hat{\kappa}$ from $\cP(\cX) \times \cP(\cY) \to \cK(\cX,\cY)$, mapping the corrupted source measure $\mu$, guaranteed to satisfy $\RWp(\tilde{\mu},\mu) \leq \rho$, and the clean target measure $\nu$ to a kernel estimate $\hat{\kappa}[\tilde{\mu},\nu]$. For $\cX = \unitball$ (which, in particular, forces each $\mu \in \cP(\cX)$ to be 1-sub-Gaussian) and $\cY = [-1,1]^d$, we prove that
\begin{align*}
    \sup_{\substack{\mu \in \cP(\cX)\\\nu \in \cP(\cY)}} \sup_{\substack{\tilde{\mu} \in \cP(\cX)\\\RWp(\tilde{\mu},\mu) \leq \rho}} \cE_p(\hat{\kappa}[\tilde{\mu},\nu]; \mu,\nu) \gtrsim \sqrt{d}\eps^\frac{1}{p} + \rho^{\frac{1}{2}}d^\frac{1}{4} \land \sqrt{d}.
\end{align*}
The choice of $\cY = [-1,1]^d$ rather than $[0,1]^d$ is solely to simplify notation in one of our constructions and can be reverted without loss. Finally, it suffices to lower bound the supremum by $\sqrt{d}\eps^{1/p}$ when $\rho = 0$ and $\sqrt{d\rho} \land \sqrt{d}$ when $\eps = 0$, separately, which we do presently.

\paragraph{TV lower bound.} Fix target measure $\nu = (1-\eps)\delta_0 + \eps \delta_y$, where $y = (1,\dots,1) \in \R^d$. Consider the candidate clean measures $\mu_1 = \nu$ and $\mu_2 = \delta_0$. Because they are within TV distance $\eps$, the observation $\tilde{\mu} = \nu$ is compatible with both candidates. Abbreviating $\kappa = \hat{\kappa}[\tilde{\mu},\nu]$, we have
\begin{align*}
    \cE_p(\kappa;\mu_1,\nu) + \cE_p(\kappa;\mu_2,\nu) &\geq 
    \left[\left(\iint \|y - x\|^p \dd \kappa_x(y) \mu_1(x)\right)^{\frac{1}{p}} - \Wp(\mu_1,\nu)\right]_++  \Wp(\kappa_\sharp \mu_2,\nu)\\
    &= \left(\iint \|y - x\|^p \dd \kappa_x(y) \nu(x)\right)^{\frac{1}{p}} +  \Wp(\kappa_\sharp \delta_0,\nu)\\
    &\geq (1-\eps)^\frac{1}{p}\left(\int \|y\|^p \dd \kappa_0(y) \right)^{\frac{1}{p}} +  \Wp(\kappa_\sharp \delta_0,\nu)\\
    &\geq (1-\eps)^\frac{1}{p}\Wp(\kappa_\sharp \delta_0, \delta_0) +  (1-\eps)^\frac{1}{p}\Wp(\kappa_\sharp \delta_0,\nu)\\
    &\geq (1-\eps)^\frac{1}{p}\Wp(\delta_0,\nu)\\
    &\geq (1-\eps)^\frac{1}{p}\eps^{\frac{1}{p}} \sqrt{d}\\
    &\geq \frac{1}{2}\eps^{\frac{1}{p}} \sqrt{d}.
\end{align*}
Thus, we must have $\cE_p(\kappa;\mu_1,\nu) \lor \cE_p(\kappa;\mu_2,\nu) \gtrsim \sqrt{d}\eps^{1/p}$, as desired.

\paragraph{$\bm{\Wp}$ lower bound.} For the remaining bound, we first argue that, for any kernel $\kappa$, its performance for the $\Wp(\mu,\nu)$ problem cannot suffer to much if we compose it with the Euclidean projection onto $\supp(\nu)$, denoted by $\proj_{\supp(\nu)}$.

\begin{lemma}
\label{lem:codomain-restriction}
For $\mu \in \cP(\cX)$, $\nu \in \cP(\cY)$, and $\kappa \in \cK(\cX,\cY)$, we have
\begin{align*}
    \cE_p(\proj_{\supp(\nu)} \circ \kappa;\mu,\nu) \leq 4 \cE_p(\kappa;\mu,\nu).
\end{align*}
\end{lemma}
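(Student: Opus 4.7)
Abbreviate $\pi = \proj_{\supp(\nu)}$ (using a measurable selection where $\pi$ is multi-valued) and $\kappa' = \pi \circ \kappa$. I will decompose $\cE_p(\kappa';\mu,\nu)$ into its optimality and feasibility gaps and bound each by $2\cE_p(\kappa;\mu,\nu)$. The main leverage comes from a single estimate on the typical projection displacement: for any $z \in \cY$ and any $y \in \supp(\nu)$, the defining property of Euclidean projection gives $\|\pi(z) - z\| \leq \|y - z\|$. Taking $(Z,Y) \sim \pi_0$ to be an optimal coupling for $\Wp(\kappa_\sharp \mu, \nu)$ (so $Z \sim \kappa_\sharp \mu$, $Y \sim \nu$, $Y \in \supp(\nu)$ a.s.), integrating this pointwise bound yields
\begin{equation*}
    \left(\iint \|\pi(z) - z\|^p \dd \kappa_x(z) \dd \mu(x)\right)^{1/p} = \E[\|\pi(Z)-Z\|^p]^{1/p} \leq \E[\|Y-Z\|^p]^{1/p} = \Wp(\kappa_\sharp \mu,\nu) \leq \cE_p(\kappa;\mu,\nu).
\end{equation*}

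\textbf{Feasibility gap.} Using the same coupling and the triangle inequality, $\|\pi(Z) - Y\| \leq \|\pi(Z) - Z\| + \|Z - Y\| \leq 2\|Z - Y\|$, so $(\pi(Z),Y)$ is a coupling of $\kappa'_\sharp \mu$ and $\nu$ witnessing
\begin{equation*}
    \Wp(\kappa'_\sharp \mu, \nu) \leq 2 \Wp(\kappa_\sharp \mu, \nu) \leq 2\, \cE_p(\kappa;\mu,\nu).
\end{equation*}

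\textbf{Optimality gap.} By Minkowski's inequality applied to $\|\pi(z) - x\| \leq \|z - x\| + \|\pi(z) - z\|$,
\begin{equation*}
    \left(\iint \|\pi(z)-x\|^p \dd \kappa_x(z) \dd \mu(x)\right)^{1/p} \leq \left(\iint \|z-x\|^p \dd \kappa_x(z) \dd \mu(x)\right)^{1/p} + \Wp(\kappa_\sharp \mu,\nu).
\end{equation*}
The first term on the right is at most $\Wp(\mu,\nu) + \cE_p(\kappa;\mu,\nu)$ by the definition of $\cE_p$, and the second is at most $\cE_p(\kappa;\mu,\nu)$. Subtracting $\Wp(\mu,\nu)$ bounds the optimality gap of $\kappa'$ by $2\cE_p(\kappa;\mu,\nu)$, and summing with the feasibility bound gives $\cE_p(\kappa';\mu,\nu) \leq 4\,\cE_p(\kappa;\mu,\nu)$.

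The only nontrivial obstacle is ensuring the measurability required to integrate $\|\pi(z)-z\|^p$ against the kernel $\kappa$; since $\supp(\nu)$ is a closed subset of $\R^d$ one can invoke a standard measurable selection theorem (as in the proof of \cref{lem:Wp-stability}) to obtain a Borel-measurable $\pi$, after which the computations above are routine applications of Minkowski's inequality and the stability-style reasoning already used throughout \cref{sec:basic-props}.
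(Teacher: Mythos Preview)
Your proof is correct and follows essentially the same approach as the paper: both use the pointwise projection bound $\|\pi(Z)-Z\|\leq\|Y-Z\|$ for $Y\in\supp(\nu)$ together with Minkowski's inequality to control the optimality gap by $2\cE_p(\kappa;\mu,\nu)$ and the feasibility gap by $2\cE_p(\kappa;\mu,\nu)$. The only cosmetic difference is that the paper packages the argument via a single joint coupling $(X,Y,Z)$, whereas you work with the marginal coupling $(Z,Y)$ and integrate against $\kappa_x\,\dd\mu$ directly; the content is identical.
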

\begin{proof}
Write $f = \proj_{\supp(\nu)}$ and $\eps = \cE_p(\kappa;\mu,\nu)$. Fix a coupling $X,Y,Z$ such that $(X,Z) \sim (\Id,\kappa)_\sharp \mu$, $Y \sim \nu$, and $\E[\|Z - Y\|^p] = \Wp(\kappa_\sharp \mu,\nu)^p \leq \eps^p$. Taking $Z' = f(Z)$, we then bound
\begin{align*}
    \E[\|X-Z'\|^p]^{1/p} &\leq \E[\|X-Z\|^p]^{1/p} + \E[\|Z-Z'\|^p]^{1/p}\\
    &= \E[\|X-Z\|^p]^{1/p} + \E[\|Z-f(Z)\|^p]^{1/p}\\
    &\leq \Wp(\mu,\nu) + \eps + \E[\|Z-Y\|^p]^{1/p}\\
    &\leq \Wp(\mu,\nu) + 2\eps.
\end{align*}
Similarly, we have
\begin{align*}
    \Wp(\kappa'_\sharp\mu,\nu)&\leq\E[\|Z'-Y\|^p]^{1/p}\\
    &\leq\E[\|Z-Y\|^p]^{1/p} + \E[\|Z-Z'\|^p]^{1/p}\\
    &\leq\eps + \eps = 2\eps.
\end{align*}
Thus, the sum of these two errors is at most $4\eps$, as desired.
\end{proof}

Now, fix target measure $\nu = \frac{1}{2}\delta_{-y} + \frac{1}{2}\delta_y$, where $y = (1,\dots,1)$, and take $c \in [0,1]$ to be tuned later. Then, for each $0 \leq t \leq 1/2$, define measure $\mu_t = (1/2 - t)\delta_{-cy} + (1/2 + t)\delta_{+cy}$. Now, fix any kernel $\kappa \in \cK(\cX,\{\pm y\})$, where the codomain restriction is without loss of generality due to \cref{lem:codomain-restriction}. Note that its performance on each $\mu_t$ is determined by the two-point distributions $\kappa_{\pm} \defeq \kappa_{\pm cy} = (1-\alpha_{\pm}) \delta_{-cy} + \alpha_{\pm} \delta_{cy}$. In particular, for $0 \leq t < 1/2$, we compute
\begin{align*}
    \Wp(\mu_t,\nu)^p &= \left(\tfrac{1}{2}-t\right)(1-c)^p\|y\|^p + t (1+c)^p\|y\|^p + \tfrac{1}{2}(1-c)^p\|y\|^p\\
    &= d^\frac{p}{2} \left[(1-t)(1-c)^p + t (1+c)^p\right],\\
    \Wp(\kappa_\sharp \mu_t,\nu)^p &= \Wp\left(\left(\tfrac{1}{2}-t\right)\kappa_- + \left(\tfrac{1}{2}+t\right) \kappa_+,\nu\right)\\
    &= \|y - (-y)\|^p \cdot \Wp\left(\left(\tfrac{1}{2}-t\right)\Ber(\alpha_-) + \left(\tfrac{1}{2}+t\right)\Ber(\alpha_+), \Ber\left(\tfrac{1}{2}\right)\right)^p\\
    &= (2d)^{\frac{p}{2}}\,\Wp\left(\Ber\left(\left(\tfrac{1}{2}-t\right)\alpha_- + \left(\tfrac{1}{2}+t\right)\alpha_+\right), \Ber\left(\tfrac{1}{2}\right)\right)^p\\
    &= (2d)^{\frac{p}{2}}\,\bigl|\left(\tfrac{1}{2}-t\right)\alpha_- + \left(\tfrac{1}{2}+t\right)\alpha_+ - \tfrac{1}{2}\bigr|,\\
    \Wp(\delta_{cy},\kappa_{+})^p
    &= \alpha_+ (1-c)^p \|y\|^p + (1-\alpha_+)(1+c)^p\|y\|^p\\
    &= d^\frac{p}{2} \left(\alpha_+(1-c)^p + (1-\alpha_+)(1+c)^p\right),\\
    \Wp(\delta_{-cy},\kappa_{-})^p   &= \alpha_- (1+c)^p \|y\|^p + (1-\alpha_-)(1-c)^p\|y\|^p\\
    &= d^\frac{p}{2} \left(\alpha_-(1+c)^p + (1-\alpha_+)(1-c)^p\right),\\
    \iint \|y-x\|^p \dd \kappa_x(y) \dd \mu_t(x) &= \left(\tfrac{1}{2}-t\right)\Wp(\delta_{-cy},\kappa_{-})^p + \left(\tfrac{1}{2}+t\right)\Wp(\delta_{cy},\kappa_+)^p\\
    &= d^\frac{p}{2}\bigl[\left(\tfrac{1}{2}-t\right) \left(\alpha_-(1+c)^p + (1-\alpha_+)(1-c)^p\right) \\
    &\hspace{10mm} + \left(\tfrac{1}{2}+t\right) \left(\alpha_+(1-c)^p + (1-\alpha_+)(1+c)^p\right)\bigr].
\end{align*}

Writing $\Delta = \alpha_- - \alpha_+$, we next bound
\begin{align*}
    &\invisequals \Wp(\kappa_\sharp \mu_t,\nu) + \Wp(\kappa_\sharp \mu_0,\nu)\\
    &= \sqrt{2d}\,\bigl|\left(\tfrac{1}{2}-t\right)\alpha_- + \left(\tfrac{1}{2}+t\right)\alpha_+ - \tfrac{1}{2}\bigr|^\frac{1}{p} + \sqrt{2d}\,\bigl|\tfrac{1}{2}\alpha_- + \tfrac{1}{2}\alpha_+ - \tfrac{1}{2}\bigr|^\frac{1}{p}\\
    &\geq \sqrt{2d}\, t^{\frac{1}{p}}\left|\alpha_+ - \alpha_-\right|^{\frac{1}{p}} \tag{subadditivity of $a \mapsto a^{1/p}$}\\
    &= \sqrt{2d} \,t^{\frac{1}{p}}|\Delta|^{\frac{1}{p}}
\end{align*}
and we simplify
\begin{align*}
    \iint \|y-x\|^p \dd \kappa_x(y) \dd \mu_0(x) &= d^\frac{p}{2}\left(\frac{1 + \alpha_- - \alpha_+}{2}(1+c)^p + \frac{1-\alpha_- + \alpha_+}{2}(1-c)^p\right)\\
    &= d^\frac{p}{2}\left(\frac{1 + \Delta}{2}(1+c)^p + \frac{1-\Delta}{2}(1-c)^p\right)\\
    \Wp(\mu_0,\nu) &= \sqrt{d} \,(1-c).
\end{align*}
Thus, we further bound
\begin{align*}
    &\left[ \left(\iint \|y-x\|^p \dd \kappa_x(y) \dd \mu_0(x)\right)^{\frac{1}{p}} - \Wp(\mu_0,\nu)\right]_+\\
    =\,& \sqrt{d} \left[\left(\frac{1 + \Delta}{2}(1+c)^p + \frac{1-\Delta}{2}(1-c)^p\right)^{\frac{1}{p}} - 1+c\right]_+\\
    =\,& \sqrt{d}\left[\left(\frac{1 + \Delta}{2}(1+c)^p + \frac{1-\Delta}{2}(1-c)^p\right)^{\frac{1}{p}} - 1+c\right].
\end{align*}
Combining, this gives
\begin{align*}
    &\invisequals \cE_p(\kappa;\mu_t,\nu) + \cE_p(\kappa;\mu_0,\nu) \\
    &\geq \left[\sqrt{d}\left(\frac{1 + \Delta}{2}(1+c)^p + \frac{1-\Delta}{2}(1-c)^p\right)^{\frac{1}{p}} - 1+c + \sqrt{2d}\,t^{\frac{1}{p}}|\Delta|^{\frac{1}{p}}\right]\\
    &\geq \left[\sqrt{d}\left(\frac{1 + \Delta}{2}(1+c) + \frac{1-\Delta}{2}(1-c)\right) - 1+c + \sqrt{2d}\,t^{\frac{1}{p}}|\Delta|^{\frac{1}{p}}\right]\\
    &= \left[\sqrt{d}c(\Delta+1) + \sqrt{2d}\,t^{\frac{1}{p}}|\Delta|^{1/p}\right]\\
    &\geq \sqrt{d}\left[c(1-|\Delta|) + t^{1/p}|\Delta|\right]\\
    &\geq \sqrt{d} \min\{c/2,t^{1/p}/2\}
\end{align*}
Now, supposing that $\rho < \sqrt{d}$, we can safely take $c = t^{1/p} = \rho^{1/2}d^{-1/4}/2$ while ensuring that $c \in [0,1]$ and $t \in [0,1/2]$, which were the only constraints on our construction. Otherwise, we take $c = t^{1/p} = 1/2$. In either case, we have $\Wp(\mu_0,\mu_t) = t^{1/p} \cdot 2c \sqrt{d} = (\rho \land \sqrt{d})/2 \leq \rho$. Thus, the observation $\tilde{\mu} = \mu_0$ is compatible with both $\mu = \mu_0$ and $\mu_t$ under our corruption model. This gives the desired minimax lower bound of $\Omega(\sqrt{d} c \land t^{1/p}) = \Omega(d^{1/4} \rho^{1/2} \land \sqrt{d})$.

\subsection{Efficient Computation}

We now introduce \cref{alg:kernel-computation} to achieve efficient computation, focusing on $p=1$ where we match the rate of \cref{thm:robust-estimation}. Here, we identify finite sets with their uniform distributions when convenient.

\begin{algorithm2e}[t]
\caption{Randomized Rounding for Efficient OT Kernel Estimation}
\label{alg:kernel-computation}
\KwIn{$n$ corrupted source points $S \subseteq \R^d$ and target points $T \subseteq [0,1]^d$, budgets $\rho \geq 0$, $\eps \in [0,1]$}
\begin{algorithmic}[1]
\State $m \gets n^2$, $\tau \gets n^{-1/(d+2)}$, $\sigma \gets 3^{d/(2+d)}(nd)^{-1/(d+2)} + \rho^{1/2} d^{-1/4}$
\State $S' \gets  \{\proj_S(X'_i +  Z_i)\}_{i=1}^m$, where each $X'_i \!\sim\! S$ and $Z_i \!\sim\! \cN_\sigma$ are sampled independently
\State Compute kernel $\bar{\kappa} \in \cK(S', T)$ s.t. $\bar{\kappa}_\sharp \Unif(S') = \Unif(T)$ \vspace{-2mm}
\begin{equation*}
    \frac{1}{m} \sum_{x \in S'} \int \|x-y\| \dd \bar{\kappa}(y|x) \leq \Wone(S',T) + \tau\vspace{-2mm}
\end{equation*}
\State Return $\hat{\kappa} \in \cK(\R^d, T)$ defined by $\hat{\kappa} = \bar{\kappa} \circ \proj_S \circ \,N^\sigma$
\end{algorithmic}
\end{algorithm2e}

\begin{theorem}[Efficient implementation]
\label{thm:efficient-alg}
Under the setting of \cref{sec:robust-estimation} with $p=1$, the kernel $\hat{\kappa}$ returned by \cref{alg:kernel-computation} matches the risk bound of \cref{thm:robust-estimation}.  Using an entropic OT solver for Step 3, \cref{alg:kernel-computation} runs in time $O((C_\infty + d) n^{2 + o_d(1)})$, where $C_\infty = \max_{i,j}\|\tilde{X}_i - \tilde{Y}_j\|$. Moreover, $\hat{\kappa}$ can be evaluated (i.e., given $x \in \cX$ we can sample $Y \sim \hat{\kappa}_x$) in time $O(nd)$.
\end{theorem}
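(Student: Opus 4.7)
The plan is to interpret Algorithm~1 as an efficient discretization of the convolutional estimator $\hat{\kappa}_{\mathrm{conv}}^\sigma$ analyzed in \cref{thm:robust-estimation}: the inner kernel $N^\sigma$ is augmented with a projection $\proj_S$ onto the corrupted source sample, and the inner OT problem is solved approximately by Monte Carlo sampling followed by an EOT call. I would begin the risk analysis by peeling the composition $\hat\kappa = \bar\kappa \circ (\proj_S \circ N^\sigma)$ via \cref{lem:kernel-composition} applied twice, yielding
\begin{equation*}
    \cE_1(\hat\kappa;\mu,\nu) \leq \cE_1\bigl(\bar\kappa;\,(\proj_S)_\sharp N^\sigma_\sharp \mu,\,\nu\bigr) + 2\|N^\sigma - \Id\|_{L^1(\mu)} + 2\|\proj_S - \Id\|_{L^1(N^\sigma_\sharp \mu)}.
\end{equation*}
The first composition residual is at most $\sigma\sqrt{d}$ since $\E_{Z \sim \cN(0,\sigma^2 I_d)}\|Z\| \leq \sigma\sqrt{d}$. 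For the second, I would use that $\proj_S$ outputs the nearest point in $S$: coupling $\mu$ first with the clean empirical $\hat\mu_n$ and then with the inlier portion of $\tilde\mu_n$ along the adversary's $\Wone$-budget produces $\|\proj_S - \Id\|_{L^1(N^\sigma_\sharp \mu)} \leq \Wone(\mu,\hat\mu_n) + \rho + \sigma\sqrt{d}$, up to an $\eps$-correction absorbed into the $\sqrt{d}\eps$ term later.

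Next, I would compare $\cE_1(\bar\kappa;(\proj_S)_\sharp N^\sigma_\sharp \mu,\nu)$ to the accuracy-bounded quantity $\cE_1(\bar\kappa;\mu_n',\tilde\nu_n) \leq \tau$, the latter holding by construction of $\bar\kappa$. The target substitution via \cref{lem:nu-stability} costs $\Wone(\nu,\tilde\nu_n) \leq \rho + \sqrt{d}\eps + \E\Wone(\nu,\hat\nu_n)$. The source substitution via \cref{lem:TV-stability} requires bounding the TV gap $\|(\proj_S)_\sharp N^\sigma_\sharp \mu - \mu_n'\|_\tv$, which I would decompose into (i) a Monte Carlo error $O(\sqrt{n/m}) = O(n^{-1/2})$ from \cref{fact:TV-empirical-convergence}, using $S' \subseteq S$ with $|S| \leq n$ and $m = n^2$; (ii) the smoothed empirical gap $\|N^\sigma_\sharp(\hat\mu_n - \mu)\|_\tv \lesssim (3^d \sigma^{-d}/n)^{1/2}$ from \cref{lem:smooth-TV-convergence}; and (iii) the conversion of the $\Wone$-budget-$\rho$ corruption between $\hat\mu_n$ and $\tilde\mu_n$ into TV through $N^\sigma$, handled as inside the proof of \cref{thm:robust-estimation} via Pinsker on translated Gaussians, contributing $O(\eps + \rho/\sigma)$. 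Plugging in $\sigma \asymp (nd)^{-1/(d+2)} + \rho^{1/2} d^{-1/4}$ and $\tau = n^{-1/(d+2)}$ and applying \cref{lem:Wp-empirical-convergence} to the $\Wone(\mu,\hat\mu_n),\Wone(\nu,\hat\nu_n)$ terms recovers the risk bound of \cref{thm:robust-estimation}.

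For the computational analysis, Step~2 draws $m = n^2$ Gaussian vectors in $\R^d$ and performs $n^2$ nearest-neighbor queries against $S$ (of size $n$), for a total of $O(dn^{2+o(1)})$ time with standard spatial indexing; because $S' \subseteq S$, the output $\mu_n'$ is represented as a weighted measure on at most $n$ atoms. Step~3 is then an EOT problem between two $n$-atom measures at accuracy $\tau = n^{-1/(d+2)}$, which by the solver of \cite{luo2023improved} runs in $O(C_\infty n^2/\tau) = O(C_\infty n^{2+o_d(1)})$ time after $O(dn^2)$ cost-matrix construction, giving the overall $O((C_\infty + d) n^{2+o_d(1)})$ bound. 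For evaluation of $\hat\kappa$ at a query $x$, drawing $Z$ takes $O(d)$ time, computing $\proj_S(x+Z)$ takes $O(nd)$, and sampling $Y \sim \bar\kappa(\cdot\mid \proj_S(x+Z))$ takes $O(n)$ via an alias table built once up front.

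The main technical obstacle lies in step (iii) of the source TV decomposition: trading the $\Wone$-budget $\rho$ between $\hat\mu_n$ and $\tilde\mu_n$ for a usable TV budget that \cref{lem:TV-stability} can consume. This is only possible after Gaussian smoothing via KL/Pinsker on translated Gaussians, and is precisely why the algorithm sets $\sigma \gtrsim \rho^{1/2} d^{-1/4}$ in addition to its sampling-optimal component. Adapting this maneuver to our setting, where the smoothing is immediately followed by the discrete projection $\proj_S$, is the step requiring the most care; once it is in hand, the rest is a routine application of the stability toolkit.
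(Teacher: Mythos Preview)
Your plan is sound and reaches the same risk bound, but it follows a different route from the paper in two respects worth noting.

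First, you peel the composition starting from the clean source $\mu$ and push toward the empirical object on which $\bar\kappa$ was trained; the paper does the reverse, starting from $\cE_1(\bar\kappa;\beta_m,\tilde\nu_n)\leq\tau$ and working outward through $\beta=(\proj_S)_\sharp N^\sigma_\sharp\tilde\mu_n$, then $N^\sigma_\sharp\tilde\mu_n$, then $N^\sigma_\sharp\mu'_n$, and so on. The paper's direction yields the projection residual cleanly as $\|\proj_S-\Id\|_{L^1(N^\sigma_\sharp\tilde\mu_n)}\lesssim\sigma\sqrt{d}$ because $S=\supp(\tilde\mu_n)$. Your direction forces you to control $\|\proj_S-\Id\|_{L^1(N^\sigma_\sharp\mu)}$, which needs the extra chain through $\hat\mu_n$ and the inliers of $\tilde\mu_n$. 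This works, but the ``$\eps$-correction'' you mention is really $\eps\cdot\diam(\supp(\hat\mu_n))=O(\eps\sqrt{\log n})$, not $\sqrt{d}\eps$; harmless for the final rate but worth stating correctly.

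Second, and more substantively, you handle the $\Wone$-budget $\rho$ by converting it to a TV budget $O(\rho/\sigma)$ via Pinsker on translated Gaussians, then invoke only TV stability throughout. The paper instead splits $N^\sigma=N^{\sigma/2}\circ N^{\sigma/2}$, absorbs one factor into the kernel so that \cref{lem:kernel-smoothing} makes $\bar\kappa\circ\proj_S\circ N^{\sigma/2}$ Lipschitz in $\Wone$, and then applies $\Wp$-stability (\cref{lem:Wp-stability}) directly to the step $N^{\sigma/2}_\sharp\mu'_n\to N^{\sigma/2}_\sharp\hat\mu_n$. Both routes produce the same $\sqrt{d}\rho/\sigma$ contribution, so either is fine; however, your remark that this is ``handled as inside the proof of \cref{thm:robust-estimation}'' is a misattribution, since that proof also uses the $N^{\sigma_1}/N^{\sigma_2}$ split and $\Wp$-stability rather than a Pinsker-based TV conversion. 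Your TV-only argument is a legitimate alternative and arguably more elementary, since it avoids the split and the Lipschitz lemma entirely.
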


The proof below employs a similar analysis to that of \cref{thm:robust-estimation}, with multiple applications of \cref{lem:TV-stability} to account for various sampling errors along with TV contamination. We restrict to $p=1$ due to the worsened scaling of \cref{lem:TV-stability} for $p > 1$.\smallskip

\begin{proof}
Set $\alpha = N^\sigma_\sharp \tilde{\mu}_n$, $\beta = \proj_S \alpha$, and $\beta_m = \Unif(S')$. By construction, $S'$ is sampled i.i.d.\ from $\beta$, so Fact~\ref{fact:TV-empirical-convergence} gives that $\E[\|\beta - \beta_m\|_\tv] = \E[\E[\|\beta - \beta_m\|_\tv | S']] \lesssim \sqrt{n/m}$. Moreover,
\begin{align*}
    \int \|\proj_S(x) - x\| \dd \alpha(x) &= \frac{1}{n} \sum_{x \in S} \int \|\proj_S(x + z) - x + z\| \dd N^\sigma(z)\\
    &\leq \frac{1}{n} \sum_{x \in S} \int \|x - x + z\| \dd N^\sigma(z) \tag{$x \in S$}\\
    &= \int \|z\| \dd N^\sigma(z)\\
    &\lesssim \sqrt{d}\,\sigma.
\end{align*}
Now, we restate our guarantee for $\bar{\kappa}$; namely, we have:
\begin{equation*}
   \iint \|x-y\| \dd \bar{\kappa}(y|x) \dd \beta_m(x) \leq \Wone(\beta_m,\tilde{\nu}_n) + \tau.
\end{equation*}
Thus, by \cref{lem:TV-stability-refined}, we have
\begin{equation*}
    \cE_1(\bar{\kappa};\beta,\tilde{\nu}_n) \lesssim \tau + \sqrt{d}\, \|\beta - \beta_m\|_\tv,
\end{equation*}
and, applying \cref{lem:kernel-composition}, we obtain
\begin{equation*}
    \cE_1(\bar{\kappa} \circ \proj_S;\alpha,\tilde{\nu}_n) \lesssim \tau + \sqrt{d}\, \|\beta - \beta_m\|_\tv + \sqrt{d}\,\sigma.
\end{equation*}
Now, write $\mu'_n \in \cP(\R^d)$ for an intermediate measure such that $\|\mu'_n- \tilde{\mu}_n\|_\tv \leq \eps$ and $\Wp(\mu'_n,\hat{\mu}_n) \leq \rho$. Noting that $\alpha = N^\sigma_\sharp \tilde{\mu}_n$, we have by Fact~\ref{fact:TV-contraction} that $\|\alpha - N^\sigma_\sharp \mu'_n\|_\tv \leq \eps$. Thus, \cref{lem:TV-stability} gives
\begin{equation*}
    \cE_1(\bar{\kappa} \circ \proj_S;N^\sigma_\sharp \mu'_n,\tilde{\nu}_n) \lesssim \sqrt{d} \eps + \tau + \sqrt{d}\,\|\beta - \beta_m\|_\tv + \sqrt{d}\,\sigma.
\end{equation*}
Applying \cref{lem:kernel-composition} once more, we obtain
\begin{equation*}
    \cE_1(\bar{\kappa} \circ \proj_S \circ N^{\sigma/2}; N^{\sigma/2}_\sharp \mu'_n,\tilde{\nu}_n) \lesssim \sqrt{d} \eps + \tau + \sqrt{d}\, \|\beta - \beta_m\|_\tv + \sqrt{d}\,\sigma.
\end{equation*}
By \cref{lem:kernel-smoothing}, the fact that this latest kernel begins with the convolution $N^{\sigma/2}$ ensures that it is $O(\sqrt{d}\sigma^{-1})$-Lipschitz w.r.t.\ $\Wone$. Moreover, by Fact~\ref{fact:Wp-convolution-contraction}, we have $\Wone(N^{\sigma/2}_\sharp \mu'_n, N^{\sigma/2}_\sharp \hat{\mu}_n) \leq \Wone(\mu'_n,\hat{\mu}_n) \leq \rho$. Thus, \cref{lem:Wp-stability} gives
\begin{equation*}
     \cE_1(\bar{\kappa} \circ \proj_S \circ N^{\sigma/2}; N^{\sigma/2}_\sharp \hat{\mu}_n,\tilde{\nu}_n) \lesssim \sqrt{d} \eps + \tau + \sqrt{d}\,\|\beta - \beta_m\|_\tv + \sqrt{d}\,\sigma + \rho + \frac{\sqrt{d}\,\rho}{\sigma}.
\end{equation*}
Next, we apply \cref{lem:TV-stability} and \cref{lem:kernel-composition} to bound
\begin{align*}
    &\cE_1(\bar{\kappa} \circ \proj_S \circ N^{\sigma}; \mu,\tilde{\nu}_n)\\
    \lesssim \, & \cE_1(\bar{\kappa} \circ \proj_S \circ N^{\sigma/2}; N^{\sigma/2}_\sharp \mu,\tilde{\nu}_n) + \sqrt{d}\,\sigma\\
    \lesssim \, & \cE_1(\bar{\kappa} \circ \proj_S \circ N^{\sigma/2}; N^{\sigma/2}_\sharp \hat{\mu}_n,\tilde{\nu}_n) + \sqrt{d}\,\sigma + \sqrt{d}\,\|N^{\sigma/2}(\mu - \hat{\mu}_n)\|_\tv\\
    \lesssim\, & \sqrt{d}\,\eps + \tau  + \sqrt{d}\,\sigma + \rho + \frac{\sqrt{d}\,\rho}{\sigma} + \sqrt{d}\,\|\beta - \beta_m\|_\tv + \sqrt{d}\,\|N^{\sigma/2}(\mu - \hat{\mu}_n)\|_\tv.
\end{align*}
Finally, we correct the target measure, using \cref{lem:nu-stability} to bound
\begin{align*}
    &\invisequals \cE_1(\bar{\kappa} \circ \proj_S \circ N^{\sigma}; \mu,\nu) \leq \cE_1(\bar{\kappa} \circ \proj_S \circ N^{\sigma}; \mu,\tilde{\nu}_n) + 2\Wp(\tilde{\nu}_n,\nu)\\
    &\lesssim \sqrt{d}\,\eps + \tau  + \sqrt{d}\,\sigma + \rho + \frac{\sqrt{d}\,\rho}{\sigma} + \sqrt{d}\,\|\beta - \beta_m\|_\tv + \sqrt{d}\,\|N^{\sigma/2}(\mu - \hat{\mu}_n)\|_\tv + \Wp(\hat{\nu}_n,\nu)
\end{align*}
Taking expectations, using our early bound on the first TV distance, and applying \cref{lem:smooth-TV-convergence} for the second TV distance, and applying \cref{lem:Wp-empirical-convergence} for the Wasserstein distance, we obtain
\begin{align*}
    &\E[\cE_1(\bar{\kappa} \circ \proj_S \circ N^{\sigma}; \mu,\nu)]\\
    \lesssim\, & \sqrt{d}\,\eps + \tau  + \sqrt{d}\,\sigma + \rho + \frac{\sqrt{d}\,\rho}{\sigma} + \sqrt{\frac{dn}{m}} + \sqrt{d \, 3^d(1 \lor \sigma^{-d})/n} + c_{p,d} n^{-\frac{1}{p \lor 2d}} \log^2 n
\end{align*}
Our choice of $\sigma$, $m$, and $\tau$ ensure that the desired risk bound holds.\medskip

Computational complexity is dominated by the OT computation at Step 3. The source and target distributions are both supported on $n$ points, and we require accuracy $\tau = n^{-1/(d+2)}$. Computing the relevant cost matrix requires time $O(n^2 d)$. Using a state of the art OT solver based on entropic OT (e.g., \citealp{luo2023improved}) gives a running time of $O(C_\infty n^2/\tau) = O(C_\infty n^{2 + 1/(d+2)})$, where $C_\infty$ is the largest distance between a point in $S$ and a point in $T$. Combining these two gives the first bound. Evaluation complexity is dominated by the projection step, which can be computed in a brute-force manner using $O(nd)$ time.
\end{proof}

\section{Additional Experiments}
\label{app:experiments}

All code needed to reproduce our experiments and figures is available at \url{https://github.com/sbnietert/map-estimation}. Here, we include two additional experiments beyond those in the main body, one with higher dimensions and sample sizes, and one in dimension two, for visualization.

\begin{figure}[t]
    \centering
    \includegraphics[width=0.9\textwidth]{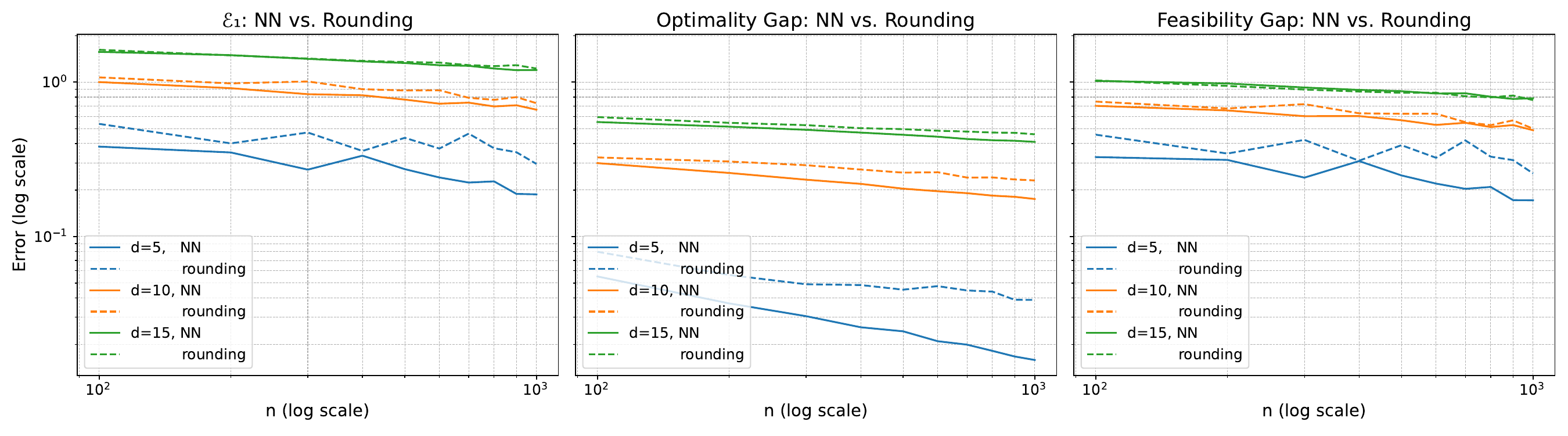}
    \caption{$\cE_1$ (left), optimality gap (middle), and feasibility gap (right) performance of nearest-neighbor and rounding estimators for Setting A.}
    \label{fig:experiments-extra}
\end{figure}

First, in \cref{fig:experiments}, we repeat the Setting A experiments from \cref{sec:experiments} (\cref{fig:experiments}, middle), but with $N = 10000$, dimensions $d \in \{5,10,15\}$, and sample sizes $n \in \{100, 200, \dots, 1000\}$. To extend to these larger parameters, we reduced the number of iterations to $T = 5$ and omitted the bootstrapped error bars. Also, we include the decomposition of $\cE_1$ into its optimality gap and feasibility gap components, the latter of which is measurably larger. As predicted by our analysis, our error rates worsen with dimension.

Finally, in \cref{fig:checkerboard}, we provide a visual depiction of the rounding estimator on a toy checkerboard dataset. In the top left, we present our source measure (orange) and target measure (green). For the top right plot, we sampled $n = 100$ source and target samples, rounded the source samples onto a regular grid with side length $\delta = n^{-1/(d+2)}$ (orange), and computed an OT plan (light blue) from the rounded source samples to the target samples (green). For the bottom left, we rounded the full source distribution onto the same grid (orange), route these according to the same OT plan (light blue), reaching a destination measure (red) that approximates the target distribution (green).

\begin{figure}[t]
    \centering
    \includegraphics[width=0.9\textwidth]{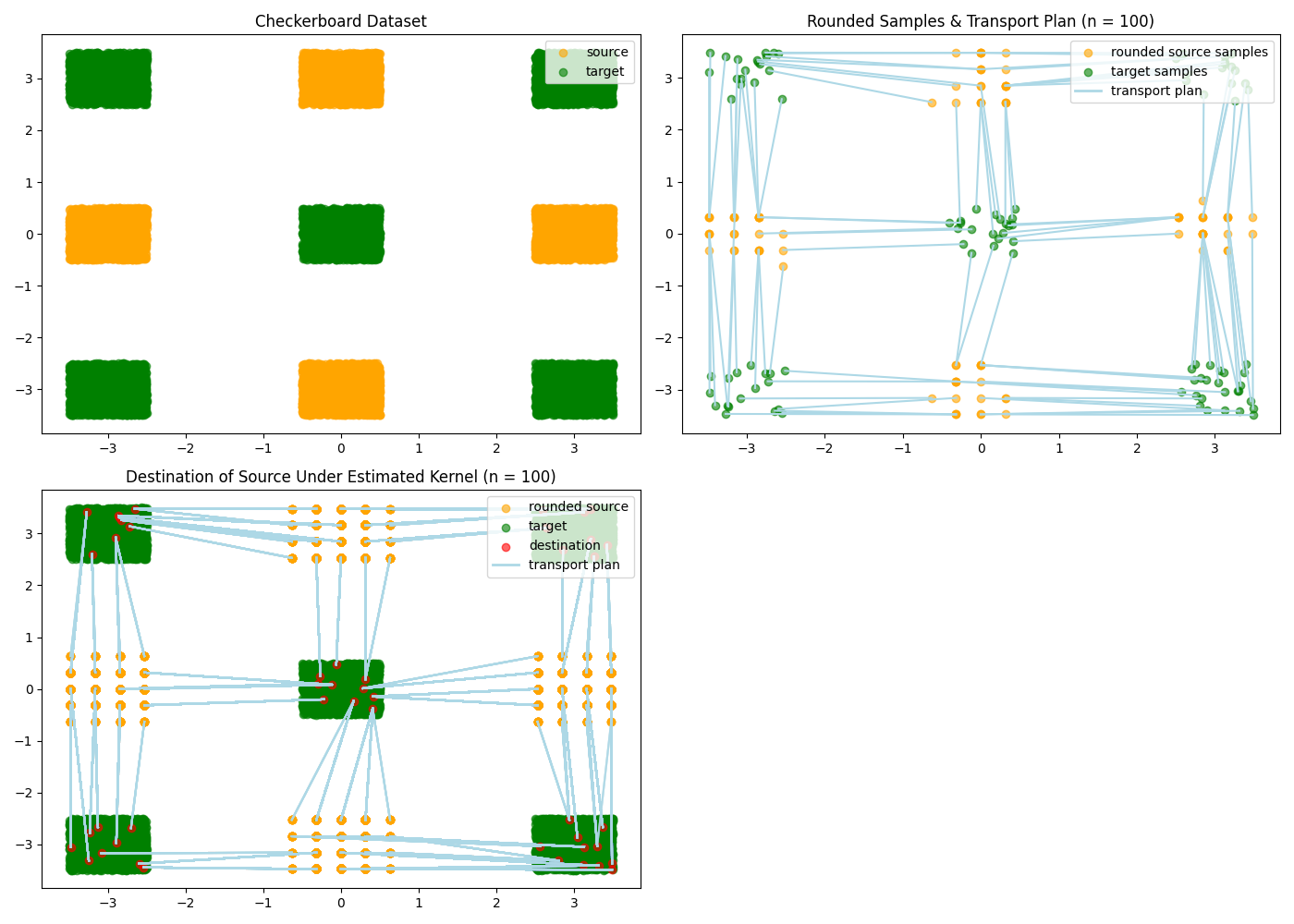}
    \caption{Visual depiction of rounding kernel estimator on checkerboard dataset.}
    \label{fig:checkerboard}
\end{figure}

\end{document}